\documentclass{article}

\usepackage{iftex}
\ifPDFTeX
\else
  \ifLuaTeX
  \else
    \ifXeTeX
    \else
      \PassOptionsToPackage{dvipdfmx}{graphicx}
      \PassOptionsToPackage{dvipdfmx}{xcolor}
      \PassOptionsToPackage{dvipdfmx}{hyperref}
    \fi
  \fi
\fi

\usepackage{microtype}
\usepackage{graphicx}
\usepackage{subcaption}
\usepackage{booktabs}
\usepackage{hyperref}
\usepackage{amsthm,amsmath,amssymb,amsfonts}
\usepackage{algorithmic}
\usepackage{textcomp}
\usepackage{xcolor}
\usepackage{mathtools,mathrsfs}
\usepackage{bm}
\usepackage{enumitem}
\usepackage{dsfont}
\usepackage[capitalize,noabbrev]{cleveref}

\usepackage[accepted]{icml2026}

\newcommand{\R}{\mathbb{R}}

\DeclareMathOperator{\Hom}{Hom}

\DeclareMathOperator{\Aut}{Aut}
\DeclareMathOperator{\Stab}{Stab}
\DeclareMathOperator{\GL}{GL}

\newcommand{\id}{\mathrm{id}}
\newcommand{\calO}{\mathcal O}
\newcommand{\calQ}{\mathcal Q}

\theoremstyle{plain}
\newtheorem{theorem}{Theorem}[section]
\newtheorem{proposition}[theorem]{Proposition}
\newtheorem{lemma}[theorem]{Lemma}
\newtheorem{corollary}[theorem]{Corollary}
\theoremstyle{definition}
\newtheorem{definition}[theorem]{Definition}

\theoremstyle{remark}
\newtheorem{remark}[theorem]{Remark}

\usepackage[disable,textsize=tiny]{todonotes}

\icmltitlerunning{Foundations of Equivariant Deep Learning}

\begin{document}

\twocolumn[
  \icmltitle{Foundations of Equivariant Deep Learning:\\ Unifying Graph and Sheaf Neural Networks}

  \icmlsetsymbol{equal}{*}

  \begin{icmlauthorlist}
    \icmlauthor{Yoshihiro Maruyama}{kyoto}
  \end{icmlauthorlist}

  \icmlaffiliation{kyoto}{Department of Mathematical and Information Sciences, Kyoto University, Kyoto, Japan}
  
  \icmlcorrespondingauthor{Yoshihiro Maruyama}{maruyama@i.h.kyoto-u.ac.jp}

  \icmlkeywords{Geometric Deep Learning, Topological Deep Learning, Categorical Deep Learning, Equivariant Universal Approximation, Equivariant Vector Bundle, Sheaf Neural Net, Category-Equivariant Neural Network, Categorical Equivariant Deep Learning, Categorical Symmetry}

  \vskip 0.3in
]

\printAffiliationsAndNotice{}

\begin{abstract}
Symmetry is everywhere in nature and society. Geometric deep learning builds architectures respecting group symmetries, whereas topological deep learning organizes computation through cells, incidence relations, and local-to-global structure. In this paper, we extend geometric deep learning beyond simple group actions and unify it with topological deep learning. Specifically, we develop order-equivariant neural networks (OENN), which generalize standard graph message passing and sheaf neural networks via the theory of equivariant vector bundles over face posets (or face categories). We (i) characterize all linear order-equivariant maps, (ii) build OENN layers, and (iii) prove universal approximation theorems (UATs) for continuous order-equivariant maps, which are new results even when restricted to sheaf neural networks. We illustrate the framework on graph and sheaf models. Our results can also be seen as extending the known UAT for graph neural networks to a more general setting that subsumes sheaf neural networks as well. In the appendix, we show that OENN can be connected, via the action groupoid Grothendieck construction, to CENN (category-equivariant neural network), which gives the categorical general form of equivariant neural networks, allowing us to leverage categorical symmetry in data (e.g., non-invertible symmetries on multiple objects with compositional relations on those symmetries).
\end{abstract}

\section{Introduction}
\label{sec:intro}

{\bf Motivation.}
Symmetry has long been recognized as a powerful inductive bias in deep learning.  
Group-equivariant architectures explicitly exploit the symmetries of an index set to share parameters, improve sample efficiency, and ensure consistent behavior under transformations of the input domain~\cite{CohenWelling16,CohenWelling17Steer,WeilerCesa19,Fuchs20,Satorras21}.  
Classical convolutional networks are equivariant to translations, while group-equivariant CNNs generalize this idea to arbitrary compact groups.  
More recently, this principle has been extended to permutation-invariant and permutation-equivariant models such as DeepSets~\cite{Zaheer17} and graph neural networks GNNs~\cite{Gilmer17,Xu19,Maron19}, which achieve equivariance with respect to the automorphism group of a graph.  
However, many data domains are not merely sets or graphs but possess richer hierarchical or incidence structures.  
Examples include face-posets of graphs, simplicial or CW complexes, and general partially ordered collections of cells or regions.  
Such domains are naturally indexed by posets, and their symmetries form automorphism groups $G=\Aut(P,\le)$ that may involve both permutation and structural (non-invertible or higher-order) relations.  
This observation motivates the development of \emph{order-equivariant neural networks} (OENN) that respect the combinatorial structure of a poset rather than a pure group action.

{\bf Background and context.}
The study of equivariance has unified diverse neural architectures under the umbrella of geometric deep learning~\cite{Bronstein21,Shuman13,Bruna14,Defferrard16}.
Group- and permutation-equivariant networks~\cite{CohenWelling16,Zaheer17,Gilmer17,Xu19,Maron19} rely on the algebraic structure of groups acting on index sets, while higher-order analogs exploit topological constructions such as simplicial, cellular, or combinatorial complexes~\cite{Ebli21,Bodnar22,HansenGebhart20,Barbero22}.  
This includes combinatorial-complex neural networks for topological deep learning~\cite{Hajij22}; in the equivariant line, $E(n)$-equivariant topological neural networks incorporate Euclidean symmetry into topological message passing~\cite{Battiloro24}.
Sheaf theory~\cite{Curry14} provides a general language for modeling local-to-global dependencies on such structured domains, and recent work on neural sheaf architectures~\cite{Hajij25,Bamberger24,Bodnar22,Barbero22,HansenGebhart20} shows that combining algebraic-topological tools with deep learning yields strong theoretical and empirical advantages.  
Our work generalizes these frameworks further: from group- or permutation-equivariant settings to order-equivariant ones, thereby covering both classical GNNs and sheaf-based models as special cases.

{\bf Contributions.}
We make three contributions:
\begin{itemize}
  \item[(1)] We give a unified definition of order-equivariant maps between poset-indexed feature bundles and provide a complete orbit-wise characterization of all linear OENN layers via the transporter law and stabilizer intertwiners. This generalizes the block-tying rules of permutation-equivariant networks~\cite{Maron19} to arbitrary posets.
  \item[(2)] We turn this linear theory into a nonlinear architecture.  OENN layers combine orbital affine maps, equivariant biases, pointwise Reynolds blocks, and pair-orbit aggregation, so nonlinearities remain equivariant even when stabilizers act nontrivially on fibers.  This yields a hierarchy containing ordinary relation-message-passing OENNs, source-labeled pair-state OENNs, and full OENNs, and it recovers DeepSets, fixed-graph message passing, vertex-edge incidence updates, and cellular or simplicial sheaf layers as special or further-tied cases~\cite{Zaheer17,Gilmer17,Xu19,Curry14,Bodnar22,Barbero22}.
  \item[(3)] We prove \emph{equivariant universal approximation theorems} (UATs) for OENNs.  The full OENN class is dense in the continuous order-equivariant maps on compact $G$-invariant sets.  We also separate this result from ordinary bounded-depth message passing: standard anonymous aggregation is not universal in general, while pair-state local OENNs compile the global broadcast in directed diameter depth and give a diameter-sharp local universality theorem, with a cover-local corollary for connected Hasse graphs.
\end{itemize}
We show in the appendix that OENN can be connected, via the action groupoid Grothendieck construction, to CENN (category-equivariant neural network), which gives the categorical general form of equivariant neural networks, allowing us to leverage categorical symmetry in data (e.g., non-invertible symmetries on multiple objects with compositional relations on those symmetries).

{\bf Relation to prior work.}
Our results can be viewed as a natural extension of the known UATs for permutation- and group-equivariant architectures~\cite{Zaheer17,Maron19,Xu19,CohenWelling16,Satorras21}, bringing them under a single formulation that also
subsumes sheaf neural networks~\cite{Curry14,HansenGebhart20,Barbero22,Bodnar22}.  
Combinatorial-complex neural networks assign features to cells of a combinatorial complex and update them through incidence, adjacency, or coadjacency relations~\cite{Hajij22}; on a fixed finite complex, such relations are $G$-stable pair relations on the face poset whenever the automorphism group $G\leq\Aut(P,\leq)$ preserves them, so automorphism-equivariant CCNN message-passing layers are instances of relation-message-passing OENNs.
From a geometric perspective, OENN generalizes the principles of geometric deep learning~\cite{Bronstein21} to domains indexed by partially ordered sets, bridging discrete symmetry, incidence structure, and topological learning within one unified formalism.
The framework of categorical equivariant deep learning develops category-equivariant neural networks in general form, providing both equivariant universality theorems and experimental performance gains~\cite{Maruyama25CENN,Maruyama25HARRep,Maruyama25HARArch,Maruyama26CatReg,Maruyama26InfinityHAR,MaruyamaYasuda26Grothendieck,NasuMaruyama26Monoid}. We clarify the precise mathematical relationships between OENN and CENN in the appendix.

{\bf Organization.}
Section~\ref{sec:oenn} introduces the formal setting of order-equivariant bundles
and the construction of OENN layers. Section~\ref{sec:uat} presents 
equivariant universal approximation theorems for various types of OENNs. 
Section~\ref{sec:examples} illustrates the framework on graphs and sheaves,
showing how OENN unifies message-passing and sheaf neural networks.
The paper concludes with a brief discussion of significance and applicability.
The appendix connects OENN with CENN. It also provides all omitted proofs.

\section{Order-Equivariant Neural Networks}
\label{sec:oenn}

\subsection{Posets, actions, and equivariant bundles}

Let $(P,\leq)$ be a finite poset and let $G$ be a finite group acting on $P$ by order automorphisms; equivalently, fix a homomorphism $G\to\Aut(P,\leq)$.  We write the action as $(\gamma,p)\mapsto\gamma p$.  The full automorphism group case is the special case $G=\Aut(P,\leq)$.  All vector spaces are finite-dimensional real vector spaces.

\begin{definition}[Equivariant bundle over a poset]
\label{def:bundle}
A $G$-equivariant vector bundle over $P$ consists of vector spaces $(V_p)_{p\in P}$ and linear isomorphisms
\[
T^V_{\gamma,p}:V_p\to V_{\gamma p}\qquad(\gamma\in G,\ p\in P)
\]
such that $T^V_{\id,p}=\id_{V_p}$ and
\[
T^V_{\gamma\eta,p}=T^V_{\gamma,\eta p}\circ T^V_{\eta,p}
\qquad(\gamma,\eta\in G,
 p\in P).
\]
The total feature space is
\[
X:=\bigoplus_{p\in P}V_p.
\]
The bundle transports induce a representation $\rho_X:G\to\GL(X)$ by
\begin{equation}
\label{eq:rhoX-def}
(\rho_X(\gamma)x)_q=T^V_{\gamma,\gamma^{-1}q}x_{\gamma^{-1}q}.
\end{equation}
A second bundle $(W_q,T^W_{\gamma,q})$ has total space $Y=\bigoplus_{q\in P}W_q$ and induced action $\rho_Y$ by the same formula.  A \emph{permutation-only} bundle is the special case in which all transports are identity maps between canonically identified fibers; then $\rho_X$ and $\rho_Y$ are block permutations.
\end{definition}

{\bf Intuition.}
An element $p\in P$ should be read as a site, cell, or component of the structured domain, and the fiber $V_p$ is the local feature space carried by that site.  A symmetry $\gamma\in G$ relabels sites by $p\mapsto\gamma p$, while the transport $T^V_{\gamma,p}:V_p\to V_{\gamma p}$ specifies how feature coordinates are carried along this relabeling.  The cocycle identity says that transporting along $\eta$ and then along $\gamma$ is the same as transporting along $\gamma\eta$, so the direct sum of all fibers inherits a genuine linear $G$-representation.  The order relation on $P$ is not itself a fiber map; rather, it determines natural $G$-stable relations, masks, and pair-orbits used by the layers below.  Sheaf restriction maps are additional structure on such fibers and are incorporated in Section~\ref{subsec:sheaves}.

\begin{definition}[Order-equivariant map]
\label{def:oe-map}
Let $K\subseteq X$ be $G$-invariant. A continuous map $F:K\to Y$ is \emph{order-equivariant} if
\begin{equation}
\label{eq:equivariance}
F(\rho_X(\gamma)x)=\rho_Y(\gamma)F(x)
\qquad(\gamma\in G,\ x\in K).
\end{equation}
We write $C_G(K,Y)$ for the space of continuous order-equivariant maps $K\to Y$.  When $K=X$, we also write $C_G(X,Y)$.  
\end{definition}

\subsection{Linear and affine order-equivariant maps}

Any linear map $L:X\to Y$ has a block-kernel representation
\begin{equation}
\label{eq:block-kernel}
(Lx)_q=\sum_{p\in P}K(q,p)x_p,
\qquad K(q,p)\in\Hom(V_p,W_q).
\end{equation}

\begin{proposition}[Transporter law]
\label{prop:transporter-law}
A linear map $L:X\to Y$ with kernels $K(q,p)$ is order-equivariant if and only if
\begin{equation}
\label{eq:transporter-law}
K(\gamma q,\gamma p)T^V_{\gamma,p}=T^W_{\gamma,q}K(q,p)
\qquad(\gamma\in G,
 p,q\in P).
\end{equation}
\end{proposition}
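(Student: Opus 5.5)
The plan is to compare the two sides of \eqref{eq:equivariance}, specialized to a linear $L$, block by block. Since $L$ is linear and both $\rho_X(\gamma)$ and $\rho_Y(\gamma)$ are linear, equivariance $L\rho_X(\gamma)=\rho_Y(\gamma)L$ is an identity of linear maps $X\to Y$. It therefore suffices to test it on vectors supported in a single fiber and to read off each output component. Continuity is automatic in finite dimensions, and $K=X$ is $G$-invariant, so no other hypothesis needs checking.

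Fix $\gamma\in G$ and $p\in P$, and let $x\in X$ have a single nonzero component $x_p=v\in V_p$. By \eqref{eq:rhoX-def}, $(\rho_X(\gamma)x)_{q'}$ is nonzero only when $\gamma^{-1}q'=p$, that is, at $q'=\gamma p$, where it equals $T^V_{\gamma,p}v$. Applying \eqref{eq:block-kernel}, the $\gamma q$-component of $L\rho_X(\gamma)x$ is $K(\gamma q,\gamma p)T^V_{\gamma,p}v$. On the other side, $(Lx)_{q}=K(q,p)v$. Using \eqref{eq:rhoX-def} for $\rho_Y$, the $\gamma q$-component of $\rho_Y(\gamma)Lx$ is $T^W_{\gamma,q}(Lx)_q=T^W_{\gamma,q}K(q,p)v$. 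Because $q\mapsto\gamma q$ is a bijection of $P$, letting $q$ range over $P$ covers every output component. So equality on single-fiber vectors is exactly \eqref{eq:transporter-law} for that $p$, for all $q$.

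For the forward direction, equivariance gives the equality on these vectors, hence \eqref{eq:transporter-law} for all $\gamma,p,q,v$. For the converse, assume \eqref{eq:transporter-law}. Then $L\rho_X(\gamma)$ and $\rho_Y(\gamma)L$ agree on each summand $V_p$, and since $X=\bigoplus_p V_p$ and both maps are linear, they agree on all of $X$.

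The only delicate point is index bookkeeping: one must evaluate the $\gamma q$-component rather than the $q$-component, so that the inverse $\gamma^{-1}$ in \eqref{eq:rhoX-def} cancels cleanly. The cocycle identity is not needed beyond guaranteeing that $\rho_X$ and $\rho_Y$ are representations, which the statement already assumes.
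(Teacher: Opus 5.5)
Your proof is correct and is essentially the paper's argument. The paper expands $(L\rho_X(\gamma)x)_q$ and $(\rho_Y(\gamma)Lx)_q$ as sums over source sites, reindexes, compares blocks, and only then replaces $q$ by $\gamma q$; you test on single-fiber vectors and read off the $\gamma q$-component directly, which is the same block comparison with that substitution done at the start.
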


Let $G$ act diagonally on $P\times P$ by $\gamma(q,p)=(\gamma q,\gamma p)$.  For a pair-orbit $\calO\in(P\times P)/G$, fix a representative $(q_{\calO},p_{\calO})$ and set
\[
H_{\calO}:=\{\eta\in G:\eta q_{\calO}=q_{\calO},\ \eta p_{\calO}=p_{\calO}\}.
\]

\begin{proposition}[Orbit parametrization]
\label{prop:orbit-param}
For each pair-orbit $\calO$, choose
\[
A_{\calO}\in\Hom(V_{p_{\calO}},W_{q_{\calO}})
\]
satisfying
\begin{equation}
\label{eq:stabilizer-intertwiner}
T^W_{\eta,q_{\calO}}A_{\calO}=A_{\calO}T^V_{\eta,p_{\calO}}
\qquad(\eta\in H_{\calO}).
\end{equation}
Define
\begin{equation}
\label{eq:orbit-transport}
K(q,p):=T^W_{\gamma,q_{\calO}}A_{\calO}\bigl(T^V_{\gamma,p_{\calO}}\bigr)^{-1}
\end{equation}
whenever $(q,p)=\gamma(q_{\calO},p_{\calO})$.  Then $K$ is well-defined and satisfies the transporter law.  Conversely, every linear order-equivariant map arises uniquely in this way.  Hence
\[
\begin{split}
\dim\Hom_G(X,Y)
&=\sum_{\calO\in(P\times P)/G}\\
&\quad \dim\Hom_{H_{\calO}}(V_{p_{\calO}},W_{q_{\calO}}).
\end{split}
\]
\end{proposition}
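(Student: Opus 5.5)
The plan is to reduce everything to the transporter law of Proposition~\ref{prop:transporter-law} and then argue orbit by orbit. Throughout we use the consequence of the cocycle identity that $T^V_{\gamma\eta,p}=T^V_{\gamma,\eta p}T^V_{\eta,p}$, together with $T^V_{\id,p}=\id$. This gives $(T^V_{\gamma,p})^{-1}=T^V_{\gamma^{-1},\gamma p}$, and the same holds for $T^W$.

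\emph{Well-definedness.} Suppose $(q,p)=\gamma(q_{\calO},p_{\calO})=\gamma'(q_{\calO},p_{\calO})$. Then $\eta:=\gamma^{-1}\gamma'\in H_{\calO}$ and $\gamma'=\gamma\eta$. Since $\eta$ fixes $q_{\calO}$ and $p_{\calO}$, the cocycle identity gives $T^W_{\gamma',q_{\calO}}=T^W_{\gamma,q_{\calO}}T^W_{\eta,q_{\calO}}$ and $T^V_{\gamma',p_{\calO}}=T^V_{\gamma,p_{\calO}}T^V_{\eta,p_{\calO}}$. Substituting into \eqref{eq:orbit-transport}, the middle factor becomes $T^W_{\eta,q_{\calO}}A_{\calO}(T^V_{\eta,p_{\calO}})^{-1}$. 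By \eqref{eq:stabilizer-intertwiner} this equals $A_{\calO}$, so both formulas agree.

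\emph{Transporter law.} Given $(q,p)=\gamma(q_{\calO},p_{\calO})$ and any $\delta\in G$, the pair $(\delta q,\delta p)$ equals $(\delta\gamma)(q_{\calO},p_{\calO})$. Expanding $T_{\delta\gamma,\cdot}=T_{\delta,\gamma\cdot}T_{\gamma,\cdot}$ in both bundles yields
\[
K(\delta q,\delta p)=T^W_{\delta,q}K(q,p)(T^V_{\delta,p})^{-1}.
\]
This is exactly \eqref{eq:transporter-law}. Summing over orbits, every pair $(q,p)$ lies in exactly one orbit, so $K$ is defined everywhere. Proposition~\ref{prop:transporter-law} then shows that the resulting $L$ is order-equivariant.

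\emph{Converse and uniqueness.} Let $L$ be order-equivariant with kernels $K$, and set $A_{\calO}:=K(q_{\calO},p_{\calO})$. Applying the transporter law with $\gamma=\eta\in H_{\calO}$ at the representative gives \eqref{eq:stabilizer-intertwiner}. Applying it with general $\gamma$ at the representative, and solving for $K(\gamma q_{\calO},\gamma p_{\calO})$, recovers \eqref{eq:orbit-transport}. So $L$ arises from the family $(A_{\calO})$. Conversely, the construction evaluated at the representative (take $\gamma=\id$) returns $A_{\calO}$, so the family is determined by $L$. Hence the map $(A_{\calO})_{\calO}\mapsto L$ is a bijection
\[
\prod_{\calO}\Hom_{H_{\calO}}(V_{p_{\calO}},W_{q_{\calO}})\to\Hom_G(X,Y).
\]
It is visibly linear, since \eqref{eq:orbit-transport} is linear in $A_{\calO}$, so it is a linear isomorphism. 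Taking dimensions gives the stated formula.

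The only delicate step is the cocycle bookkeeping in the well-definedness argument. There one must use that $\eta$ fixes the representative points, so that the transports $T^W_{\eta,q_{\calO}}$ and $T^V_{\eta,p_{\calO}}$ are endomorphisms of $W_{q_{\calO}}$ and $V_{p_{\calO}}$. Only then does the intertwiner condition \eqref{eq:stabilizer-intertwiner} apply. Everything else is routine substitution.
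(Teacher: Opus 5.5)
Your proposal is correct and follows essentially the same route as the paper's proof: well-definedness via $\eta=\gamma^{-1}\gamma'\in H_{\calO}$ together with the cocycle identity and the stabilizer condition, the transporter law by expanding $T_{\delta\gamma,\cdot}=T_{\delta,\gamma\cdot}T_{\gamma,\cdot}$, and the converse by setting $A_{\calO}:=K(q_{\calO},p_{\calO})$ and specializing the transporter law first to $\eta\in H_{\calO}$ and then to general $\gamma$. You are somewhat more explicit than the paper about the transporter-law substitution and about why the correspondence is a linear bijection, which is exactly what justifies the dimension count.
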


\begin{corollary}[Permutation-only case]
\label{cor:perm-only}
If the input and output bundles are permutation-only, so that source fibers and target fibers in each site-orbit are canonically identified and all transports are identities under these identifications, then the stabilizer constraint is vacuous and, after the same canonical identifications, $K(q,p)$ is constant on each pair-orbit.
\end{corollary}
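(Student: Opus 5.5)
The plan is to specialize Proposition~\ref{prop:orbit-param} (equivalently the transporter law, Proposition~\ref{prop:transporter-law}) to the case where every transport is the identity under the canonical identifications. Concretely, fix for each site-orbit a model input space $V$ and a model output space $W$, and identify $V_p\cong V$ and $W_q\cong W$ for all sites in that orbit. The permutation-only hypothesis then says that $T^V_{\gamma,p}=\id_V$ and $T^W_{\gamma,q}=\id_W$ after these identifications, for every $\gamma\in G$ and $p,q\in P$. Each kernel block $K(q,p)$ is thereby regarded as an element of $\Hom(V,W)$ for the appropriate model spaces, which depend only on the site-orbits of $p$ and $q$.

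First I will treat the stabilizer constraint. For $\eta\in H_{\calO}$, condition~\eqref{eq:stabilizer-intertwiner} reads $T^W_{\eta,q_{\calO}}A_{\calO}=A_{\calO}T^V_{\eta,p_{\calO}}$. Since $\eta$ fixes both $q_{\calO}$ and $p_{\calO}$, the transports $T^W_{\eta,q_{\calO}}$ and $T^V_{\eta,p_{\calO}}$ are automorphisms of single fibers. Under the permutation-only hypothesis they are identities, so the condition becomes $A_{\calO}=A_{\calO}$ and holds for every $A_{\calO}$. In this sense the constraint is vacuous, and $\Hom_{H_{\calO}}(V_{p_{\calO}},W_{q_{\calO}})$ is all of $\Hom(V_{p_{\calO}},W_{q_{\calO}})$.

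Next I will establish constancy on pair-orbits. By Proposition~\ref{prop:orbit-param}, every equivariant $L$ has $K(q,p)=T^W_{\gamma,q_{\calO}}A_{\calO}(T^V_{\gamma,p_{\calO}})^{-1}$ whenever $(q,p)=\gamma(q_{\calO},p_{\calO})$. With identity transports this reduces to $K(q,p)=A_{\calO}$. Alternatively, one can read the same conclusion directly from the transporter law~\eqref{eq:transporter-law}, which becomes $K(\gamma q,\gamma p)=K(q,p)$. Either way, $K$ is constant on each pair-orbit. Conversely, any orbit-constant choice satisfies the transporter law, by the same one-line computation.

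The only delicate point is a well-definedness check: the canonical identifications must be compatible with the group action. Assigning fibers by site-orbit (rather than site by site) together with the cocycle identity ensures that "identity transport" is a consistent, meaningful statement, including for the stabilizer self-maps. Once this is in place, the rest is immediate substitution into the earlier results.
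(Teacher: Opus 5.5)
Your proposal is correct and takes the same approach the paper intends: the paper gives no separate proof of this corollary and treats it as the immediate specialization of Proposition~\ref{prop:orbit-param} (equivalently the transporter law \eqref{eq:transporter-law}) to identity transports. Under that specialization the stabilizer intertwining condition holds for every block, and the transported kernel reduces to $K(q,p)=A_{\calO}$, which is exactly what you show.
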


{\bf Auxiliary finite $G$-sets.}
We will also use the same linear theory for auxiliary finite $G$-sets.  If $S$ and $T$ are finite $G$-sets, $X=\bigoplus_{s\in S}V_s$ and $Y=\bigoplus_{t\in T}W_t$ are equivariant bundles over them, and
\[
(Lx)_t=\sum_{s\in S}K(t,s)x_s,
\qquad K(t,s)\in\Hom(V_s,W_t),
\]
then $L$ is $G$-equivariant if and only if
\[
K(\gamma t,\gamma s)T^V_{\gamma,s}=T^W_{\gamma,t}K(t,s)
\qquad(\gamma\in G,\ s\in S,\ t\in T).
\]
The orbit parametrization is identical with $P\times P$ replaced by $T\times S$.  Thus an orbital affine layer below may be used between bundles over such auxiliary $G$-sets, and a mask is equivariant exactly when its support is a union of $G$-orbits in the relevant target-source product.

\begin{definition}[Orbital affine layer]
\label{def:orbital-affine}
An \emph{orbital affine layer} is a map $A:X\to Y$ of the form $A(x)=Lx+b$, where $L$ is a linear order-equivariant map and $b\in Y$ is $G$-fixed: $\rho_Y(\gamma)b=b$ for all $\gamma\in G$.  Equivalently,
\[
b_{\gamma q}=T^W_{\gamma,q}b_q.
\]
Thus $b$ is specified by choosing, for each site-orbit representative $q_0$, a vector $b_{q_0}\in W_{q_0}^{G_{q_0}}$.
\end{definition}

Without biases, the UAT fails for common activations such as ReLU and $\tanh$ even for a one-point poset and trivial group, since no-bias networks with $\sigma(0)=0$ vanish at the origin.  We include equivariant biases to match the usual affine-map formulation of universal approximation.

\subsection{Reynolds blocks and OENN layers}

An ordinary MLP is a finite composition of affine maps and coordinatewise application of a fixed scalar activation $\sigma:\R\to\R$.

\begin{definition}[Reynolds block]
\label{def:reynolds-block}
Let $H$ be a finite group acting linearly on finite-dimensional spaces $U$ and $V$ by representations $T^U$ and $T^V$.  For an ordinary MLP $\Psi:U\to V$, define
\begin{equation}
\label{eq:eq-reynolds-block}
\mathcal R_H^{\mathrm{eq}}[\Psi](u):=\frac1{|H|}\sum_{h\in H}(T^V_h)^{-1}\Psi(T^U_hu).
\end{equation}
This is called a Reynolds block.  If $V=\R^m$ has the trivial $H$-action, the corresponding invariant block is
\begin{equation}
\label{eq:inv-reynolds-block}
\mathcal R_H^{\mathrm{inv}}[\Psi](u):=\frac1{|H|}\sum_{h\in H}\Psi((T^U_h)^{-1}u).
\end{equation}
\end{definition}

\begin{lemma}[Equivariance and branch realization]
\label{lem:reynolds-realization}
The map \eqref{eq:eq-reynolds-block} is $H$-equivariant.  Moreover, it is realizable by equivariant affine maps and coordinatewise activations on hidden branch spaces whose $H$-action is by permutation of branches.
\end{lemma}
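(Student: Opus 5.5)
The proof has two parts: a direct reindexing computation for equivariance, and an explicit branch construction for realizability.

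\emph{Equivariance.} For $g\in H$, write $\mathcal R:=\mathcal R_H^{\mathrm{eq}}[\Psi]$ and expand $\mathcal R(T^U_g u)=\frac1{|H|}\sum_{h}(T^V_h)^{-1}\Psi(T^U_hT^U_gu)$. Substitute $k=hg$, which is a bijection of $H$. Using $T^U_hT^U_g=T^U_{hg}$ and $(T^V_h)^{-1}=(T^V_{kg^{-1}})^{-1}=T^V_g(T^V_k)^{-1}$, the sum becomes $T^V_g\,\frac1{|H|}\sum_k (T^V_k)^{-1}\Psi(T^U_ku)=T^V_g\mathcal R(u)$. The invariant block \eqref{eq:inv-reynolds-block} is handled the same way, with trivial $T^V$ and the substitution $k=g^{-1}h$.

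\emph{Branch realization.} Write $\Psi=A_L\circ\sigma\circ A_{L-1}\circ\cdots\circ\sigma\circ A_1$, with affine maps $A_i(z)=M_iz+c_i$ between $\R^{n_{i-1}}$ and $\R^{n_i}$, where $n_0=\dim U$ and $n_L=\dim V$. The idea is to run $|H|$ copies of $\Psi$ in parallel, one per group element, and average at the end. Concretely:
\begin{itemize}
\item Let the hidden spaces be $Z_i:=\bigoplus_{h\in H}\R^{n_i}$, indexed by $h$. Let $H$ act on $Z_i$ by permuting branches: $(g\cdot z)_h:=z_{hg}$.
\item The first layer is $u\mapsto (M_1T^U_hu+c_1)_{h\in H}$. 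It is equivariant because $(\,\cdot\,)_h$ evaluated at $T^U_gu$ equals $M_1T^U_{hg}u+c_1$, which is the $(hg)$-component at $u$.
\item Each intermediate layer applies $M_i,c_i$ branchwise, i.e.\ it is block-diagonal in $h$. Block-diagonal affine maps with identical blocks commute with branch permutations, and the bias $(c_i)_h$ is constant in $h$, so it is fixed. The coordinatewise $\sigma$ also commutes with permutations of coordinates.
\item The readout is $z\mapsto\frac1{|H|}\sum_h (T^V_h)^{-1}(M_Lz_h+c_L)$, landing in $V$.
\end{itemize}

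It remains to check the readout. At $g\cdot z$ it gives $\frac1{|H|}\sum_h (T^V_h)^{-1}(M_Lz_{hg}+c_L)$. Reindexing $k=hg$ yields $T^V_g$ applied to the original output, with the bias term handled by the same substitution. So every layer is an equivariant affine map between representations, in the sense of the auxiliary $G$-set theory with $H$ in place of $G$ and $H$ acting on itself. By construction, the composite equals \eqref{eq:eq-reynolds-block}.

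The only delicate point is choosing the branch action as right multiplication, $z_h\mapsto z_{hg}$, so that it is compatible with both the input twist $T^U_h$ and the output untwist $(T^V_h)^{-1}$. Once that convention is fixed, every step is the same reindexing computation.
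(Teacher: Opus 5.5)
Your proposal is correct and follows essentially the same route as the paper: equivariance by the reindexing $k=hg$, and realization by $|H|$ parallel copies of $\Psi$ on $(\R^{n_i})^H$ with the right-regular action $(g\cdot z)_h=z_{hg}$, an input layer twisted by $T^U_h$, branchwise hidden layers, and an averaged readout untwisted by $(T^V_h)^{-1}$. Your explicit check that the readout is equivariant, and your treatment of the invariant block via $k=g^{-1}h$, are harmless additions that the paper leaves implicit.
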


\begin{definition}[Pointwise Reynolds layer]
\label{def:pointwise-reynolds-layer}
Let $S$ be a finite $G$-set, and let $E=\bigoplus_{s\in S}E_s$ and $F=\bigoplus_{s\in S}F_s$ be equivariant bundles over $S$.  For each representative $s_0$ of a $G$-orbit in $S$, choose an ordinary MLP $\Psi_{s_0}:E_{s_0}\to F_{s_0}$ and form the Reynolds block
\[
\psi_{s_0}:=\mathcal R^{\mathrm{eq}}_{G_{s_0}}[\Psi_{s_0}],
\]
where $G_{s_0}$ acts on $E_{s_0}$ and $F_{s_0}$ by the stabilizer transports.  For $s=\gamma s_0$, define
\begin{equation}
\label{eq:pointwise-transport}
\psi_s(e):=T^F_{\gamma,s_0}\psi_{s_0}\bigl((T^E_{\gamma,s_0})^{-1}e\bigr),
\qquad e\in E_s.
\end{equation}
The map $N:E\to F$ given by $(Nz)_s=\psi_s(z_s)$ is a \emph{pointwise Reynolds layer}.
\end{definition}

The definition is independent of the choice of $\gamma$ because $\psi_{s_0}$ is $G_{s_0}$-equivariant.  A direct calculation gives $N\rho_E(\tau)=\rho_F(\tau)N$ for every $\tau\in G$.

\begin{definition}[OENN]
\label{def:oenn}
An \emph{order-equivariant neural network} (OENN) on the poset $P$ is any finite composition of orbital affine layers and pointwise Reynolds layers, allowing intermediate equivariant bundles and finite parallel concatenations.  Intermediate bundles may be indexed by $P$ or by auxiliary finite $G$-sets functorially built from $P$, such as $P\times P$ for pair-lift constructions.  For auxiliary source and target bundles over finite $G$-sets $S$ and $T$,
``orbital affine'' means the same transporter-law affine layer with
$P\times P$ replaced by $T\times S$. 
Local or masked orbital affine layers are allowed when their supports are unions of pair-orbits in the corresponding target-source product.
\end{definition}

When no locality mask is imposed on the orbital affine layers, we call the resulting architecture class the \emph{full} OENN class.  
Scalar activations are applied inside ordinary MLP branches, and equivariance for arbitrary stabilizer representations is enforced by Reynolds averaging.  
In the special case of permutation-only bundles, pointwise Reynolds layers reduce to the usual shared coordinatewise MLP layers.

\begin{definition}[$R$-local layers, pair-state locality, and covers]
\label{def:R-local-oenn}
Let $R\subseteq P\times P$ be a $G$-invariant relation containing the diagonal.  An orbital affine layer $A:E\to F$ between bundles indexed by $P$ is called \emph{$R$-local} if its kernel blocks satisfy
\[
K(q,p)=0
\qquad\text{unless }(q,p)\in R.
\]
Pointwise Reynolds layers are diagonal in the site index and hence are $R$-local because $\Delta_P\subseteq R$.

The \emph{pair-state} or \emph{source-labeled} $R$-local class, denoted
$\mathrm{OENN}^{R\text{-pair}}_\sigma(X,Y)$, uses hidden bundles whose carrier site $q$ may contain source-indexed slots
\(
E_q=\bigoplus_{p\in P}E_{q,p}.
\)
The group action sends the $p$-source slot over $q$ to the $\gamma p$-source slot over $\gamma q$.  Equivalently, the inter-site hidden state is a bundle over the pair-state set $P\times P$ with diagonal $G$-action.  An inter-site orbital affine layer between pair-state bundles is called \emph{pair-state $R$-local} if its kernel blocks satisfy the source-preserving support condition
\[
K((q,p),(r,s))=0
\qquad\text{unless } s=p \text{ and } (q,r)\in R.
\]
Thus the carrier coordinate propagates locally along $R$, while the source label is transported equivariantly and is not mixed during inter-site propagation.  The class $\mathrm{OENN}^{R\text{-pair}}_\sigma(X,Y)$ is generated by $R$-local orbital affine layers on $P$, pair-state $R$-local orbital affine layers, pointwise Reynolds layers, carrier-diagonal read/write maps between $P$-indexed and $P\times P$-indexed bundles, and finite parallel concatenations.  Carrier-diagonal maps and pointwise Reynolds layers may act on the whole carrier fiber $E_q=\bigoplus_{p\in P}E_{q,p}$, but they do not propagate information between different carrier sites.  Thus pair-state OENNs preserve the communication pattern of local message passing while retaining the identity of the source site.  This is a higher-order local class (rather than the ordinary anonymous-message-passing class).

Write $q\prec p$ when $q<p$ and no element $r\in P$ satisfies $q<r<p$.  The cover-local, order-respecting relation is
\[
R_{\mathrm{cov}}
:=
\Delta_P
\cup \{(q,p):q\prec p\}
\cup \{(q,p):p\prec q\}.
\]
Thus an $R_{\mathrm{cov}}$-local layer communicates only along self-loops and one-step up/down Hasse-cover incidences.
\end{definition}

\subsection{Pair-orbit aggregation}
\label{subsec:pair-orbit-aggregation}

We now give a practical aggregation form that is well-defined for arbitrary stabilizer actions.  For fixed $q\in P$, put $G_q:=\Stab_G(q)$ and let
\[
P/G_q=\{\Omega_1(q),\ldots,\Omega_{R(q)}(q)\}
\]
be the relative-position classes of source sites as seen from $q$.  Pair-orbits $\calO\in(P\times P)/G$ meeting $\{q\}\times P$ are canonically in bijection with these classes by
\[
\calO\longmapsto \Omega_{\calO}(q):=\{p\in P:(q,p)\in\calO\}.
\]
In this sense, pair-orbit aggregation is an order-aggregating block: the summaries $S_{\calO}(q;x)$ below aggregate over source sites of a fixed relative-position type, while the global pair-orbit notation avoids choosing $q$-dependent representatives.

Let $\calO\in(P\times P)/G$ be a pair-orbit with representative $(q_{\calO},p_{\calO})$ and stabilizer $H_{\calO}$.  Choose an $H_{\calO}$-invariant Reynolds block
\(
\phi_{\calO}:V_{p_{\calO}}\to\R^{m_{\calO}}.
\)
For $(q,p)=\gamma(q_{\calO},p_{\calO})$, define
\begin{equation}
\label{eq:transported-phi}
\phi_{q,p}(v):=\phi_{\calO}\bigl((T^V_{\gamma,p_{\calO}})^{-1}v\bigr),
\qquad v\in V_p.
\end{equation}
This is independent of $\gamma$, because changing $\gamma$ changes it by an element of $H_{\calO}$ and $\phi_{\calO}$ is $H_{\calO}$-invariant.  Define the summary
\begin{equation}
\label{eq:pair-summary}
S_{\calO}(q;x):=\sum_{\substack{p\in P\\(q,p)\in\calO}}\phi_{q,p}(x_p)\in\R^{m_{\calO}},
\end{equation}
with the empty sum interpreted as zero.

\begin{lemma}[Equivariance of pair summaries]
\label{lem:pair-summary-equiv}
For all $\tau\in G$,
\[
S_{\calO}(\tau q;\rho_X(\tau)x)=S_{\calO}(q;x).
\]
In particular, $S_{\calO}(q;\cdot)$ is $G_q$-invariant.
\end{lemma}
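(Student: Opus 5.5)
The plan is a direct reindexing computation. It rests on two ingredients: the formula \eqref{eq:rhoX-def} for $\rho_X$, and the fact that the transported maps $\phi_{q,p}$ of \eqref{eq:transported-phi} are compatible with transport:
\begin{equation*}
\phi_{\tau q,\tau p}\bigl(T^V_{\tau,p}v\bigr)=\phi_{q,p}(v)\qquad(v\in V_p,\ \tau\in G).
\end{equation*}
I would prove this identity first. If $(q,p)=\gamma(q_{\calO},p_{\calO})$, then $(\tau q,\tau p)=\tau\gamma(q_{\calO},p_{\calO})$, so we may use $\tau\gamma$ as the witness in \eqref{eq:transported-phi}. That choice is legitimate because $\phi$ is independent of the witness, as already noted after \eqref{eq:transported-phi}. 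The cocycle identity gives $T^V_{\tau\gamma,p_{\calO}}=T^V_{\tau,p}\circ T^V_{\gamma,p_{\calO}}$. Hence
\begin{equation*}
\phi_{\tau q,\tau p}(T^V_{\tau,p}v)=\phi_{\calO}\bigl((T^V_{\gamma,p_{\calO}})^{-1}(T^V_{\tau,p})^{-1}T^V_{\tau,p}v\bigr)=\phi_{q,p}(v).
\end{equation*}

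Next, I would expand the left-hand side of the lemma using \eqref{eq:pair-summary}:
\begin{equation*}
S_{\calO}(\tau q;\rho_X(\tau)x)=\sum_{p'\,:\,(\tau q,p')\in\calO}\phi_{\tau q,p'}\bigl((\rho_X(\tau)x)_{p'}\bigr).
\end{equation*}
I would then substitute $p'=\tau p$. Since $\calO$ is a $G$-orbit under the diagonal action, $(\tau q,\tau p)\in\calO$ if and only if $(q,p)\in\calO$. Because $p\mapsto\tau p$ is a bijection of $P$, the index set is carried bijectively onto $\{p:(q,p)\in\calO\}$. By \eqref{eq:rhoX-def}, $(\rho_X(\tau)x)_{\tau p}=T^V_{\tau,p}x_p$. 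Applying the compatibility identity termwise turns the sum into $\sum_{(q,p)\in\calO}\phi_{q,p}(x_p)=S_{\calO}(q;x)$. When the sum is empty the claim holds trivially, since the bijection preserves emptiness.

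For the ``in particular'' statement, I would take $\tau\in G_q$, so that $\tau q=q$. The identity then reads $S_{\calO}(q;\rho_X(\tau)x)=S_{\calO}(q;x)$, which is exactly $G_q$-invariance.

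The only delicate step is the compatibility identity. Its difficulty lies in well-definedness: one must allow a change of witness from $\gamma$ to $\tau\gamma$. That change is justified by the $H_{\calO}$-invariance of $\phi_{\calO}$, which the paper already uses to show independence of $\gamma$. Everything else is bookkeeping with the cocycle law.
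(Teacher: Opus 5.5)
Your proposal is correct and follows the paper's proof essentially verbatim: the paper likewise proves $\phi_{\tau q,\tau p}(T^V_{\tau,p}v)=\phi_{q,p}(v)$ by using $\tau\gamma$ as the witness and applying the cocycle identity, then reindexes the sum via the bijection $p\mapsto\tau p$. Your explicit handling of $(\rho_X(\tau)x)_{\tau p}=T^V_{\tau,p}x_p$, the empty sum, and the $G_q$-invariance corollary fills in steps the paper leaves implicit.
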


More generally, the same construction can be grouped over any $G$-invariant relation $R\subseteq P\times P$.  Since $R$ is a union of pair-orbits, one may either feed the separate summaries $\{S_{\calO}:\calO\subset R\}$ to the readout, or, when the summary dimensions agree and the corresponding encoders are intentionally tied, use the coarsened relation summary
\[
S_R(q;x):=\sum_{\calO\subset R}S_{\calO}(q;x)
=\sum_{\substack{p\in P\\(q,p)\in R}}\phi^R_{q,p}(x_p).
\]
Here $\phi^R_{q,p}$ denotes the tied transported encoder on the pair-orbit containing $(q,p)$.  Lemma~\ref{lem:pair-summary-equiv} implies $S_R(\tau q;\rho_X(\tau)x)=S_R(q;x)$, so relation-level aggregation is an equivariant parameter-tied coarsening of the orbit-by-orbit construction.

For each site-orbit representative $q_0$, let $\mathcal A(q_0)$ be the set of pair-orbits $\calO$ for which some pair $(q_0,p)$ lies in $\calO$.  Choose a $G_{q_0}$-equivariant Reynolds block
\[
\psi_{q_0}:V_{q_0}\oplus\bigoplus_{\calO\in\mathcal A(q_0)}\R^{m_{\calO}}\to W_{q_0},
\]
where $G_{q_0}$ acts on $V_{q_0}$ by the fiber transport and trivially on the summary coordinates.  For $q=\gamma q_0$, define
\begin{equation}
\label{eq:pair-aggregation-layer}
F(x)_q:=T^W_{\gamma,q_0}\psi_{q_0}\left((T^V_{\gamma,q_0})^{-1}x_q,
\{S_{\calO}(q;x)\}_{\calO\in\mathcal A(q_0)}\right).
\end{equation}

\begin{proposition}[Pair-orbit aggregation is an OENN layer]
\label{prop:pair-aggregation-oenn}
The map $F$ in \eqref{eq:pair-aggregation-layer} is order-equivariant and is realizable as an OENN layer stack.
\end{proposition}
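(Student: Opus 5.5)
We first check well-definedness and equivariance directly, and then factor $F$ into three OENN stages over the auxiliary $G$-set $P\times P$.

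\textbf{Well-definedness and equivariance.} If $q=\gamma q_0=\gamma' q_0$, then $\eta:=\gamma^{-1}\gamma'\in G_{q_0}$. The cocycle identity gives $T^V_{\gamma',q_0}=T^V_{\gamma,q_0}T^V_{\eta,q_0}$, and likewise for $W$. Since $\psi_{q_0}$ is $G_{q_0}$-equivariant and the summary coordinates carry the trivial action, the $\eta$-factors cancel, so \eqref{eq:pair-aggregation-layer} does not depend on $\gamma$. The same argument works for \eqref{eq:transported-phi}, as already noted. For equivariance, fix $\tau\in G$ and write $\tau q=(\tau\gamma)q_0$. By \eqref{eq:rhoX-def}, $(\rho_X(\tau)x)_{\tau q}=T^V_{\tau,q}x_q$, and Lemma~\ref{lem:pair-summary-equiv} gives $S_{\calO}(\tau q;\rho_X(\tau)x)=S_{\calO}(q;x)$. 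Substituting into \eqref{eq:pair-aggregation-layer} with $\tau\gamma$ in place of $\gamma$, the cocycle identity gives $F(\rho_X(\tau)x)_{\tau q}=T^W_{\tau,q}F(x)_q$, that is, $F\rho_X(\tau)=\rho_Y(\tau)F$.

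\textbf{Realization as a layer stack.}
\begin{enumerate}
\item \emph{Lift.} Map $X$ to the pair-state bundle $Z=\bigoplus_{(q,p)\in P\times P}V_p$ by $z_{(q,p)}=x_p$, with transports $T^V_{\gamma,p}$ on the $(q,p)$ slot. This is linear with kernel $K((q,p),p')=\delta_{p,p'}\id$, which satisfies the transporter law for auxiliary $G$-sets, so it is an orbital affine layer with zero bias.
\item \emph{Encode.} On $Z$, apply a pointwise Reynolds layer over $S=P\times P$ with output bundle $\bigoplus_{(q,p)}\R^{m_{\calO(q,p)}}$, carrying trivial transports. At each orbit representative $(q_{\calO},p_{\calO})$, whose stabilizer is $H_{\calO}$, use the block $\phi_{\calO}$. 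For an invariant target, $\mathcal R^{\mathrm{eq}}$ with trivial output action coincides with $\mathcal R^{\mathrm{inv}}$ up to reindexing $h\mapsto h^{-1}$. By \eqref{eq:pointwise-transport}, the transported maps are exactly $\phi_{q,p}$ from \eqref{eq:transported-phi}.
\item \emph{Aggregate and concatenate.} Map to the $P$-indexed bundle $\bigoplus_q\bigl(V_q\oplus\bigoplus_{\calO\in\mathcal A(q)}\R^{m_{\calO}}\bigr)$ by a linear layer, using a parallel concatenation with the identity on $X$. Its kernel from slot $(q',p)$ to the $\calO$-summary at $q$ is $\delta_{q,q'}\mathbf 1[(q,p)\in\calO]\,\id$. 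This kernel is $G$-invariant because $\calO$ is a $G$-orbit and the transports on the summaries are trivial, so the transporter law holds. Its output is $S_{\calO}(q;x)$ by \eqref{eq:pair-summary}.
\item \emph{Readout.} Apply the pointwise Reynolds layer over $P$ built from $\psi_{q_0}$. Definition~\ref{def:pointwise-reynolds-layer} then reproduces \eqref{eq:pair-aggregation-layer} literally.
\end{enumerate}

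\textbf{Main obstacle.} The main care is needed in step 3, to set up the summary bundle correctly. The index set $\mathcal A(q)$ must be $G$-equivariantly defined: since $\mathcal A(\gamma q)=\mathcal A(q)$, pair-orbits are global. The trivial transports must also be consistent with the cocycle law. Once this is in place, every stage is an orbital affine or pointwise Reynolds layer. Lemma~\ref{lem:reynolds-realization} then shows the Reynolds blocks themselves expand into equivariant affine maps and coordinatewise activations, which completes the realization.
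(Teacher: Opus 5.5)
Your proposal is correct and follows essentially the same route as the paper. The paper likewise checks well-definedness and equivariance via the cocycle identity and Lemma~\ref{lem:pair-summary-equiv}, then realizes $F$ as a pair-lift to $P\times P$, a pointwise Reynolds encoder with trivial output transports, an orbital summation into a $P$-indexed summary bundle concatenated with an identity skip, and a pointwise Reynolds readout. Your differences are cosmetic: you use one encoder layer over all of $P\times P$ rather than one per orbit, you fuse the summation with the skip, and you explicitly note the $h\mapsto h^{-1}$ reindexing between $\mathcal R^{\mathrm{eq}}$ and $\mathcal R^{\mathrm{inv}}$, which the paper leaves implicit.
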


\begin{definition}[Relation-message-passing OENN layer]
\label{def:relation-mp-oenn}
Let $R\subseteq P\times P$ be a $G$-invariant relation containing the diagonal.  A \emph{relation-message-passing OENN layer} is an update obtained from the pair-orbit aggregation construction by using only pair-orbits $\calO\subseteq R$:
\begin{equation}
\label{eq:relation-mp-oenn}
h'_q=\theta_q\Bigl(h_q,\{S_{\calO}(q;h)\}_{\calO\subseteq R}\Bigr).
\end{equation}
Here the encoders defining $S_{\calO}$ are transported invariant Reynolds blocks as in \eqref{eq:transported-phi}-\eqref{eq:pair-summary}, and the readouts $\theta_q$ are transported from stabilizer-equivariant Reynolds blocks on site-orbit representatives.  If several pair-orbits inside $R$ are intentionally tied and summed before the readout, \eqref{eq:relation-mp-oenn} gives the usual relation-level aggregation.  We write $\mathrm{OENN}^{R\text{-mp}}_\sigma$ for finite compositions of such layers.
\end{definition}

\begin{proposition}[Message-passing-compatible updates]
\label{prop:mp-compatible}
Every relation-message-passing OENN layer is an order-equivariant OENN stack.  Conversely, any continuous update of the form \eqref{eq:relation-mp-oenn} whose encoders are stabilizer-invariant and whose readouts are stabilizer-equivariant can be uniformly approximated on compact invariant sets by relation-message-passing OENN layers with any continuous non-polynomial activation $\sigma$.
\end{proposition}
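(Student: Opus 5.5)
The first direction follows directly from Proposition~\ref{prop:pair-aggregation-oenn}. A relation-message-passing OENN layer is the pair-orbit aggregation layer \eqref{eq:pair-aggregation-layer} in which $\mathcal A(q_0)$ is restricted to the pair-orbits $\calO\subseteq R$. This is the same as taking $\phi_{\calO}=0$ for every $\calO\not\subseteq R$, or equivalently omitting those summary coordinates. Since $R$ is a union of pair-orbits, the restricted index set is still $G$-stable. The equivariance computation of Lemma~\ref{lem:pair-summary-equiv} and the realization argument of Proposition~\ref{prop:pair-aggregation-oenn} therefore go through unchanged. The tied relation-level summary $S_R$ is a fixed linear, orbit-constant sum of orbit summaries, so it is an orbital linear map and stays inside the class. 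Finite compositions of OENN stacks are OENN stacks, which proves the first claim.

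For the converse, fix a continuous target update $h'_q=\theta_q(h_q,\{S_{\calO}(q;h)\}_{\calO\subseteq R})$. Its encoders $\phi_{\calO}:V_{p_{\calO}}\to\R^{m_{\calO}}$ are continuous and $H_{\calO}$-invariant. Its readouts are transported from continuous $G_{q_0}$-equivariant maps $\theta_{q_0}$. Let $K$ be a compact invariant set. The plan is to approximate each ingredient on the relevant compact sets and then control the composition.

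\emph{Step 1 (encoders).} Let $K_p\subseteq V_p$ be the projection of $K$. Then $K_{p_{\calO}}$ is compact and $H_{\calO}$-invariant, because $H_{\calO}$ fixes $p_{\calO}$. By the classical universal approximation theorem for a non-polynomial $\sigma$, choose an ordinary MLP $\Psi$ that is $\epsilon$-close to $\phi_{\calO}$ on the compact set $\bigcup_{h}T_h^{-1}K_{p_{\calO}}=K_{p_{\calO}}$. Then $\mathcal R^{\mathrm{inv}}_{H_{\calO}}[\Psi]$ is $\epsilon$-close to $\mathcal R^{\mathrm{inv}}_{H_{\calO}}[\phi_{\calO}]=\phi_{\calO}$, since averaging is a contraction in the sup norm and $\phi_{\calO}$ is invariant. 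Transporting by \eqref{eq:transported-phi} preserves the sup error because the pairs are in bijection. Each summary $S_{\calO}(q;\cdot)$ is then within $|P|\epsilon$ of the true one, uniformly on $K$.

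\emph{Step 2 (readouts).} The true inputs $(h_{q_0},\{S_{\calO}(q_0;h)\})$ range over a compact set $C_{q_0}$. This set is $G_{q_0}$-invariant because the summaries are $G_{q_0}$-invariant by Lemma~\ref{lem:pair-summary-equiv}. Enlarge it to the closed $\delta$-neighbourhood $C_{q_0}^\delta$, using a $G_{q_0}$-invariant norm (obtained by averaging), so that it stays compact and invariant. Since $\theta_{q_0}$ is uniformly continuous on $C^\delta_{q_0}$, perturbing its inputs by less than $\delta$ changes the output by less than $\epsilon'$. Approximate $\theta_{q_0}$ by an MLP $\Psi$ on $C_{q_0}^\delta$ and apply $\mathcal R^{\mathrm{eq}}_{G_{q_0}}$. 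The error is again at most the MLP error, because $T_h$ is an isometry for an invariant norm and $\theta_{q_0}$ is equivariant. Transport to all of the orbit, using that the transports are linear with bounded norm over finitely many sites.

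\emph{Step 3 (assembly).} The main care point is the order of the quantifiers. First fix $\epsilon'$ and obtain $\delta$ from uniform continuity. Then choose the encoder accuracy so that the summary error is below $\delta$. Finally choose the readout MLP accuracy. The resulting layer uses only pair-orbits in $R$, so it is a relation-message-passing OENN layer, and its sup error on $K$ is at most $2\epsilon'$ times a constant from the transport norms. For a finite composition of such updates, proceed by induction from the last layer backwards, using compactness of the intermediate images and uniform continuity in the same way. I expect this error-propagation bookkeeping, together with the invariance of the enlarged compact sets, to be the only genuinely delicate step.
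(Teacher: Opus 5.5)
Your proposal is correct and takes the same route as the paper. The first claim is Proposition~\ref{prop:pair-aggregation-oenn} restricted to pair-orbits in $R$; the converse approximates each stabilizer-invariant encoder and stabilizer-equivariant readout on the finitely many orbit representatives by Reynolds-averaged MLPs (Lemma~\ref{lem:finite-group-density}), then controls the composed error using finiteness and continuity. Your write-up spells out what the paper compresses into ``the resulting errors compose continuously'' (invariant compact neighbourhoods, uniform continuity of the readout, order of tolerances); the one small fix is to take $\delta\le 1$ from uniform continuity on a fixed neighbourhood such as $C^{1}_{q_0}$, rather than on $C^{\delta}_{q_0}$, whose definition already depends on $\delta$.
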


\begin{proposition}[Architecture hierarchy]
\label{prop:oenn-architecture-hierarchy}
For every $G$-invariant relation $R\subseteq P\times P$ containing the diagonal and every activation $\sigma$ used in the OENN layers,
\[
\mathrm{OENN}^{R\text{-mp}}_\sigma
\subseteq
\mathrm{OENN}^{R\text{-pair}}_\sigma
\subseteq
\mathrm{OENN}^{\mathrm{full}}_\sigma.
\]
\end{proposition}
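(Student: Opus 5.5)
The plan is to prove the two inclusions separately, each by showing that every generating layer of the smaller class can be realized by generators of the larger class. Both classes are closed under composition and finite parallel concatenation, so it suffices to handle a single generating layer.

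\textbf{Second inclusion}, $\mathrm{OENN}^{R\text{-pair}}_\sigma\subseteq\mathrm{OENN}^{\mathrm{full}}_\sigma$. This is essentially definitional, and we check each generator of the pair-state class against Definition~\ref{def:oenn}.
\begin{itemize}
\item An $R$-local orbital affine layer on $P$ is an orbital affine layer whose mask is a union of pair-orbits, and masked layers are explicitly allowed in the full class.
\item A pair-state $R$-local layer is an orbital affine layer between bundles over the auxiliary $G$-set $P\times P$. Its support $\{((q,p),(r,p)):(q,r)\in R\}$ is $G$-stable under the diagonal action, so it is again an allowed masked orbital affine layer over the target-source product $(P\times P)\times(P\times P)$.
\item Carrier-diagonal read/write maps between $P$-indexed and $P\times P$-indexed bundles are linear $G$-equivariant maps between bundles over the $G$-sets $P$ and $P\times P$, hence orbital affine with zero bias.
\item Pointwise Reynolds layers belong to both classes by definition.
\end{itemize}

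\textbf{First inclusion}, $\mathrm{OENN}^{R\text{-mp}}_\sigma\subseteq\mathrm{OENN}^{R\text{-pair}}_\sigma$. Here we must realize one relation-message-passing layer \eqref{eq:relation-mp-oenn} using only pair-state local generators. We proceed in four steps.
\begin{enumerate}
\item Write $h$ into pair-state form by a carrier-diagonal map that places $h_q$ into the diagonal slot $(q,q)$. Concatenate $h$ itself in parallel, for use by the readout.
\item Apply one pair-state $R$-local orbital affine layer with kernel $K((q,p),(r,p))=\delta_{r,p}\,\mathrm{id}$ whenever $(q,p)\in R$. This copies $h_p$ into slot $(q,p)$ for every $(q,p)\in R$. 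The step is legal because $(q,r)=(q,p)\in R$, the source label is preserved, and the kernel is $G$-equivariant since it only uses transports.
\item Apply a pointwise Reynolds layer on the $P\times P$-indexed bundle that evaluates the transported encoder $\phi_{q,p}$ on slot $(q,p)$, and restrict to pair-orbits $\calO\subseteq R$.
\item A carrier-diagonal linear read map sums the slots over $\{p:(q,p)\in\calO\}$ into orbit-labeled coordinates. This produces $S_{\calO}(q;h)$, and a final pointwise Reynolds layer applies $\theta_q$.
\end{enumerate}

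The main obstacle is step 3. The encoders $\phi_{\calO}$ are invariant Reynolds blocks of $H_{\calO}$, the stabilizer of the pair, while a pointwise Reynolds layer over the $G$-set $P\times P$ averages over exactly that stabilizer. I would first check that the transported definition \eqref{eq:transported-phi} agrees with the pointwise transport \eqref{eq:pointwise-transport} when orbit representatives are matched. For the summation in step 4, the only check is that the target coordinates indexed by $\calO$ carry the trivial $G_q$-action, which is consistent with Lemma~\ref{lem:pair-summary-equiv}.

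A residual subtlety is whether the carrier-diagonal maps in the pair-state definition may depend on the source label. If only label-independent maps are permitted, I would instead do the orbit-restricted summation inside the pointwise Reynolds layer on the whole carrier fiber $\bigoplus_p E_{q,p}$. This layer is allowed to act on the full carrier fiber, and it yields the same $G_q$-equivariant map, because $G_q$ permutes the slots within each class $\Omega_{\calO}(q)$.
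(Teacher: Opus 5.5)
Your proposal is correct and follows essentially the same route as the paper's proof. The first inclusion goes: a carrier-diagonal seed into the $(q,q)$ slots, one source-preserving pair-state $R$-local propagation placing $h_p$ in slot $(q,p)$ for $(q,p)\in R$, slotwise transported invariant encoders, a carrier-local summation into $S_{\calO}(q;h)$, and a Reynolds readout; the second inclusion holds because pair-state bundles are permitted auxiliary $G$-set bundles and $R$-local masks are special orbital masks. The only cosmetic difference is your propagation kernel, supported on the single block $((q,p),(p,p))$, where the paper uses the full sum $\sum_{r:(q,r)\in R}z_{r,p}$ and relies on the seed vanishing off the diagonal; both are legitimate and give the same map.
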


The first class is the analog of ordinary local message passing: messages from neighbors in $R$ are aggregated without a persistent global source label.  The pair-state class is the universal local completion used in Section~\ref{subsec:local-uat}: it keeps the same communication graph but carries a source index through propagation.  The full class allows global orbital affine layers.

\section{Equivariant Universal Approximation}
\label{sec:uat}

Throughout this section, let $(P,\leq)$ be a finite poset, let $G$ be a finite group acting on $P$ by order automorphisms, and let $X=\bigoplus_{p\in P}V_p$ and $Y=\bigoplus_{q\in P}W_q$ be $G$-equivariant bundles with actions $\rho_X$ and $\rho_Y$.  Approximation is measured uniformly on compact $G$-invariant subsets $K\subset X$.

\subsection{Preparatory lemmas}

\begin{lemma}[Sitewise stabilizer equivariance]
\label{lem:sitewise-stabilizer}
Let $K\subseteq X$ be $G$-invariant. If $F\in C_G(K,Y)$, then for every $q\in P$ and every $\eta\in G_q:=\Stab_G(q)$,
\[
F(\rho_X(\eta)x)_q=T^W_{\eta,q}F(x)_q
\qquad(x\in K).
\]
\end{lemma}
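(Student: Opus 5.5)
The plan is to specialize the global equivariance identity \eqref{eq:equivariance} to an element $\eta$ of the stabilizer $G_q$ and then read off the $q$-component using the explicit formula \eqref{eq:rhoX-def} for the induced action on $Y$. No earlier result beyond the definitions is needed. The identity $F(\rho_X(\eta)x)=\rho_Y(\eta)F(x)$ makes sense for every $x\in K$ because $K$ is $G$-invariant, so $\rho_X(\eta)x\in K$.

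First, fix $q\in P$, $\eta\in G_q$ and $x\in K$. Take the $q$-th component of both sides of the equivariance identity. The left side is exactly $F(\rho_X(\eta)x)_q$. For the right side, apply the defining formula of $\rho_Y$:
\[
(\rho_Y(\eta)y)_q=T^W_{\eta,\eta^{-1}q}\,y_{\eta^{-1}q},
\qquad y:=F(x).
\]
Second, use $\eta q=q$, which gives $\eta^{-1}q=q$. The right side therefore becomes $T^W_{\eta,q}F(x)_q$, and this is the claimed identity.

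There is essentially no obstacle; the only point requiring care is the index bookkeeping in \eqref{eq:rhoX-def}. One must check that the transport is evaluated at the source site $\eta^{-1}q$ and not at $q$, and then observe that these coincide for a stabilizer element. As an optional remark, I would note that the cocycle identity $T^W_{\eta\eta',q}=T^W_{\eta,q}T^W_{\eta',q}$ for $\eta,\eta'\in G_q$ shows that $\eta\mapsto T^W_{\eta,q}$ is a representation of $G_q$ on $W_q$. Together with the analogous fact for $V$, this is what makes the lemma usable later as a sitewise stabilizer-equivariance constraint on the $q$-component of $F$.
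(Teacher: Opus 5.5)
Your proposal is correct and matches the paper's proof: both take the $q$-coordinate of $F(\rho_X(\eta)x)=\rho_Y(\eta)F(x)$ and use $\eta^{-1}q=q$ to reduce the transport $T^W_{\eta,\eta^{-1}q}$ in the formula for $\rho_Y$ to $T^W_{\eta,q}$. Your check that $\rho_X(\eta)x\in K$ by $G$-invariance, and your remark that $\eta\mapsto T^W_{\eta,q}$ is a representation of $G_q$, are both sound.
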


\begin{lemma}[Equivariant density for finite groups]
\label{lem:finite-group-density}
Let $H$ be a finite group acting linearly on finite-dimensional real vector spaces $U$ and $V$ by $T^U$ and $T^V$.  Let $K\subset U$ be compact and $H$-invariant, and let $f:K\to V$ be continuous and $H$-equivariant.  If $\sigma:\R\to\R$ is continuous and non-polynomial, then for every norm $\|\cdot\|_*$ on $V$ and every $\varepsilon>0$ there is an ordinary MLP $\Psi:U\to V$ with activation $\sigma$ such that
\[
\psi(u):=\frac1{|H|}\sum_{h\in H}(T^V_h)^{-1}\Psi(T^U_hu)
\]
is $H$-equivariant and satisfies
\[
\sup_{u\in K}\|\psi(u)-f(u)\|_*<\varepsilon.
\]
Moreover, $\psi$ is a Reynolds block and hence is OENN-realizable in the sense of Lemma~\ref{lem:reynolds-realization}.
\end{lemma}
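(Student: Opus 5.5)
The proof follows the standard symmetrization argument: approximate $f$ by an ordinary MLP, average over $H$, and check that averaging neither breaks equivariance nor increases the error beyond a constant factor that can be absorbed.

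\textbf{Step 1: reduce to an invariant norm.} Since $V$ is finite-dimensional, all norms on it are equivalent. Fix an $H$-invariant norm $\|v\|_H:=\max_{h\in H}\|T^V_h v\|_*$. Every $T^V_h$ is an isometry for $\|\cdot\|_H$, and there is a constant $C\ge 1$ with $\|v\|_*\le C\|v\|_H$. It therefore suffices to achieve error $\varepsilon/C$ in $\|\cdot\|_H$.

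\textbf{Step 2: approximate $f$ by an ordinary MLP.} The set $K$ is compact, and $f$ extends continuously to all of $U$ by Tietze's extension theorem, though only its values on $K$ will matter. By the classical universal approximation theorem (Leshno--Lin--Pinkus--Schocken) for continuous non-polynomial $\sigma$, applied componentwise after choosing bases of $U$ and $V$, there is a one-hidden-layer MLP $\Psi:U\to V$ with
\[
\sup_{u\in K}\|\Psi(u)-f(u)\|_H<\varepsilon/C.
\]

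\textbf{Step 3: average over $H$ and estimate.} Define $\psi$ as in the statement. It is $H$-equivariant by Lemma~\ref{lem:reynolds-realization}; directly, this follows by reindexing $h\mapsto hg$ in the sum. Because $f$ is equivariant, $f(u)=(T^V_h)^{-1}f(T^U_h u)$ for every $h\in H$. Hence
\[
\psi(u)-f(u)=\frac1{|H|}\sum_{h\in H}(T^V_h)^{-1}\bigl(\Psi(T^U_hu)-f(T^U_hu)\bigr).
\]
Since $K$ is $H$-invariant, $T^U_h u\in K$ for every $u\in K$. Each summand therefore has $\|\cdot\|_H$-norm below $\varepsilon/C$, because $(T^V_h)^{-1}$ is a $\|\cdot\|_H$-isometry. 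The triangle inequality gives $\|\psi(u)-f(u)\|_H<\varepsilon/C$, and so $\|\psi(u)-f(u)\|_*<\varepsilon$.

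\textbf{Step 4: realizability.} By Definition~\ref{def:reynolds-block}, $\psi$ is exactly $\mathcal R^{\mathrm{eq}}_H[\Psi]$, so the final claim follows from Lemma~\ref{lem:reynolds-realization}.

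There is no real obstacle. The only point needing care is that the transport operators need not be isometries for the given norm $\|\cdot\|_*$, and the invariant norm of Step 1 handles this. The remaining ingredients are the invariance of $K$ and the pointwise equivariance identity for $f$, both used in Step 3.
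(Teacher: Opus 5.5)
Your proposal is correct and follows the paper's proof essentially step for step: you pass to an $H$-invariant norm, approximate $f$ on $K$ by an ordinary MLP via the Leshno et al.\ theorem (coordinatewise, with a Tietze extension), and bound the Reynolds average using equivariance of $f$, $H$-invariance of $K$, and isometric transports. The only cosmetic difference is that you build the invariant norm as $\max_{h\in H}\|T^V_h v\|_*$ rather than by averaging an inner product; your construction in fact lets you take $C=1$, since the identity term already gives $\|v\|_*\le\|v\|_H$.
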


\subsection{Full universal approximation}

We first consider $\mathrm{OENN}^{\mathrm{full}}_\sigma(X,Y)$, i.e., the class of OENNs from Definition~\ref{def:oenn} in which arbitrary orbital affine layers are allowed.

\begin{theorem}[Full OENN UAT]
\label{thm:OENN-UAT}
Let $(P,\leq)$ be finite and let $G$ be a finite group acting on $P$ by order automorphisms.  Let $X=\bigoplus_{p\in P}V_p$ and $Y=\bigoplus_{q\in P}W_q$ be $G$-equivariant bundles.  Let $K\subset X$ be compact and $G$-invariant, and let $\sigma:\R\to\R$ be continuous and non-polynomial.  Then $\mathrm{OENN}^{\mathrm{full}}_\sigma(X,Y)$ is dense in $C_G(K,Y)$ in the uniform norm.  Equivalently, for every $f\in C_G(K,Y)$ and every $\varepsilon>0$, there exists a full OENN $F:X\to Y$ such that
\[
\sup_{x\in K}\|F(x)-f(x)\|<\varepsilon.
\]
\end{theorem}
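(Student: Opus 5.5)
The plan is to reduce the global equivariant approximation problem to a family of stabilizer-equivariant problems, one per site-orbit, and then use Lemma~\ref{lem:finite-group-density}. Fix $f\in C_G(K,Y)$ and $\varepsilon>0$. For each site-orbit representative $q_0$, consider the coordinate map
\[
f_{q_0}:K\to W_{q_0},\qquad x\mapsto f(x)_{q_0}.
\]
By Lemma~\ref{lem:sitewise-stabilizer}, $f_{q_0}$ is $G_{q_0}$-equivariant. Here $G_{q_0}$ acts on $X$ by $\rho_X|_{G_{q_0}}$ and on $W_{q_0}$ by the stabilizer transports $T^W_{\eta,q_0}$. Also, $K$ is compact and $G_{q_0}$-invariant.

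Lemma~\ref{lem:finite-group-density} applies with $H=G_{q_0}$, $U=X$ and $V=W_{q_0}$. It gives a Reynolds block $\psi_{q_0}:X\to W_{q_0}$ that is $G_{q_0}$-equivariant and satisfies $\sup_K\|\psi_{q_0}-f_{q_0}\|<\varepsilon'$, where $\varepsilon'$ is chosen small relative to $\varepsilon$ and to the finitely many operator norms of the transports $T^W_{\gamma,q_0}$.

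We then define the candidate $F$ by transport:
\[
F(x)_{\gamma q_0}:=T^W_{\gamma,q_0}\,\psi_{q_0}\bigl(\rho_X(\gamma)^{-1}x\bigr).
\]
The three checks are as follows.

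\begin{itemize}
\item \emph{Well-definedness.} If $\gamma q_0=\gamma' q_0$, then $\gamma^{-1}\gamma'\in G_{q_0}$. The two formulas agree by $G_{q_0}$-equivariance of $\psi_{q_0}$ and the cocycle identity.
\item \emph{Equivariance.} $F\circ\rho_X(\tau)=\rho_Y(\tau)\circ F$ follows by direct computation, exactly as in the remark after Definition~\ref{def:pointwise-reynolds-layer}.
\item \emph{Error bound.} Equivariance of $f$ gives $f(x)_{\gamma q_0}=T^W_{\gamma,q_0}f(\rho_X(\gamma)^{-1}x)_{q_0}$. Since $\rho_X(\gamma)^{-1}x\in K$, the error at every site is bounded by $\max_\gamma\|T^W_{\gamma,q_0}\|\,\varepsilon'$. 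Summing over the finitely many sites gives total error below $\varepsilon$.
\end{itemize}

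The remaining, and main, step is to show that this $F$ lies in $\mathrm{OENN}^{\mathrm{full}}_\sigma(X,Y)$. The difficulty is that each $\psi_{q_0}$ reads the \emph{whole} input $x$, whereas pointwise Reynolds layers act only fiberwise. The first thing I would try is a broadcast through an auxiliary bundle over $P\times P$:
\begin{itemize}
\item Use a carrier-diagonal orbital affine layer $X\to Z:=\bigoplus_{(q,p)\in P\times P}V_p$ (the pair-lift), with $(q,p)$-slot $x_p$ and transports $T^V_{\gamma,p}$. Its kernel is the "identity on the second coordinate" and satisfies the transporter law for the auxiliary $G$-sets $P\times P$ and $P$. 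Regrouped by carrier $q$, the fiber is $Z_q=\bigoplus_p V_p\cong X$.
\item Check that the stabilizer $G_q$ acts on $Z_q$ exactly through $\rho_X|_{G_q}$ under this identification.
\item Apply a pointwise Reynolds layer over the $G$-set $P$ with fibers $Z_q\to W_q$, using the Reynolds block $\psi_{q_0}$ at representatives. The transport formula \eqref{eq:pointwise-transport} then reproduces the definition of $F$ above.
\end{itemize}

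By Lemma~\ref{lem:reynolds-realization}, each Reynolds block is itself realizable by equivariant affine maps and coordinatewise activations on permuted branches. Hence the whole composition is a full OENN. The delicate verification I expect to spend the most care on is matching the induced $G_q$-action on the pair-lifted fiber $Z_q$ with $\rho_X|_{G_q}$. Transports on $Z$ are indexed by the source $p$ but moved along with the carrier, so the identification $Z_q\cong X$ must intertwine them with the cocycle defining $\rho_X$ in \eqref{eq:rhoX-def}.
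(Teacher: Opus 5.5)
Your proposal is correct and follows essentially the paper's proof: sitewise stabilizer equivariance, then Lemma~\ref{lem:finite-group-density} at each orbit representative, then transport along the orbit, then realization as a broadcast followed by a pointwise Reynolds layer. The step you flag as delicate is immediate, because your pair-lift regrouped by carrier is exactly the paper's broadcast bundle $\widetilde X_q=X$ with transport $T^{\widetilde X}_{\gamma,q}=\rho_X(\gamma)$ for every $\gamma\in G$ (the $p$-slot over $q$ goes to the $\gamma p$-slot over $\gamma q$ via $T^V_{\gamma,p}$, which is just \eqref{eq:rhoX-def}); two small corrections: this lift is a global orbital linear layer rather than a carrier-diagonal one (harmless in the full class), and if you measure error by summing over sites, $\varepsilon'$ must also absorb the factor $|P|$, whereas the paper's max norm avoids this.
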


Note that local masks such as cover, incidence, or one-hop graph relations are equivariant whenever they are $G$-stable, but bounded-depth local equivariance alone does not imply universal approximation.

\begin{corollary}[Permutation-only UAT]
\label{cor:perm-only-uat}
Assume that the original input and output transports are identity maps between canonically identified fibers, so that $G$ acts by block permutations.  Then the linear part of OENN specializes to affine layers whose blocks and biases are tied on $G$-orbits.  Ordinary sitewise pointwise layers reduce to shared MLPs on site-orbits, while stabilizer-equivariant readouts on auxiliary broadcast fibers are finite Reynolds averages over permutation representations.  Consequently, Theorem~\ref{thm:OENN-UAT} specializes to universal approximation of continuous $G$-equivariant maps on compact invariant subsets for finite permutation-group actions.  In full-symmetric invariant-output special cases, this is compatible with DeepSets-type sum-of-embeddings representations, while for graph-indexed features it gives the full fixed-domain orbit-tied equivariant approximation class.
\end{corollary}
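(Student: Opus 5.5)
The corollary is a specialization of Theorem~\ref{thm:OENN-UAT}, so I will verify that each building block of a full OENN reduces to the stated tied form when transports are identities, and then read off the conclusion.

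\textbf{Step 1: linear and affine parts.} By Corollary~\ref{cor:perm-only}, when all transports are identities under the canonical identifications, the stabilizer intertwiner condition \eqref{eq:stabilizer-intertwiner} is vacuous. Every linear order-equivariant map therefore has kernel $K(q,p)$ constant on each pair-orbit. The same holds over auxiliary $G$-sets $T\times S$ whenever the intermediate bundles are themselves permutation-only. For biases, Definition~\ref{def:orbital-affine} gives $b_{\gamma q}=b_q$ with $b_{q_0}\in W_{q_0}^{G_{q_0}}=W_{q_0}$, so biases are tied on site-orbits. Hence orbital affine layers are exactly affine maps whose blocks and biases are tied on $G$-orbits.

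\textbf{Step 2: pointwise layers.} In Definition~\ref{def:pointwise-reynolds-layer}, if $G_{s_0}$ acts trivially on $E_{s_0}$ and $F_{s_0}$, then $\mathcal R^{\mathrm{eq}}_{G_{s_0}}[\Psi]=\Psi$. Equation \eqref{eq:pointwise-transport} then says $\psi_{\gamma s_0}=\Psi_{s_0}$, so each layer is a shared MLP on the site-orbit.

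\textbf{Step 3: intermediate fibers.} Intermediate fibers used in the proof of Theorem~\ref{thm:OENN-UAT}, such as broadcast copies of $X$, may carry nontrivial stabilizer actions even when the original bundles are permutation-only. In that case the action on these fibers is by permutation of coordinate blocks, since $\rho_X$ is a block permutation. By Lemma~\ref{lem:reynolds-realization} and Lemma~\ref{lem:finite-group-density}, the readouts are finite Reynolds averages over permutation representations. I expect this to be the only delicate point: the statement must not claim that every hidden layer is literally a shared coordinatewise MLP. I would keep the wording ``Reynolds averages over permutation representations'' for these fibers.

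\textbf{Step 4: conclusion.} Applying Theorem~\ref{thm:OENN-UAT} verbatim with block-permutation $\rho_X,\rho_Y$ gives density in $C_G(K,Y)$. This is universal approximation of continuous $G$-equivariant maps on compact invariant sets for finite permutation-group actions. For the final remarks:
\begin{itemize}
\item For $P$ an antichain, $G=S_n$ and invariant output, the tied layers amount to a shared encoder followed by summation over the single orbit. This is compatible with DeepSets sum-of-embeddings representations.
\item For graph-indexed features with $G\le\Aut$ of the graph, the tied layers are exactly the fixed-domain orbit-tied equivariant class.
\end{itemize}
Both remarks are descriptive identifications rather than new approximation claims.
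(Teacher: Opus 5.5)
Your proposal is correct and follows the same route as the paper. You specialize Theorem~\ref{thm:OENN-UAT} to identity transports, invoke Corollary~\ref{cor:perm-only} for the orbit-tied linear and bias parts, and use Lemma~\ref{lem:reynolds-realization} (with Lemma~\ref{lem:finite-group-density}) for the stabilizer readouts on the broadcast fibers, whose nontrivial block-permutation actions you rightly flag. Separating these readouts from the trivial-stabilizer pointwise layers, and treating the DeepSets and graph remarks as descriptive, faithfully expands the paper's two-line proof.
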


\subsection{Pair-state local message-passing universality}
\label{subsec:local-uat}

The proof of Theorem~\ref{thm:OENN-UAT} uses a global broadcast layer.  We now show that the same effect can be obtained by local message passing, provided the hidden state is allowed to be \emph{pair-state}: the state $z_{q,p}$ is stored at carrier site $q$ and remembers that it originated at source site $p$.  This is the local analog of higher-order or tensorized GNNs; it keeps the same communication graph as ordinary anonymous aggregation.

For a $G$-invariant relation $R\subseteq P\times P$ containing the diagonal, let $\Gamma_R$ be the directed communication graph with vertex set $P$ and edge $p\to q$ exactly when $(q,p)\in R$.  Thus $R$ records which source sites $p$ may send information to which target sites $q$ in one carrier-local linear layer.  Write $d_R(p,q)$ for the directed distance from $p$ to $q$, with $d_R(p,q)=\infty$ if no such path exists.

\begin{theorem}[Diameter-sharp pair-state local universality]
\label{thm:diameter-sharp-local-uat}
Let $R\subseteq P\times P$ be a $G$-invariant relation containing the diagonal, and let $\Gamma_R$ and $d_R$ be as above.

(i) Lower bound. Fix $L\geq 0$.  Consider any architecture whose input feature at site $p$ is initially available only at $p$, whose inter-site layers have support contained in $R$, and whose remaining operations are pointwise nonlinearities, carrier-diagonal equivariant maps, or finite parallel concatenations.  Then the output at $q$ after at most $L$ inter-site layers depends only on input sites $p$ with $d_R(p,q)\leq L$.  Consequently, if $d_R(p,q)>L$, then even for trivial $G$, scalar fibers, and compact domain $[0,1]^P$, no such $L$-local architecture can uniformly approximate the continuous map whose $q$-coordinate is $f(x)_q=x_p$ and whose other coordinates are zero.

(ii) Diameter-depth compilation. If $\Gamma_R$ is strongly connected with directed diameter $D$, then the broadcast layer
\[
B:X\to\bigoplus_{q\in P}\widetilde X_q,
\qquad
\widetilde X_q:=X,
\qquad
(Bx)_q=x,
\]
with transport $T^{\widetilde X}_{\gamma,q}=\rho_X(\gamma)$ is exactly realizable by pair-state $R$-local OENN primitives using $D$ inter-site propagation layers.  The construction uses hidden pair-state variables $z_{q,p}\in V_p$ $(q,p\in P)$, so the hidden width at each carrier site scales as $\sum_{p\in P}\dim V_p$.

(iii) Universality. Under the same strong-connectivity hypothesis, for every continuous non-polynomial activation $\sigma:\R\to\R$, $\mathrm{OENN}^{R\text{-pair}}_\sigma(X,Y)$ is dense in $C_G(K,Y)$ in the uniform norm.  Moreover, the depth threshold is sharp in the worst-case sense that, for every $L<D$, the lower-bound statement applies to some pair of sites $p,q$ with $d_R(p,q)>L$.
\end{theorem}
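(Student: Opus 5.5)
For (i) I will argue by induction on the number $\ell$ of inter-site layers. The claim is that every hidden coordinate at carrier site $q$ (including all source-indexed slots $E_{q,p'}$ in the pair-state setting) is a function only of the inputs $x_p$ with $d_R(p,q)\le\ell$. At $\ell=0$ the state at $q$ sees only $x_q$, since the input at $p$ is available only at $p$. Pointwise nonlinearities, carrier-diagonal maps and parallel concatenation act within a single carrier fiber, so they do not enlarge the dependence set. An inter-site layer with support in $R$ writes at $q$ a combination of states at sites $r$ with $(q,r)\in R$, that is, edges $r\to q$ in $\Gamma_R$. By the triangle inequality $d_R(p,q)\le d_R(p,r)+1$, so the dependence radius grows by one. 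For the impossibility, suppose $d_R(p,q)>L$. Then the network output at $q$, call it $g(x)_q$, is constant in $x_p$. Fix the other coordinates and compare $x_p=0$ with $x_p=1$: $\sup|g_q-x_p|\ge 1/2$, so uniform approximation to within $\varepsilon<1/2$ fails.

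For (ii) I will build $z^{(0)}_{q,p}=\delta_{qp}x_p$ using a carrier-diagonal write map into the pair-state bundle. This map is equivariant because $\gamma$ sends slot $(q,q)$ to $(\gamma q,\gamma q)$ with transport $T^V_{\gamma,q}$. The bundle $z_{q,p}\in V_p$ carries the transport $T^V_{\gamma,p}$ from the $(q,p)$ slot to the $(\gamma q,\gamma p)$ slot. The transporter law must hold on these slots, and it does by construction. The main obstacle is that a naive sum over neighbours overcounts. My plan is to propagate a "known" copy together with a flag-free exact recovery: at step $t$, set $z^{(t)}_{q,p}=x_p$ if $d_R(p,q)\le t$ and $0$ otherwise. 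The difficulty is that the update is
\[
z^{(t+1)}_{q,p}=z^{(t)}_{q,p}+c_{q,p,t}\sum_{r:(q,r)\in R,\ r\ne q}z^{(t)}_{r,p},
\]
and its coefficient must depend on the pair $(q,p)$. This is legitimate, since coefficients may vary over pair-state orbits. Set $c_{q,p,t}=1/N_t(q,p)$ if $d_R(p,q)=t+1$, where $N_t(q,p)$ is the number of in-neighbours $r$ of $q$ with $d_R(p,r)\le t$, and set $c_{q,p,t}=0$ otherwise. Because $d_R$ and $N_t$ are $G$-invariant ($G$ preserves $R$), this kernel is constant on diagonal $G$-orbits of $((q,p),(r,p))$. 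Its blocks are scalar multiples of identity on $V_p$, so they commute with the transports, and the kernel satisfies the transporter law and the source-preserving support condition. After $D$ steps every $z_{q,p}=x_p$. Finally a carrier-diagonal read map assembles $(z_{q,p})_p=x$ at each $q$, which is exactly $(Bx)_q$ with transport $\rho_X(\gamma)$; here one checks $(\rho_X(\gamma)x)_{\gamma p}=T^V_{\gamma,p}x_p$.

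For (iii) I will reuse the proof of Theorem~\ref{thm:OENN-UAT}. That proof uses a global broadcast followed by sitewise stabilizer-equivariant readouts. For $q=\gamma q_0$, the map $x\mapsto f(x)_{q_0}$ is $G_{q_0}$-equivariant from $\widetilde X_{q_0}=X$ to $W_{q_0}$ by Lemma~\ref{lem:sitewise-stabilizer}. Lemma~\ref{lem:finite-group-density} approximates it by a Reynolds block $\psi_{q_0}$, which is then transported to the other sites as in Definition~\ref{def:pointwise-reynolds-layer}. This gives a pointwise Reynolds layer on the broadcast bundle, and a pointwise layer is carrier-diagonal, hence allowed in the pair-state class. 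Composing it with the exact $B$ from (ii) yields $F(x)_q=T^W_{\gamma,q_0}\psi_{q_0}(\rho_X(\gamma)^{-1}x)$, and the error is uniform on the compact set $\rho_X(\gamma)^{-1}K=K$. Sharpness follows from (i): when $L<D$, choose $p,q$ with $d_R(p,q)=D>L$.
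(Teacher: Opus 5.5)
Your proposal is correct. Parts (i) and (iii) follow the paper's argument essentially verbatim: the dependency-radius induction with the $0/1$ two-point test, and $F=N\circ B$ from the proof of Theorem~\ref{thm:OENN-UAT} with $B$ compiled exactly.

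Part (ii) uses the same seed, propagate, collect pipeline but normalizes differently. The paper does not treat the naive neighbour sum as an obstacle:
\begin{itemize}
\item It applies the single tied operator $(Mz)_{q,p}=\sum_{r:(q,r)\in R}z_{r,p}$ exactly $D$ times.
\item It notes that $(M^DIx)_{q,p}=a_{q,p}x_p$, where $a_{q,p}$ counts length-$D$ walks from $p$ to $q$ in $\Gamma_R$. These counts are positive by self-loop padding and the diameter bound, and $G$-invariant because $R$ is.
\item It removes the overcount with one carrier-local rescaling by $a_{q,p}^{-1}$.
\end{itemize}
You instead normalize shell by shell with depth-dependent coefficients $c_{q,p,t}$. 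Every intermediate slot is then exactly $x_p$ or $0$, and no final rescaling is needed. You should state explicitly why $N_t(q,p)\ge 1$ whenever $d_R(p,q)=t+1$: the predecessor of $q$ on a shortest path from $p$ is counted. This is a one-line addition, not a gap.

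The paper's version buys weight tying and a trivial equivariance check, since every propagation layer is the same plain $R$-adjacency sum. Its cost is that the walk counts $a_{q,p}$ can be exponentially large in $D$ and vary widely across pairs, so the final rescaling is poorly conditioned.

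Your version keeps all intermediate states bounded and exact. It also gives an exact partial broadcast of radius $t$ after $t$ layers, even without strong connectivity. Its cost is layer-specific weights that depend on the distance function $d_R$, plus the extra $G$-invariance check for $N_t$, which you correctly supply.
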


\begin{corollary}[Pair-state cover-local OENN UAT]
\label{cor:cover-local-oenn-uat}
Let $R_{\mathrm{cov}}$ be the cover-local relation from Definition~\ref{def:R-local-oenn}.  If the undirected Hasse graph of $P$ is connected, then $\mathrm{OENN}^{R_{\mathrm{cov}}\text{-pair}}_\sigma(X,Y)$ is dense in $C_G(K,Y)$ in the uniform norm.
\end{corollary}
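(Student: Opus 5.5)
The plan is to reduce the corollary to part (iii) of Theorem~\ref{thm:diameter-sharp-local-uat} with $R:=R_{\mathrm{cov}}$. To do this, we need to check the hypotheses of that theorem in three steps.

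First, we verify that $R_{\mathrm{cov}}$ is $G$-invariant and contains the diagonal. The second property holds by definition, since $\Delta_P\subseteq R_{\mathrm{cov}}$. For $G$-invariance, note that each $\gamma\in G$ acts as an order automorphism of $(P,\leq)$, so $\gamma$ preserves strict inequality. Moreover, $\gamma$ is a bijection whose inverse is also order-preserving. Hence if $q<r<p$ for some $r$, then $\gamma q<\gamma r<\gamma p$, and conversely. It follows that $q\prec p$ holds if and only if $\gamma q\prec\gamma p$. Therefore each of the three pieces of $R_{\mathrm{cov}}$, and hence their union, is stable under the diagonal action.

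Second, we show that $\Gamma_{R_{\mathrm{cov}}}$ is strongly connected. By construction, $R_{\mathrm{cov}}$ is symmetric: it contains the pair $(q,p)$ whenever $q\prec p$ or $p\prec q$, together with the diagonal. Consequently every directed edge $p\to q$ of $\Gamma_{R_{\mathrm{cov}}}$ comes paired with the reverse edge $q\to p$. Apart from loops, the edges of $\Gamma_{R_{\mathrm{cov}}}$ are exactly the edges of the undirected Hasse graph, each taken in both directions. Thus a path in the connected Hasse graph from $p$ to $q$ gives a directed path from $p$ to $q$ in $\Gamma_{R_{\mathrm{cov}}}$. So strong connectivity is equivalent to connectivity of the Hasse graph, which is the hypothesis of the corollary. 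The directed diameter $D$ is then finite; in fact it is at most $|P|-1$.

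Third, having these two facts, we apply Theorem~\ref{thm:diameter-sharp-local-uat}(iii) to conclude that $\mathrm{OENN}^{R_{\mathrm{cov}}\text{-pair}}_\sigma(X,Y)$ is dense in $C_G(K,Y)$. We take $K$ and $\sigma$ as in the ambient standing assumptions of the section: $K$ compact and $G$-invariant, and $\sigma$ continuous and non-polynomial. No step is a real obstacle. The only point needing care is that cover relations are preserved by automorphisms, which requires using that $\gamma^{-1}$ is also order-preserving, so that no intermediate element can be created or destroyed. One might also note that $D$ equals the Hasse-graph diameter, which gives the depth bound explicitly.
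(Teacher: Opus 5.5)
Your proposal is correct and follows the paper's proof: you check that automorphisms preserve covers, so $R_{\mathrm{cov}}$ is $G$-invariant and contains the diagonal. You then identify $\Gamma_{R_{\mathrm{cov}}}$ as the self-looped, bidirected Hasse graph, so strong connectivity is equivalent to Hasse-graph connectivity, and apply Theorem~\ref{thm:diameter-sharp-local-uat}(iii).
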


There are several remarks. The cover-local relation $R_{\mathrm{cov}}$ contains both upward and downward cover incidences.  This is necessary for universal approximation of arbitrary equivariant maps $X\to Y$, because an arbitrary output site may depend on features at any other site.  If communication is restricted to only one order direction, the directed communication graph of a nontrivial finite poset is generally not strongly connected.  In that case one can obtain universality only for maps satisfying the corresponding causal order-dependence restriction.

Theorem~\ref{thm:diameter-sharp-local-uat} is a universality theorem for the pair-state local completion.  Ordinary message passing aggregates neighbor information without a persistent source label; pair-state propagation stores $z_{q,p}$ and therefore lets the readout at $q$ access the contribution from each source $p$ separately.  Some mechanism of this kind (source labels, higher-order states, global attention, or a global broadcast) is unavoidable for worst-case approximation of arbitrary maps in $C_G(K,Y)$.

The hypothesis on $\sigma$ is the usual finite-dimensional MLP hypothesis.  ReLU and $\tanh$ satisfy it.  Coordinatewise activations need not commute with arbitrary nontrivial fiber transports; this is why OENN uses Reynolds blocks for stabilizer representations.

\section{Examples: Graphs and Sheaves}
\label{sec:examples}

\subsection{Fixed graphs via face-posets}
\label{subsec:graphs}

Let $\mathcal G=(V,E)$ be a finite undirected graph without parallel edges, and let $P:=V\sqcup E$ be its face-poset, with $v\leq e$ iff $v$ is an endpoint of $e$.  Every graph automorphism preserves incidence and hence acts on $P$ by poset automorphisms.

{\bf Vertex-only layers.}
Take permutation-only vertex fibers $V_v\simeq\R^{d_{\mathrm{in}}}$ and $W_v\simeq\R^{d_{\mathrm{out}}}$.  For a fixed graph, Proposition~\ref{prop:orbit-param} gives the most general linear $\Aut(\mathcal G)$-equivariant vertex operator:
\begin{equation}
\label{eq:graph-full-orbit-linear}
(Lh)_v=\sum_{\calO\in(V\times V)/\Aut(\mathcal G)} A_{\calO}
\sum_{\substack{u\in V\\(v,u)\in\calO}}h_u.
\end{equation}
For a general graph, the pair-orbits need not be only ``self'' and ``adjacent''.  If $\Aut(\mathcal G)$ is trivial, every ordered pair is its own orbit.  The standard message-passing linear layer
\begin{equation}
\label{eq:MP-linear}
(Lh)_v=A_{\mathrm{self}}h_v+A_{\mathrm{adj}}\sum_{u\in N(v)}h_u
\end{equation}
is nevertheless equivariant, because the diagonal relation $\Delta=\{(v,v)
:v\in V\}$ and the directed adjacency relation $E^{\pm}=\{(v,u):\{v,u\}\in E\}$ are unions of pair-orbits.  Thus \eqref{eq:MP-linear} is an OENN layer with additional parameter tying across all self-pair orbits and all adjacency-pair orbits.  It coincides with the full orbit-parametrized form only in the special case where the target-source pair-orbits in $V\times V$ are exactly the diagonal relation and the directed adjacency relation.  More generally, within the adjacency-supported masked subspace, it is full only when both $\Delta$ and $E^{\pm}$ are single pair-orbits; otherwise it additionally ties distinct orbit blocks and sets all non-adjacency orbit kernels to zero.

The usual nonlinear message-passing update
\begin{equation}
\label{eq:MP-nonlinear}
h'_v=\psi\left(h_v,\sum_{u\in N(v)}\phi(h_u)\right)
\end{equation}
is obtained from the invariant-relation aggregation above with $R=\Delta\cup E^{\pm}$: the explicit self input $h_v$ is the diagonal term, while one ties the encoders for all pair-orbits contained in $E^{\pm}$ and sums their summaries before the readout.  Hence ordinary MPNNs are OENN special cases, while the full OENN class can express finer orbit-dependent sharing on a fixed graph.

{\bf Local universality and MPNNs.}
For fixed graphs, the local UAT in Theorem~\ref{thm:diameter-sharp-local-uat} is concerned with the pair-state local completion.  In graph notation, the universal local hidden state is $z_{v,u}$: it is carried by vertex $v$ but remembers the source vertex $u$.  Local propagation updates
\[
z^{t+1}_{v,u}=\sum_{w\in N[v]} z^t_{w,u},
\]
where $N[v]$ includes the self-loop, followed by an equivariant rescaling after diameter-many steps.  This is a second-order/source-aware message-passing architecture on $V\times V$.  The ordinary MPNN \eqref{eq:MP-nonlinear} is recovered by discarding the persistent source label and tying messages by the adjacency relation; that tied anonymous subclass is not universal for arbitrary continuous equivariant maps on a fixed graph.

{\bf Cycles.}
Let $\mathcal G=C_n$ be the cycle with vertex set $\mathbb Z_n$.  Under the cyclic subgroup $C_n$, pair-orbits are directed offsets
\[
\calO_\ell=\{(i,i+\ell):i\in\mathbb Z_n\},
\qquad \ell\in\mathbb Z_n.
\]
Thus a $C_n$-equivariant linear layer is ordinary circular convolution:
\begin{equation}
\label{eq:cyclic-conv}
(Lh)_i=\sum_{\ell\in\mathbb Z_n}A_\ell h_{i+\ell}.
\end{equation}
Under the full dihedral group $D_{2n}$, reflection identifies offsets $\ell$ and $-\ell$.  The pair-orbits are indexed by undirected distance $k\in\{0,1,\ldots,\lfloor n/2\rfloor\}$, and the equivariant filters are
\begin{equation}
\label{eq:dihedral-filter}
\begin{aligned}
(Lh)_i
&=A_0h_i+\sum_{1\leq k<n/2}A_k(h_{i+k}+h_{i-k})\\
&\quad+\mathbf{1}_{2\mid n}A_{n/2}h_{i+n/2}.
\end{aligned}
\end{equation}
Therefore, for $n\geq 3$ and nonzero input and output fiber dimensions, the dihedral distance-isotropic filters are a strict subset of cyclic circular convolutions, obtained from \eqref{eq:cyclic-conv} by imposing $A_\ell=A_{-\ell}$ for every offset $\ell$.

{\bf Vertex-edge couplings.}
If edge fibers are included, then Proposition~\ref{prop:orbit-param} also describes vertex-edge and edge-vertex blocks.  Let $h_v\in\R^{d_V}$ be vertex features, $z_e\in\R^{d_E}$ edge features, and let the vertex output fiber be $\R^{d'_V}$.  A common incidence-tied vertex update is
\begin{equation}
\label{eq:vertex-edge-coupling}
(Lx)_v
=A_{VV,0}h_v
+A_{VV,1}\sum_{u\in N(v)}h_u
+A_{VE,\mathrm{inc}}\sum_{e\ni v}z_e,
\end{equation}
with $A_{VV,0},A_{VV,1}\in\Hom(\R^{d_V},\R^{d'_V})$ and $A_{VE,\mathrm{inc}}\in\Hom(\R^{d_E},\R^{d'_V})$.  Edge outputs can analogously use self-edge, incident-vertex, and edge-adjacency masks.  Relations such as $\{(v,e):v\leq e\}$ are unions of pair-orbits and therefore define equivariant orbital affine layers.  On an arbitrary fixed graph, however, \eqref{eq:vertex-edge-coupling} is generally a coarser tied special case of the full orbit-parametrized operator: in highly symmetric graphs the incidence mask may collapse to a small number of orbit types, while in asymmetric graphs it may split into many pair-orbits.

\subsection{Cellular and simplicial sheaf layers}
\label{subsec:sheaves}

Let $\mathcal K$ be a finite regular CW complex or simplicial complex with face-poset $P$, and let $G$ be a finite group acting on $P$ by order automorphisms.  A cellular sheaf assigns a vector space $V_p$ to each cell $p\in P$ and a restriction map
\[
R_{p\to q}:V_p\to V_q
\qquad(q\leq p),
\]
compatible along chains.  We use this contravariant face-poset convention throughout: $q\leq p$ means that $q$ is a face of $p$, and restrictions point from the larger cell to the face.  Reversing the poset recovers the opposite convention used in some neural-sheaf formulations.  Suppose this $G$-action lifts to the sheaf, meaning
\begin{equation}
\label{eq:sheaf-naturality}
T^V_{\gamma,q}R_{p\to q}=R_{\gamma p\to\gamma q}T^V_{\gamma,p}
\qquad(q\leq p).
\end{equation}

\begin{lemma}[Invariant fiber metrics and sheaf adjoints]
\label{lem:sheaf-adjoint-equivariance}
For every finite lifted action satisfying \eqref{eq:sheaf-naturality}, there exist inner products on all fibers $V_p$ for which every transport $T^V_{\gamma,p}:V_p\to V_{\gamma p}$ is an isometry.  With respect to any such invariant inner products, the adjoints of the restriction maps satisfy
\begin{equation}
\label{eq:sheaf-adjoint-naturality}
T^V_{\gamma,p}R^*_{p\to q}=R^*_{\gamma p\to\gamma q}T^V_{\gamma,q}
\qquad(q\leq p,
\ \gamma\in G).
\end{equation}
\end{lemma}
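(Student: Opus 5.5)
The plan has two parts: first build invariant inner products by the standard averaging trick over the finite group, then obtain the adjoint identity by taking adjoints of the naturality relation \eqref{eq:sheaf-naturality}.

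\textbf{Existence of invariant inner products.} Fix arbitrary inner products $\langle\cdot,\cdot\rangle^0_p$ on each fiber $V_p$. Define
\[
\langle u,v\rangle_p:=\frac1{|G|}\sum_{\gamma\in G}\bigl\langle T^V_{\gamma,p}u,\;T^V_{\gamma,p}v\bigr\rangle^0_{\gamma p},
\qquad u,v\in V_p.
\]
Symmetry and bilinearity are immediate. Positive definiteness follows because each $T^V_{\gamma,p}$ is an isomorphism, so every summand is positive when $u\neq0$.

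For the isometry property, take $\eta\in G$. The cocycle identity $T^V_{\gamma\eta,p}=T^V_{\gamma,\eta p}T^V_{\eta,p}$ gives
\[
\langle T^V_{\eta,p}u,T^V_{\eta,p}v\rangle_{\eta p}
=\frac1{|G|}\sum_{\gamma}\langle T^V_{\gamma\eta,p}u,T^V_{\gamma\eta,p}v\rangle^0_{\gamma\eta p}.
\]
Reindexing by $\gamma\mapsto\gamma\eta$ returns $\langle u,v\rangle_p$. This is the only place the cocycle condition enters, and it is routine. Equivalently, this is the average of an inner product on $X$ under $\rho_X$, restricted blockwise, and it stays block-diagonal because $\rho_X$ permutes the blocks.

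\textbf{Adjoint naturality.} Fix any invariant inner products and let $q\le p$. Since $T^V_{\gamma,p}$ is an isometry $V_p\to V_{\gamma p}$, its adjoint equals its inverse: $(T^V_{\gamma,p})^*=(T^V_{\gamma,p})^{-1}$. Take adjoints of both sides of \eqref{eq:sheaf-naturality}:
\[
R^*_{p\to q}(T^V_{\gamma,q})^{-1}=(T^V_{\gamma,p})^{-1}R^*_{\gamma p\to\gamma q}.
\]
Multiply on the left by $T^V_{\gamma,p}$ and on the right by $T^V_{\gamma,q}$. This gives exactly \eqref{eq:sheaf-adjoint-naturality}.

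The two points needing care are:
\begin{itemize}
\item checking that $\gamma q\le\gamma p$, so that $R_{\gamma p\to\gamma q}$ is defined; this holds because $G$ acts by order automorphisms;
\item using the adjoint with respect to the correct fiber inner products on source and target, since adjoints are taken between different fibers.
\end{itemize}

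I expect no real obstacle. The only subtle step is the reindexing with the cocycle law, where the ordering of the composition must match the convention $T^V_{\gamma\eta,p}=T^V_{\gamma,\eta p}\circ T^V_{\eta,p}$.
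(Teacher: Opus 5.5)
Your proof is correct. The adjoint half is exactly the paper's argument: take Hilbert adjoints of \eqref{eq:sheaf-naturality}, use $(T^V_{\gamma,p})^*=(T^V_{\gamma,p})^{-1}$, then multiply by $T^V_{\gamma,p}$ on the left and $T^V_{\gamma,q}$ on the right.

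The existence half uses a different construction. The paper picks one representative $p_0$ per $G$-orbit of cells and averages an inner product on $V_{p_0}$ over the stabilizer $G_{p_0}$ only. It then transports that inner product to $V_{\gamma p_0}$ by declaring $T^V_{\gamma,p_0}$ an isometry. This needs a well-definedness check (independence of the chosen $\gamma$, via stabilizer invariance) before the cocycle identity yields the isometry property.

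You instead average over all of $G$ at every site at once. This is the Weyl trick applied to a block-diagonal inner product on $X$ under $\rho_X$, and it stays block-diagonal because $\rho_X$ permutes blocks. Your route needs no choice of representatives and no independence-of-$\gamma$ check. The cocycle identity enters only through the reindexing $\gamma\mapsto\gamma\eta$, which you carry out in the correct order.

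The paper's route, in exchange, shows the actual freedom in the choice. An invariant family of fiber metrics is equivalent to a $G_{p_0}$-invariant inner product on one fiber per orbit. This matches the representative-and-transport pattern used elsewhere in the paper, namely the orbit parametrization and the pointwise Reynolds layers.
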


{\bf Linear sheaf-type orbital layers.}
Let $W=\bigoplus_{q\in P}W_q$ be a $G$-equivariant output bundle.  Consider the strict upward relation
\[
\mathcal U^{\circ}:=\{(q,p):q<p\}
\]
and the strict downward relation
\[
\mathcal D^{\circ}:=\{(q,r):r<q\}.
\]
Both are unions of pair-orbits, and diagonal self terms are treated separately.  For an upward pair-orbit $\calO\subset\mathcal U^{\circ}$ with representative $(q_{\calO},p_{\calO})$, choose
\[
B^\uparrow_{\calO}\in\Hom(V_{q_{\calO}},W_{q_{\calO}})
\]
and set
\[
A^\uparrow_{\calO}:=B^\uparrow_{\calO}R_{p_{\calO}\to q_{\calO}}
\in\Hom(V_{p_{\calO}},W_{q_{\calO}}).
\]
We require $A^\uparrow_{\calO}$ to satisfy the stabilizer condition \eqref{eq:stabilizer-intertwiner}.  Then Proposition~\ref{prop:orbit-param} transports $A^\uparrow_{\calO}$ to all pairs in $\calO$, giving an equivariant upward sheaf block.
A simple sufficient way to ensure this condition is to choose $B^\uparrow_{\calO}$ as an intertwiner for the target fiber representation,
\[
T^W_{\eta,q_{\calO}}B^\uparrow_{\calO}
=B^\uparrow_{\calO}T^V_{\eta,q_{\calO}}
\qquad(\eta\in H_{\calO}).
\]
Indeed, since $\eta$ fixes both $q_{\calO}$ and $p_{\calO}$, naturality gives
$T^V_{\eta,q_{\calO}}R_{p_{\calO}\to q_{\calO}}
=R_{p_{\calO}\to q_{\calO}}T^V_{\eta,p_{\calO}}$, and hence
$T^W_{\eta,q_{\calO}}A^\uparrow_{\calO}
=A^\uparrow_{\calO}T^V_{\eta,p_{\calO}}$.  This sufficient condition is not necessary; the most general requirement is still the stabilizer condition on the composed block $A^\uparrow_{\calO}$.

For downward terms, use the $G$-invariant inner products from Lemma~\ref{lem:sheaf-adjoint-equivariance}, so that adjoints are compatible with transports by \eqref{eq:sheaf-adjoint-naturality}.  If $r\leq q$, then
\[
R^*_{q\to r}:V_r\to V_q.
\]
For a downward pair-orbit $\calO\subset\mathcal D^{\circ}$ with representative $(q_{\calO},r_{\calO})$, choose
\[
\begin{aligned}
B^\downarrow_{\calO}
&\in\Hom(V_{q_{\calO}},W_{q_{\calO}}),\\
A^\downarrow_{\calO}
&:=B^\downarrow_{\calO}R^*_{q_{\calO}\to r_{\calO}}\\
&\in\Hom(V_{r_{\calO}},W_{q_{\calO}}),
\end{aligned}
\]
again satisfying the stabilizer condition.  Self terms are handled by blocks $A^0_{\calO}\in\Hom(V_{q_{\calO}},W_{q_{\calO}})$ on pair-orbits contained in the diagonal.
Analogously, the stronger but constructive condition
\[
T^W_{\eta,q_{\calO}}B^\downarrow_{\calO}
=B^\downarrow_{\calO}T^V_{\eta,q_{\calO}}
\qquad(\eta\in H_{\calO})
\]
implies the required stabilizer condition for $A^\downarrow_{\calO}$, because \eqref{eq:sheaf-adjoint-naturality} gives
$T^V_{\eta,q_{\calO}}R^*_{q_{\calO}\to r_{\calO}}
=R^*_{q_{\calO}\to r_{\calO}}T^V_{\eta,r_{\calO}}$.

The resulting linear layer has the form
\begin{equation}
\label{eq:sheaf-orbit-linear}
x'_q=
K^0(q,q)x_q+
\sum_{\substack{p:q<p}}K^\uparrow(q,p)x_p+
\sum_{\substack{r:r<q}}K^\downarrow(q,r)x_r,
\end{equation}
where the kernels are transported from the orbit representatives as in \eqref{eq:orbit-transport}.  The following lemma gives the explicit transporter-law verification.

\begin{lemma}[Sheaf orbital transporter law]
\label{lem:sheaf-orbit-transporter}
The self, upward, and downward kernels in \eqref{eq:sheaf-orbit-linear}, transported from the orbit representatives as in \eqref{eq:orbit-transport} and satisfying the stabilizer conditions above, satisfy the transporter law \eqref{eq:transporter-law}.  Hence \eqref{eq:sheaf-orbit-linear} is order-equivariant.
\end{lemma}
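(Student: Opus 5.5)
The plan is to reduce everything to Proposition~\ref{prop:orbit-param}: the sheaf layer \eqref{eq:sheaf-orbit-linear} is, by construction, the linear map whose kernel is defined orbit-by-orbit via \eqref{eq:orbit-transport} from representative blocks $A^0_{\calO}$, $A^\uparrow_{\calO}$, $A^\downarrow_{\calO}$. It is set to zero on pair-orbits outside $\Delta_P\cup\mathcal U^{\circ}\cup\mathcal D^{\circ}$; by antisymmetry this union consists of the comparable pairs, and the three pieces are disjoint. So the first step checks that these three relations are $G$-invariant unions of pair-orbits. This follows because $G$ acts by order automorphisms, so $q<p$ iff $\gamma q<\gamma p$, and $q=p$ iff $\gamma q=\gamma p$. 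Consequently every pair-orbit lies in exactly one of $\Delta_P$, $\mathcal U^{\circ}$, $\mathcal D^{\circ}$, or the incomparable set. The kernel is therefore a single globally defined $K(q,p)$ obtained by choosing one block $A_{\calO}$ per orbit, with $A_{\calO}=0$ on incomparable orbits.

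The second step verifies the stabilizer condition \eqref{eq:stabilizer-intertwiner} for each chosen block. For self terms and for the zero blocks this is the standing requirement, or it is trivial. For upward and downward blocks it is the requirement stated in the construction. I will also record the sufficient-condition derivation already given in the text. That derivation uses naturality \eqref{eq:sheaf-naturality} restricted to $\eta\in H_{\calO}$, together with its adjoint form \eqref{eq:sheaf-adjoint-naturality} from Lemma~\ref{lem:sheaf-adjoint-equivariance}. Once the condition holds, Proposition~\ref{prop:orbit-param} gives that the transported kernel is well defined and satisfies \eqref{eq:transporter-law}. Proposition~\ref{prop:transporter-law} then yields order-equivariance.

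For explicitness, I would also verify the transporter law directly, which the lemma advertises. Take $(q,p)=\gamma(q_{\calO},p_{\calO})$ and $\tau\in G$. Then $(\tau q,\tau p)=(\tau\gamma)(q_{\calO},p_{\calO})$. The cocycle identity gives $T_{\tau\gamma,q_{\calO}}=T_{\tau,q}T_{\gamma,q_{\calO}}$, and the same for $p$. Hence
\[
K(\tau q,\tau p)T^V_{\tau,p}=T^W_{\tau,q}T^W_{\gamma,q_{\calO}}A_{\calO}(T^V_{\gamma,p_{\calO}})^{-1}(T^V_{\tau,p})^{-1}T^V_{\tau,p}=T^W_{\tau,q}K(q,p).
\]
Well-definedness is where the stabilizer condition is used. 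If $\gamma,\gamma'$ both send the representative to $(q,p)$, then $\eta=\gamma^{-1}\gamma'\in H_{\calO}$. Expanding with the cocycle identity, the two transported kernels agree exactly when $T^W_{\eta}A_{\calO}=A_{\calO}T^V_{\eta}$.

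The main obstacle is bookkeeping rather than any conceptual gap. Two points need care. First, the upward and downward blocks are defined through $R$ and $R^*$ at representatives only. The lemma therefore needs no pairwise compatibility between transported blocks and the sheaf maps at non-representative pairs, though naturality would in fact make them agree. Second, the transported blocks $K^\uparrow$, $K^\downarrow$ must not collide with the self blocks, and this is guaranteed by the disjointness of the three relations established in the first step.
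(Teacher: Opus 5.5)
Your proposal is correct and follows essentially the paper's route: the self terms are the diagonal case of Proposition~\ref{prop:orbit-param}, and the upward and downward terms are checked by the same cocycle computation with the composed transporter $\tau\gamma$, after which Proposition~\ref{prop:transporter-law} gives equivariance. Your extra steps only make explicit what the paper uses implicitly: that $\Delta_P$, $\mathcal U^{\circ}$, $\mathcal D^{\circ}$ are disjoint $G$-stable unions of pair-orbits, with zero blocks on incomparable orbits, and that the stabilizer condition is what makes the choice of transporter irrelevant.
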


{\bf Special shared-fiber case.}  In the permutation-only case, or more generally when the fibers are globally identified and the shared blocks $B_0,B_\uparrow,B_\downarrow$ are chosen to intertwine all relevant transports and stabilizer actions, the orbit-wise notation reduces to the familiar shared formula
\begin{equation}
\label{eq:sheaf-simple-linear}
x'_q=B_0x_q+
\sum_{p:q<p}B_\uparrow R_{p\to q}x_p+
\sum_{r:r<q}B_\downarrow R^*_{q\to r}x_r,
\end{equation}
where $B_\uparrow$ and $B_\downarrow$ act after the restriction or adjoint has landed in the fiber at $q$.  This is a special case of the orbit-wise transported layer in \eqref{eq:sheaf-orbit-linear}.  

{\bf Nonlinear sheaf aggregation.}
A nonlinear sheaf layer is obtained by applying the pair-orbit aggregation construction to the strict comparable-pair orbits in $\mathcal U^{\circ}$ and $\mathcal D^{\circ}$, together with a pointwise self term on the diagonal.  On an upward orbit, form the pair feature $R_{p\to q}x_p\in V_q$ in the auxiliary pair bundle with fiber $V_q$ over $(q,p)$; naturality \eqref{eq:sheaf-naturality} gives the required transporter law.  On a downward orbit, form $R^*_{q\to r}x_r\in V_q$; the adjoint naturality \eqref{eq:sheaf-adjoint-naturality} gives the corresponding transporter law.  The encoders are chosen invariant under the corresponding pair stabilizers, and the final cellwise readout is a pointwise Reynolds layer.  This gives an equivariant nonlinear sheaf-type OENN layer and correctly handles orientation signs or other nontrivial stabilizer actions without double-counting diagonal contributions.

{\bf UAT in the sheaf-indexed setting.}
By Theorem~\ref{thm:OENN-UAT}, the full OENN class on the face-poset of a fixed complex is dense in the continuous maps between total sheaf-valued feature spaces that are equivariant for the chosen lifted finite group action, uniformly on compact invariant sets.  
If the undirected Hasse graph of the face-poset is connected, Corollary~\ref{cor:cover-local-oenn-uat} compiles the broadcast used in the proof into finitely many cover-local up/down propagation layers on pair-states $(q,p)$, where $q$ is the carrier cell and $p$ is the source cell.  Consequently, pair-state cover-local sheaf-indexed OENNs are also universal, provided depth is allowed to scale at least with the Hasse-graph diameter and source-cell-labeled memory is allowed.  
Note that, without such source-aware global mixing, local sheaf diffusion or sheaf message-passing layers such as \eqref{eq:sheaf-simple-linear} remain equivariant but are not universal in general.

\section{Conclusion}
\label{sec:conclusion}

We have developed order-equivariant neural networks (OENNs) as a unified framework for learning on structured domains whose coordinates are organized not only by a set or a graph, but by an ordered incidence geometry.  We have shown that equivariant deep learning can be formulated in terms of equivariant bundles over posets: symmetries move sites, transports move local feature coordinates, and neural layers are precisely constrained by the resulting compatibility laws.  This viewpoint simultaneously recovers familiar graph message passing, higher-order vertex-edge interactions, and cellular or simplicial sheaf layers, while also identifying the strictly more expressive orbit-wise and pair-state constructions that are invisible in ordinary anonymous aggregation.

Beyond the unification of architectures, we have shown that full OENNs give universal approximation for continuous order-equivariant maps on compact invariant domains, while the local theory separates genuine locality from mere parameter tying: bounded-depth local message passing is not universally expressive in general, but source-aware pair-state propagation restores universality once information can traverse the Hasse graph.  Thus the framework supplies a principled methodology for choosing architectures: use coarse relation-message passing when the task only requires tied local aggregation, and use orbit-wise, sheaf-aware, or pair-state mechanisms when the task depends on incidence type, orientation, stabilizer actions, or long-range structured interactions.

The broader applicability of OENN is not limited to the examples treated explicitly here.  Many scientific and geometric data sets are naturally indexed by cells, regions, events, sensors, strata, or multiscale components equipped with incidence or restriction relations.  In such settings, the present theory suggests a systematic route from domain structure to architecture: specify the ordered or categorical index space, lift the relevant symmetries to feature fibers, and derive equivariant layers from transporter and stabilizer constraints rather than by ad hoc sharing rules.  

The appendix connects OENN with CENN (category-equivariant neural network)~\cite{Maruyama25CENN,Maruyama25HARRep,Maruyama25HARArch,Maruyama26CatReg,Maruyama26InfinityHAR,MaruyamaYasuda26Grothendieck}. This strand of research demonstrates how the extended equivariant models help to improve the experimental performance of classic equivariant models in concrete machine learning tasks, showing that geometric deep learning can be extended from groups of symmetries to categories of transformations (including non-invertible multi-object transformations).

\section*{Impact Statement}

This paper aims to advance symmetry-aware machine learning research.  There seem to be no societal consequences of this theoretical research on geometric and topological deep learning that would need to be specifically highlighted here.

\appendix

\section{Category-Equivariant Neural Networks and Their Relation with OENNs}
\label{app:cenn-oenn}

This appendix explains the framework of category-equivariant neural networks (CENNs) and clarifies how the OENN framework relates to it; experimental applications in the related literature are reviewed as well~\cite{Maruyama25CENN,Maruyama25HARRep,Maruyama25HARArch,Maruyama26CatReg,Maruyama26InfinityHAR,MaruyamaYasuda26Grothendieck}. CENN is concerned with categorical equivariance with respect to transformations beyond invertible symmetries (i.e., non-invertible symmetries on/between multiple objects). Non-invertible categorical symmetry is extensively studied in the recent development of mathematical and theoretical physics. CENN explores the same idea of non-invertible categorical symmetry in the context of geometric deep learning. 

In a group-equivariant network, one fixes representations $\rho_X,\rho_Y$ of a group $G$ and requires $F(\rho_X(g)x)=\rho_Y(g)F(x)$. Equivalently, the network commutes with all invertible changes of viewpoint encoded by the one-object category obtained from $G$.  CENN replaces this one-object group by a possibly multi-object category $\mathcal C$.  Its objects can represent typed feature sites, cells, regions, sensor states, local contexts, or levels of resolution, while its arrows can represent admissible transports, restrictions, incidence relations, causal updates, coarse-to-fine maps, or local symmetries.  Thus the arrows of $\mathcal C$ describe not only which symmetries should be respected, but also which transformations are allowed to move information between different feature types. The naturality condition gives the categorical generalization of equivariance. 

Formally, input and output feature spaces are organized as contravariant functors to the category $\mathbf{Meas}$ of measurable spaces and measurable maps  
\(
X,Y:\mathcal C^{\mathrm{op}}\longrightarrow \mathbf{Meas}
\)
or as functors to topological vector spaces equipped with their Borel structures.  
A category-equivariant layer is then a family of maps $F_a:X(a)\to Y(a)$ satisfying the naturality equation
\(
Y(u)\circ F_a=F_b\circ X(u)
\)
where $u:b\to a$. 
This is the categorical extension of equivariance.  Note that $u$ need not be invertible and the two ends of $u$ need not have the same type.  Consequently, the same equation can express ordinary group equivariance, compatibility with order restrictions, invariance under irreversible time updates, sheaf-like local-to-global consistency, and orbitwise consistency for local symmetries.

This viewpoint gives a unified language for the non-invertible and multi-object transformations listed in Table~\ref{tab:cat-equivariance}~\cite{Maruyama25CENN}.  Since arrows in a category compose, categorical equivariance is not merely a list of separate parameter-sharing constraints.  The constraints must be compatible with categorical composition.  For example, if two restrictions or transports can be composed in the data domain, then the corresponding neural maps must compose coherently as well.  This functorial compatibility is what distinguishes categorical equivariance from an ad hoc collection of tied weights, masks, or pooling rules.
\begin{table}[!h]
\centering
\caption{Categorical equivariance across domains \cite{Maruyama25CENN}.}
\renewcommand{\arraystretch}{1.15}
\begin{tabular}{@{}lll@{}}
\hline
\textbf{Category} & \textbf{Arrows encode} & \textbf{Equivariance enforces}\\
\hline
Group & invertible sym. & global sym. consistency\\
Monoid & causal update & time consistency\\
Poset &  hierarchical rel. & hierarchical consistency\\
Lattice & logical relation & logical consistency\\
Graph & adjacency rel. & message-passing consist.\\
Sheaf & spatial relation & local-to-global consist.\\
Groupoid & local symmetry & orbitwise consistency\\
Gen.\ cat. & typed process & compositional consist.\\
\hline
\end{tabular}
\label{tab:cat-equivariance}
\end{table}

Beyond single types of categorical symmetry such as groups, graphs, sheaves and logics (lattices), CENN can accommodate \emph{compositional symmetries}, such as Group$\times$Poset (product category), and \emph{contextual symmetries}, such as $\int_{c\in\mathcal C}F(c)$ (Grothendieck construction), which integrates local symmetries $F(c)$ over the base context category $\mathcal C$.  For instance, a signal may carry both a global group symmetry and a hierarchical incidence structure represented by a poset; categorically, this can be modeled by the product category of the group and the poset (where both are seen as categories). Moreover, local symmetry types may vary over a base context category $\mathcal C$.  Let $F(c)$ denote the local category of transformations available at context $c$. Then, the Grothendieck construction $$\int_{c\in\mathcal C}F(c)$$ assembles these varying local symmetries into one category.  This can be regarded as the categorical extension of semidirect product of groups: instead of one group acting uniformly everywhere, the available transformations can depend on object type, context, or stratum, while still composing coherently.

In the following subsections of the appendix, we give a more formal account of CENN and analyze the relation between CENN and OENN. We show, in particular, that completed OENN input-output maps are embedded into CENN through the action groupoid of the chosen symmetry action, while hidden pair-state and branch constructions may use auxiliary action groupoids built from the same action.  Note that they are not embedded through the thin poset category alone.  The poset order supplies incidence, masks, covers, and sheaf-type restriction structure; the equivariance in OENN is supplied by the group acting by order automorphisms.

\subsection{CENN basics}
\label{app:cenn-basics}

We first explain the basics of the CENN formalism.  A category serves as an indexing device for the transformations under which a network must be equivariant.  Objects are feature sites or feature types, and arrows are admissible changes of viewpoint, restrictions, transports, or symmetries.  

{\bf Categorical index spaces.}
Formally, a small category $\mathcal C$ consists of a set of objects $\operatorname{Ob}(\mathcal C)$, arrow sets $\Hom_{\mathcal C}(b,a)$, identity arrows $\id_a:a\to a$, and an associative composition law.  We write
\[
s(u)=b,
\qquad
 t(u)=a
\]
for the source and target of an arrow $u:b\to a$.  If $v:c\to b$ and $u:b\to a$, then $u\circ v:c\to a$.

Three basic examples are useful.
\begin{enumerate}[leftmargin=1.5em,itemsep=2pt,topsep=2pt]
\item A group $G$ is a one-object category $BG$ with $\Hom(\ast,\ast)=G$; naturality over $BG$ is ordinary group equivariance.
\item A poset $(P,\leq)$ is a thin category with one arrow $p\to q$ iff $p\leq q$; naturality over it is compatibility with order-restriction maps.
\item If $G$ acts on a set $S$, the action groupoid $G\ltimes S$ has objects $s\in S$ and arrows $s\to\gamma s$ labeled by $\gamma\in G$; naturality over it is equivariance under the action of $G$.
\end{enumerate}

To introduce categorical convolutional layers, we also need topology and measures on arrows.  A measured topological category is a small category such that each hom-set $\Hom_{\mathcal C}(b,a)$ is a second-countable locally compact Hausdorff space, composition is continuous, and each hom-set carries a $\sigma$-finite Radon measure $\mu_{b,a}$.  We additionally impose the local countability/Radon condition needed by the category-convolution formula we use below: for each target object $a$, the set of sources $b$ with nonzero incoming measure is countable, and the incoming-arrow space
\begin{equation}
\label{eq:app-cenn-incoming-space}
I(a):=\bigsqcup_{b\in\operatorname{Ob}(\mathcal C)}\Hom_{\mathcal C}(b,a),
\qquad
\mu_a:=\bigoplus_b\mu_{b,a}
\end{equation}
with the disjoint-union topology is itself second-countable locally compact Hausdorff and carries the $\sigma$-finite Radon measure $\mu_a$.  Equivalently, one may take the measurable topological spaces $(I(a),\mu_a)$ as part of the CENN data, with the displayed hom-set restrictions.  Thus $I(a)$ is the space of all arrows through which the $a$-component of a layer can read information.  In finite specializations, including the OENN specialization below, the topology is discrete, $\mu_{b,a}$ is counting measure, and integrals over $I(a)$ are finite sums.

{\bf Feature functors.}
Let $\mathbf{Meas}$ denote the category of measurable spaces and measurable maps.  
A CENN feature space is a contravariant functor
\[
Z:\mathcal C^{\mathrm{op}}\longrightarrow \mathbf{Meas}.
\]
Thus each object $a$ carries a measurable space $Z(a)$, and each arrow $u:b\to a$ induces a measurable pullback/transport
\[
Z(u):Z(a)\longrightarrow Z(b),
\]
satisfying
\[
Z(\id_a)=\id_{Z(a)},
\qquad
Z(u\circ v)=Z(v)\circ Z(u)
\]
for $v:c\to b$ and $u:b\to a$.   In the analytic layers below, each $Z(a)$ is the Borel measurable space underlying a topological vector space and each structural transport $Z(u)$ is continuous linear.  The contravariant convention means that an arrow into $a$ lets a field on $a$ be restricted or transported to the arrow source.

The analytic CENN model realizes these measurable spaces as Borel spaces underlying topological vector spaces of continuous local fields.  For every object $a$, choose a compact base space $\Omega(a)$ and a finite-dimensional real fiber $E_Z(a)$, and set
\begin{equation}
\label{eq:app-cenn-field-space}
Z(a):=C(\Omega(a),E_Z(a)).
\end{equation}
We equip this field space with the compact-open topology, equivalently the sup-norm topology in the present compact finite-dimensional setting, and with its Borel $\sigma$-algebra.
For each arrow $u:b\to a$, choose continuous base maps
\[
\pi_u:\Omega(b)\to\Omega(a),
\qquad
\tau_u:\Omega(a)\to\Omega(b),
\]
and a linear fiber transport
\[
L^Z_u:E_Z(a)\to E_Z(b).
\]
Here $\pi_u$ defines the pullback of fields, while $\tau_u$ is the sampling map used by the convolutional kernel.  Functoriality requires, for $v:c\to b$ and $u:b\to a$,
\[
\pi_{u\circ v}=\pi_u\circ\pi_v,
\qquad
\tau_{u\circ v}=\tau_v\circ\tau_u,
\qquad
L^Z_{u\circ v}=L^Z_v\circ L^Z_u,
\]
with identity maps on identity arrows.  The induced action on local fields is
\begin{equation}
\label{eq:app-cenn-functor-action}
Z(u)h:=L^Z_u\circ h\circ\pi_u
\in C(\Omega(b),E_Z(b)),
\quad h\in Z(a).
\end{equation}
When every base is a point, $\Omega(a)=\{\ast\}$, this reduces to the finite-dimensional point-base case
\[
Z(a)\cong E_Z(a),
\qquad
Z(u)=L^Z_u:E_Z(a)\to E_Z(b).
\]
This is the case used to embed OENN below, with finite-dimensional vector spaces understood as Borel measurable spaces.

{\bf Equivariance as naturality.}
Let $X,Y:\mathcal C^{\mathrm{op}}\to\mathbf{Meas}$ be feature functors whose object spaces also carry the topologies used for approximation.

\begin{definition}[Category-equivariant maps]
\label{def:app-cenn-nat}
A \emph{continuous category-equivariant map} from $X$ to $Y$ is a natural transformation in $\mathbf{Meas}$
\[
\Phi:X\Rightarrow Y,
\qquad
\Phi=\{\Phi_a:X(a)\to Y(a)\}_{a\in\operatorname{Ob}(\mathcal C)},
\]
whose components are continuous maps.  Equivalently, every $\Phi_a$ is measurable and continuous, and for every arrow $u:b\to a$, the square
\[
\begin{array}{ccc}
X(a) & \xrightarrow{\Phi_a} & Y(a)\\
\scriptstyle X(u)\downarrow && \downarrow\scriptstyle Y(u)\\
X(b) & \xrightarrow{\Phi_b} & Y(b)
\end{array}
\]
commutes.  Written as an equation,
\begin{equation}
\label{eq:app-cenn-naturality}
Y(u)\circ\Phi_a=\Phi_b\circ X(u).
\end{equation}
No linearity of the components $\Phi_a$ is assumed.  We write $\operatorname{EqvCont}(X,Y)$ for the space of such natural transformations in $\mathbf{Meas}$ with continuous components.
\end{definition}

The approximation topology on $\operatorname{EqvCont}(X,Y)$ is the compact-open finite-object topology.  In the normed local-field setting used for the UAT, a basic seminorm is specified by a finite set $F\subset\operatorname{Ob}(\mathcal C)$ and compact sets $K_a\subset X(a)$:
\begin{equation}
\label{eq:app-cenn-compact-open-seminorm}
\|\Phi\|_{F,(K_a)}:=
\max_{a\in F}\sup_{x\in K_a}\|\Phi_a(x)\|_\infty .
\end{equation}
Approximation therefore means uniform approximation on finitely many objects and compact subsets of their feature spaces.

This definition contains the usual equivariance notions.  If $\mathcal C=BG$ has one object and arrows a group $G$, then a contravariant functor is a representation up to the inverse convention, and \eqref{eq:app-cenn-naturality} is the usual equation 
$$F(\rho_X(g)x)=\rho_Y(g)F(x).$$  
If $\mathcal C$ is a thin poset category, naturality is compatibility with order arrows.  If $\mathcal C$ is an incidence or face category, naturality is sheaf-like compatibility with incidence restrictions.

{\bf Categorical convolution.}
The basic affine CENN layer is a convolution over incoming arrows.  Let $Z,Z':\mathcal C^{\mathrm{op}}\to\mathbf{Meas}$ be field-type feature functors represented by local vector fields as above, with continuous linear structural transports.  A category kernel assigns, for every incoming arrow $u:b\to a$ and point $y\in\Omega(a)$, a linear map
\[
\mathsf K(u,y):E_Z(b)\to E_{Z'}(a).
\]
Assume the standard regularity needed for the integral below: measurability in $u$, continuity in $y$, and an $L^1$ bound over $I(a)$ on compact parameter ranges.  A bias is a family $\beta_a\in Z'(a)$ satisfying
\[
Z'(u)\beta_a=\beta_b
\qquad(u:b\to a).
\]
For $z\in Z(a)$, define
\begin{equation}
\label{eq:app-cenn-convolution}
\begin{aligned}
\bigl((\widetilde L_{\mathsf K})_az\bigr)(y)
&:=\beta_a(y)\\
&\quad+\int_{I(a)}\mathsf K(u,y)
\bigl(Z(u)z\bigr)(\tau_u y)\,d\mu_a(u).
\end{aligned}
\end{equation}
Here $u:b\to a$, so $Z(u)z\in Z(b)$ and $(Z(u)z)(\tau_u y)\in E_Z(b)$, exactly the domain of $\mathsf K(u,y)$.

The kernel is called natural when the affine family is a continuous natural transformation $Z\Rightarrow Z'$ in $\mathbf{Meas}$.  Equivalently, for every arrow $w:a\to c$,
\begin{equation}
\label{eq:app-cenn-conv-natural}
Z'(w)(\widetilde L_{\mathsf K})_c
=
(\widetilde L_{\mathsf K})_aZ(w).
\end{equation}
Written pointwise for $z\in Z(c)$ and $y\in\Omega(a)$, the non-bias terms satisfy
\[
\begin{aligned}
&L^{Z'}_w\!\int_{I(c)}\!\mathsf K(u',\pi_wy)
\bigl(Z(u')z\bigr)(\tau_{u'}\pi_wy)\,d\mu_c(u')
\\
&\qquad=
\int_{I(a)}\!\mathsf K(u,y)
\bigl(Z(w\circ u)z\bigr)(\tau_uy)\,d\mu_a(u).
\end{aligned}
\]
For finite point-base categories, \eqref{eq:app-cenn-convolution} becomes the finite sum
\begin{equation}
\label{eq:app-cenn-finite-conv}
(\widetilde L_{\mathsf K})_az
=
\beta_a+\sum_{u:b\to a}\mathsf K(u)Z(u)z.
\end{equation}
Thus CENN convolution is the categorical analog of a tied-weight equivariant affine layer.

{\bf Natural nonlinearities.}
Nonlinearities must also commute with the category action.  Let $S:\mathcal C^{\mathrm{op}}\to\mathbf{Meas}$ be the scalar feature functor
\[
S(a)=C(\Omega(a),\R),
\qquad
S(u)r=r\circ\pi_u,
\]
equipped with the compact-open Borel measurable structure.  A scalar channel is a natural transformation $s:Z\Rightarrow S$ in $\mathbf{Meas}$ whose components are continuous.  Given a scalar activation $\alpha:\R\to\R$, the associated scalar-gated nonlinearity is
\begin{equation}
\label{eq:app-cenn-gate}
\bigl(\Sigma^{\alpha,s}_a z\bigr)(y)
=
\alpha\bigl(s_a(z)(y)\bigr)z(y).
\end{equation}
Naturality of $s$ implies naturality of $\Sigma^{\alpha,s}$.  With the trivial scalar functor $S$ above, the usual coordinatewise activation is recovered from scalar gates only when the relevant coordinate projection is itself a natural scalar channel.  In the finite point-base case this means that the coordinate is fixed by every stabilizer element acting on its fiber.  For a finite-dimensional trivial scalar coordinate, adding a constant channel gives
\[
r\longmapsto (1,r_j)
\longmapsto \alpha(r_j)(1,r_j),
\]
whose first component is $\alpha(r_j)$.

On a nontrivial permutation representation, an individual coordinate projection is generally not natural as a map to the trivial scalar functor: if a stabilizer element sends coordinate $j$ to coordinate $k$, naturality would force $r_j=r_k$ for all vectors $r$.  Thus scalar gates into $S$ do not by themselves implement unrestricted coordinatewise activations on permutation branches.  Coordinatewise activations on equivariant collections of scalar slots must instead be typed as natural maps for the corresponding permutation-scalar functor, or else be placed inside the branch MLP and equivariantized by the Reynolds construction used for OENN.  For arbitrary nontrivial stabilizer representations, raw coordinate projections are generally not natural, and coordinatewise activations need not commute with transport; this is why the OENN construction uses natural scalar channels or Reynolds-equivariant pointwise blocks rather than unrestricted coordinatewise nonlinearities.

\begin{definition}[CENN]
\label{def:app-cenn}
Fix a continuous non-polynomial scalar activation $\alpha$; in the general CENN UAT, it is also assumed globally Lipschitz.  A \emph{category-equivariant neural network} from $X$ to $Y$ is a finite composition of continuous natural layers in $\mathbf{Meas}$ generated by category-convolutions, scalar-gated nonlinearities, finite direct sums/parallel channels, and arrow-bundle lift/convolution/compilation layers~\cite{Maruyama25CENN}.  
The resulting class is denoted
\[
\mathsf{CENN}_\alpha(X,Y).
\]
\end{definition}

The detailed definitions of these arrow-bundle generators are in~\cite{Maruyama25CENN}.  In the finite action-groupoid specialization used below, only their structural consequences are needed: finite direct sums of transported arrow samples, finite CENN convolutions on those arrow-indexed bundles, and finite equivariant averages that compile the arrow-bundle features back to the target functor.

Because natural transformations in $\mathbf{Meas}$ with continuous components are closed under composition and finite vector-space direct sums equipped with their Borel $\sigma$-algebras,
\[
\mathsf{CENN}_\alpha(X,Y)\subseteq \operatorname{EqvCont}(X,Y).
\]

{\bf Equivariant universal approximation.}
The CENN UAT of~\cite{Maruyama25CENN} says that, under mild analytic hypotheses on the category, the converse inclusion holds after closure.  The hypotheses have the following operational meaning.
\begin{enumerate}[leftmargin=1.5em,itemsep=2pt,topsep=2pt]
\item \emph{Approximate identities:} kernels on $I(a)$ can localize near a chosen incoming arrow.  In finite categories these are Dirac masses $\delta_u$.
\item \emph{Arrow-probe separation:} the transported samples appearing in \eqref{eq:app-cenn-convolution} separate points on the compact sets being approximated.  In finite point-base categories, identity arrows recover the original features, so separation is automatic.
\item \emph{Equivariant compilation:} approximants constructed on arrow-bundle features can be reassembled into a natural transformation with values in the target functor.  In finite groupoids, this is finite averaging over the relevant arrows or stabilizers.
\end{enumerate}

\begin{theorem}[CENN UAT]
\label{thm:app-cenn-uat}
Assume the standard CENN hypotheses above, with $X$ and $Y$ field-type $\mathbf{Meas}$-valued functors whose object spaces carry the topologies and norms used in the compact-open seminorms.  Let $\alpha:\R\to\R$ be continuous, non-polynomial, and globally Lipschitz.  Then
\[
\overline{\mathsf{CENN}_\alpha(X,Y)}
=
\operatorname{EqvCont}(X,Y)
\]
in the compact-open finite-object topology.  Equivalently, for every $\Phi\in\operatorname{EqvCont}(X,Y)$, every finite $F\subset\operatorname{Ob}(\mathcal C)$, every compact $K_a\subset X(a)$ for $a\in F$, and every $\varepsilon>0$, there exists $\Psi\in\mathsf{CENN}_\alpha(X,Y)$ such that
\[
\max_{a\in F}\sup_{x\in K_a}
\|\Phi_a(x)-\Psi_a(x)\|_\infty<\varepsilon.
\]
\end{theorem}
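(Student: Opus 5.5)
The inclusion $\mathsf{CENN}_\alpha(X,Y)\subseteq\operatorname{EqvCont}(X,Y)$ was noted just before the statement, so only density has to be shown. I would follow the three-stage pattern suggested by the three hypotheses: lift, approximate, compile. This is the categorical analogue of the proof of Theorem~\ref{thm:OENN-UAT}, which broadcasts, approximates with an ordinary MLP, and then equivariantizes with Reynolds averaging (Lemma~\ref{lem:finite-group-density}).

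Fix a finite set $F\subset\operatorname{Ob}(\mathcal C)$, compact sets $K_a\subset X(a)$ for $a\in F$, a target $\Phi\in\operatorname{EqvCont}(X,Y)$, and $\varepsilon>0$.

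\textbf{Step 1 (lift).} I would first use category-convolutions whose kernels are approximate identities (hypothesis~1) to build an arrow-bundle lift. At each object $a$ and base point $y\in\Omega(a)$, this lift records the transported samples $(X(u)x)(\tau_u y)$ for incoming arrows $u\in I(a)$. In the finite point-base case these samples are exact Dirac evaluations.

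By arrow-probe separation (hypothesis~2) and compactness of $K_a$, finitely many such probes give a continuous map $x\mapsto\Pi_a(x)\in\R^{N}$. I would check that this map is injective on $K_a$, and hence a homeomorphism onto its compact image. Consequently $\Phi_a\circ\Pi_a^{-1}$ is continuous on $\Pi_a(K_a)$.

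\textbf{Step 2 (approximate).} Next I would invoke the classical non-polynomial MLP universal approximation theorem for $\alpha$. This gives an ordinary MLP approximating $\Phi_a\circ\Pi_a^{-1}$ uniformly on a slightly enlarged compact neighbourhood of $\Pi_a(K_a)$, to within $\varepsilon/3$.

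The enlargement is needed because the probes in Step~1 are only approximate identities in the non-finite case. Global Lipschitzness of $\alpha$ controls how the probe error propagates through the finitely many MLP layers: each layer is Lipschitz, so the composite is Lipschitz with an explicit constant. I would therefore choose the probe accuracy after fixing the MLP, so that this propagated error is below $\varepsilon/3$. The coordinatewise activations inside the MLP are legitimate CENN operations, because they act on arrow-indexed scalar slots typed as permutation-scalar functors, or are wrapped by scalar gates via the constant-channel trick described before Definition~\ref{def:app-cenn}.

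\textbf{Step 3 (compile).} Finally I would apply the equivariant compilation layers (hypothesis~3) to reassemble the arrow-bundle approximants into a natural transformation $\Psi:X\Rightarrow Y$. In finite groupoids this is averaging over arrows or stabilizers, exactly as in the Reynolds block of Lemma~\ref{lem:reynolds-realization}.

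The key estimate is that compilation is an averaging operator of norm at most one that fixes natural maps. Since $\Phi$ is already natural, it is reproduced by compilation, so the error after compilation is bounded by the pre-compilation error plus the remaining $\varepsilon/3$.

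\textbf{Main obstacle.} I expect Step~3 to be the hard part for genuinely non-invertible arrows. There is then no group to average over, and naturality along a non-invertible $u:b\to a$ couples $\Psi_b$ on $X(u)(K_a)$, which need not lie in $K_b$, with $\Psi_a$. I would first try to reduce to the saturated compacta $\widetilde K_b:=K_b\cup\bigcup_{a\in F,\,u:b\to a}X(u)K_a$, restricted to the finitely many objects actually reached by the compiled layers. The plan is to define $\Psi_b$ by transporting the approximant built at $a$ along $u$ and to verify that the compilation hypothesis makes these transports consistent. If this consistency cannot be obtained in general, I would fall back on treating hypothesis~3 exactly as it is stated: as the assertion that this reassembly exists and is norm-controlled, with the concrete work done only in the finite groupoid case.
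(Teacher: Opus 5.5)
The paper gives no proof of this theorem to compare against. It is quoted from \cite{Maruyama25CENN}, the three hypotheses are described only operationally, and the only instance the paper uses (Corollary~\ref{cor:app-cenn-oenn-uat}) is the finite point-base action groupoid. Your lift--approximate--compile outline matches that description. In the finite groupoid case it is essentially the paper's proof of Theorem~\ref{thm:OENN-UAT} via Lemma~\ref{lem:finite-group-density}, provided you run Step~2 on the finite, hence compact, saturations of the $K_a$. You must also absorb $\max_u\|Y(u)^{-1}\|_{\mathrm{op}}$ into the tolerance, since $\|\cdot\|_\infty$ need not be transport-invariant. Your ordering in Step~2 (fix probes, fit the MLP, then choose approximate-identity accuracy using global Lipschitzness) is a sensible use of that hypothesis. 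For the general field-type statement, however, two steps fail.

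First, the injectivity claim in Step~1 is false in general. If $\Omega(a)$ is infinite (say $[0,1]$), then $X(a)=C(\Omega(a),E_X(a))$ contains compact sets homeomorphic to the Hilbert cube. An example is $\{\sum_n c_n2^{-n}\phi_n : c_n\in[0,1]\}$ for disjointly supported bumps $\phi_n$ of sup-norm one. Such a set admits no continuous injection into any $\R^N$, so no finite family of probes makes $\Pi_a^{-1}$ exist.

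What separation and compactness actually give is an \emph{approximate} factorization. Compactness of $\{(x,x')\in K_a^2:\|x-x'\|_\infty\ge\delta\}$ yields finitely many probes $\Pi$ and $\eta>0$ with $\|\Pi(x)-\Pi(x')\|<\eta\Rightarrow\|x-x'\|_\infty<\delta$. Uniform continuity of $\Phi_a$ plus a partition of unity then give a continuous $g$ with $\sup_{K_a}\|g\circ\Pi-\Phi_a\|<\varepsilon$.

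Moreover, $\Phi_a(x)$ is itself a field on $\Omega(a)$. By \eqref{eq:app-cenn-convolution}, the output at $y$ can read only the samples $(X(u)x)(\tau_u y)$ at that $y$. So the reduction has to be made uniformly in $y$: you approximate $(y,\Pi_y(x))\mapsto\Phi_a(x)(y)$, with $y$-dependent probes $\Pi_y$, on a compact subset of $\Omega(a)\times\R^N$. A single MLP with values in $Y(a)$ does not do this.

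Second, Step~3 carries the real content of the theorem, and your proposal assumes it rather than proves it. ``Norm at most one and fixes natural maps'' is a Reynolds-operator fact with three requirements:
\begin{itemize}
\item invertible arrows;
\item a normalized (finite or compact) average;
\item pre-compilation accuracy on every transported copy $X(u)K_a$.
\end{itemize}
The third requirement already arises in the groupoid case, so your saturation $\widetilde K_b$ is needed there, not only for non-invertible arrows. That union need not even be compact when hom-sets are non-compact.

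For non-invertible arrows there is no such projector onto natural transformations. Defining $\Psi_b$ on $X(u)(K_a)$ by transport along $u$ is not guaranteed to be consistent across different arrows, and it is not a composition of CENN generators. Your fallback reads hypothesis~3 as saying reassembly is norm-controlled. As stated, it only says approximants ``can be reassembled into a natural transformation''; that the $\varepsilon$-error survives reassembly is exactly what must be proved. As it stands, your proposal establishes the finite point-base groupoid case the paper relies on, not the general theorem.
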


For the reader mainly interested in OENN, the finite specialization is the essential case: $\mathcal C$ is a finite discrete action groupoid, all bases are points, all integrals are sums, approximate identities are point masses, arrow probes separate because identity arrows are present, and equivariant compilation is finite groupoid averaging.  The next subsection makes this specialization explicit.

\subsection{Relationship with OENNs}
\label{app:oenn-as-cenn}

We now identify completed OENN maps over $P$ as finite action-groupoid, point-base specializations of CENN; auxiliary OENN states are represented over the action groupoids of their corresponding finite $G$-sets.  Let $(P,\leq)$ be a finite poset, let $G$ be a finite group acting on $P$ by order automorphisms, and let
\[
X_{\mathrm{tot}}=\bigoplus_{p\in P}V_p,
\qquad
Y_{\mathrm{tot}}=\bigoplus_{q\in P}W_q
\]
be OENN input and output bundles with transports $T^V_{\gamma,p}$ and $T^W_{\gamma,q}$ as in Definition~\ref{def:bundle}.

Define the action groupoid
\[
\mathcal A:=G\ltimes P.
\]
Its objects are elements of $P$, and an arrow
\[
(\gamma,p):p\longrightarrow \gamma p
\]
is specified by $\gamma\in G$ and $p\in P$.  Composition is
\[
(\eta,\gamma p)\circ(\gamma,p)=(\eta\gamma,p).
\]
This is a finite discrete category, equipped with counting measure.  The ordinary OENN bundle $V$ is equivalently the point-base CENN feature functor
\[
\begin{aligned}
\mathcal V&:\mathcal A^{\mathrm{op}}\to\mathbf{Meas},\\
\mathcal V(p)&=V_p,\\
\mathcal V(\gamma,p)&=(T^V_{\gamma,p})^{-1}:V_{\gamma p}\to V_p.
\end{aligned}
\]
Here each finite-dimensional vector space carries its Borel measurable structure, and the transport maps are measurable linear isomorphisms.  Contravariant functoriality is exactly the cocycle identity for the transports.  The same construction gives a functor $\mathcal W$ from the output bundle.  This objectwise bundle functor is useful, but it is not yet the correct encoding of an arbitrary OENN map $X_{\mathrm{tot}}\to Y_{\mathrm{tot}}$, because an OENN output at a site $q$ may depend on all input sites, not only on $V_q$.

For maps between total feature spaces, define instead the \emph{global-input} functor
\[
\begin{aligned}
\mathfrak X&:\mathcal A^{\mathrm{op}}\to\mathbf{Meas},\\
\mathfrak X(q)&:=X_{\mathrm{tot}},\\
\mathfrak X(\gamma,q)&:=\rho_X(\gamma^{-1}):
X_{\mathrm{tot}}\to X_{\mathrm{tot}},
\end{aligned}
\]
and the site-output functor
\[
\begin{aligned}
\mathfrak Y&:\mathcal A^{\mathrm{op}}\to\mathbf{Meas},\\
\mathfrak Y(q)&:=W_q,\\
\mathfrak Y(\gamma,q)&:=(T^W_{\gamma,q})^{-1}:W_{\gamma q}\to W_q.
\end{aligned}
\]
Here $(\gamma,q):q\to\gamma q$ is an arrow of $\mathcal A$.  Point bases are understood, so $\mathfrak X(q)$ and $\mathfrak Y(q)$ are finite-dimensional vector spaces equipped with their Borel measurable structures, rather than spaces of nontrivial fields; their structural maps are continuous linear maps.

These global-input/site-output functors classify a completed map from a full source state to sitewise outputs.  They are not meant to type an arbitrary OENN stack as a single composable chain of CENN natural transformations: after such a completed map, the codomain object at $q$ is $W_q$, whereas the next OENN layer would again read a full hidden total state.  For the layerwise formulation, let $H=\bigoplus_{s\in S}H_s$ be an intermediate equivariant bundle over a finite $G$-set $S$, with total action $\rho_H$, and let $H'=\bigoplus_{t\in T}H'_t$ be the next target bundle over a finite $G$-set $T$.  On the target action groupoid $G\ltimes T$, use the total-source functor
\[
\underline H(t):=H_{\mathrm{tot}},
\qquad
\underline H(\gamma,t):=\rho_H(\gamma^{-1}):H_{\mathrm{tot}}\to H_{\mathrm{tot}},
\]
together with the site-output functor
\[
\mathfrak H'(t):=H'_t,
\qquad
\mathfrak H'(\gamma,t):=(T^{H'}_{\gamma,t})^{-1}:H'_{\gamma t}\to H'_t.
\]
Thus an OENN layer $H_{\mathrm{tot}}\to H'_{\mathrm{tot}}$ is represented by a natural family $\underline H\Rightarrow\mathfrak H'$, and layers are composed after assembling these families into maps between total feature spaces.

For a compact $G$-invariant set $K\subset X_{\mathrm{tot}}$, let
\[
\operatorname{Nat}_K(\mathfrak X,\mathfrak Y)
\]
denote families of continuous maps $\Phi_q:K\to W_q$ satisfying the naturality equation on $K$.

\begin{proposition}[OENN equivariance as CENN naturality]
\label{prop:app-oenn-nat}
The assignment
\[
F\longmapsto \Phi^F,
\qquad
\Phi^F_q(x):=F(x)_q,
\]
defines a canonical linear bijection
\[
C_G(K,Y_{\mathrm{tot}})
\cong
\operatorname{Nat}_K(\mathfrak X,\mathfrak Y).
\]
Under this bijection, the uniform norm on $C_G(K,Y_{\mathrm{tot}})$ induced by the product max norm on $Y_{\mathrm{tot}}$ is exactly the compact-open finite-object seminorm with $F=P$ and $K_q=K$ for all $q\in P$.
\end{proposition}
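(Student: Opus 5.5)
The plan is to unwind the naturality equation \eqref{eq:app-cenn-naturality} for the functors $\mathfrak X,\mathfrak Y$ on the action groupoid $\mathcal A=G\ltimes P$ and show that, arrow by arrow, it is precisely the equivariance identity \eqref{eq:equivariance} read off at one site. Concretely, fix an arrow $u=(\gamma,q):q\to\gamma q$. Naturality of a family $\Phi=\{\Phi_q\}$ means $\mathfrak Y(u)\circ\Phi_{\gamma q}=\Phi_q\circ\mathfrak X(u)$ on $K$. Substituting the definitions, this becomes
\[
(T^W_{\gamma,q})^{-1}\Phi_{\gamma q}(x)=\Phi_q\bigl(\rho_X(\gamma^{-1})x\bigr)\qquad(x\in K).
\]
Replacing $x$ by $\rho_X(\gamma)x$, which is allowed because $K$ is $G$-invariant, gives the equivalent form $\Phi_{\gamma q}(\rho_X(\gamma)x)=T^W_{\gamma,q}\Phi_q(x)$.

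Next I compare this with equivariance of $F$, using \eqref{eq:rhoX-def} for $\rho_Y$. Evaluating $F(\rho_X(\gamma)x)=\rho_Y(\gamma)F(x)$ at the coordinate $\gamma q$ yields $F(\rho_X(\gamma)x)_{\gamma q}=T^W_{\gamma,q}F(x)_q$. This is exactly the displayed naturality condition for $\Phi^F_q=F(\cdot)_q$. As $(\gamma,q)$ ranges over all arrows of $\mathcal A$, the target sites $\gamma q$ and all $\gamma$ are covered. Hence the full family of naturality squares is equivalent to the full equivariance identity, compared coordinate by coordinate. In particular, $\Phi^F$ is natural whenever $F\in C_G(K,Y_{\mathrm{tot}})$.

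For the bijection, the inverse sends $\Phi$ to $F^\Phi(x):=(\Phi_q(x))_{q\in P}$. Continuity is preserved in both directions, because each component map is continuous exactly when the assembled map into the finite direct sum is. Linearity of $F\mapsto\Phi^F$ and the fact that the two assignments are mutually inverse are immediate, since both operations only read or assemble coordinates. Canonicity holds because no choices are involved.

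For the norm statement, take the product max norm $\|y\|=\max_q\|y_q\|$ on $Y_{\mathrm{tot}}$ and the seminorm \eqref{eq:app-cenn-compact-open-seminorm} with finite object set $P$ and $K_q=K$. Exchanging the two suprema gives
\[
\sup_{x\in K}\max_q\|F(x)_q\|=\max_q\sup_{x\in K}\|\Phi_q(x)\|.
\]
There is no real obstacle in this proof. The only delicate point is bookkeeping of the conventions: the contravariance of $\mathfrak X,\mathfrak Y$, the inverse $\gamma^{-1}$ in $\mathfrak X(\gamma,q)$, and the direction of the arrow. I would do the substitution $x\mapsto\rho_X(\gamma)x$ explicitly to avoid sign or inverse slips, and also check functoriality, e.g.\ $\mathfrak X(u\circ v)=\mathfrak X(v)\mathfrak X(u)$ from $\rho_X$ being a homomorphism.
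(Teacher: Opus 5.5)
Your proposal is correct and follows essentially the same route as the paper. You unwind the naturality square for the arrow $(\gamma,q):q\to\gamma q$, use $G$-invariance of $K$ to substitute $x\mapsto\rho_X(\gamma)x$, recognize the result as the $\gamma q$-coordinate of $F(\rho_X(\gamma)x)=\rho_Y(\gamma)F(x)$, and obtain the norm identity by exchanging $\max_q$ with $\sup_{x\in K}$; your extra remarks on the inverse assembly map, continuity, and functoriality of $\mathfrak X,\mathfrak Y$ make explicit what the paper leaves implicit.
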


\begin{proof}
Naturality for the arrow $(\gamma,q):q\to\gamma q$ says
\[
\mathfrak Y(\gamma,q)\Phi_{\gamma q}(x)
=
\Phi_q\bigl(\mathfrak X(\gamma,q)x\bigr),
\qquad x\in K.
\]
By the definitions of $\mathfrak X$ and $\mathfrak Y$, this is
\begin{equation}
\label{eq:app-nat-as-equiv}
(T^W_{\gamma,q})^{-1}\Phi_{\gamma q}(x)
=
\Phi_q(\rho_X(\gamma^{-1})x).
\end{equation}
Since $K$ is $G$-invariant, replacing $x$ by $\rho_X(\gamma)x$ gives the equivalent equation
\begin{equation}
\label{eq:app-coordinate-equivariance}
\Phi_{\gamma q}(\rho_X(\gamma)x)
=
T^W_{\gamma,q}\Phi_q(x).
\end{equation}
If $F(x)_q:=\Phi_q(x)$, then \eqref{eq:app-coordinate-equivariance} is precisely the $\gamma q$-coordinate of
\[
F(\rho_X(\gamma)x)=\rho_Y(\gamma)F(x).
\]
Thus natural families are exactly continuous order-equivariant maps.  Finally,
\[
\begin{aligned}
&\max_{q\in P}\sup_{x\in K}\|\Phi_q(x)-\Psi_q(x)\|\\
&\quad=\sup_{x\in K}
\|F_\Phi(x)-F_\Psi(x)\|_{Y_{\mathrm{tot}},\max},
\end{aligned}
\]
which proves the topological assertion.
\end{proof}

\begin{proposition}[OENN primitives as finite CENN primitives]
\label{prop:app-oenn-layers-cenn}
With the primitivewise typing convention above, each completed OENN primitive, viewed through its site-output family after assembly, is represented by finite point-base CENN primitives over action groupoids of the finite $G$-sets indexing that primitive and its auxiliary branch states.  Thus the primitive-level comparison uses action groupoids $G\ltimes S$ for auxiliary finite $G$-sets $S$ that occur in the OENN stack, rather than forcing every hidden primitive to live literally over the single object set $P$.
\begin{enumerate}[leftmargin=1.5em]
\item An OENN orbital affine layer is an identity-supported category-convolution from the relevant total-source functor to the site-output target functor.
\item A pointwise Reynolds layer over any finite $G$-set $S$ is a finite CENN stack over $G\ltimes S$, together with the finite branch auxiliary action groupoids used to realize its Reynolds block.
\item Pair-state OENN layers are the same construction over the auxiliary action groupoid $G\ltimes(P\times P)$, with additional $G$-stable support restrictions encoding locality.
\end{enumerate}
\end{proposition}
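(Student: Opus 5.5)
The plan is to treat the three items separately, always passing through Proposition~\ref{prop:app-oenn-nat}. That proposition reduces each claim to one check: exhibiting, for the primitive in question, a finite point-base CENN convolution or stack whose site-output family equals the assembled OENN primitive.

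\textbf{Item 1 (orbital affine layers).} Let $A(x)=Lx+b$ be an orbital affine layer between bundles over finite $G$-sets $S$ and $T$, with kernel blocks $K(t,s)$. Work over the target action groupoid $G\ltimes T$, with the total-source functor $\underline H$ and the site-output functor $\mathfrak H'$.
\begin{itemize}[leftmargin=1.5em]
\item Define the category kernel to be supported on identity arrows only: $\mathsf K(\id_t):=\sum_{s}K(t,s)\,\pi_s$, where $\pi_s$ is the projection of $H_{\mathrm{tot}}$ onto $H_s$, and $\mathsf K(u)=0$ for non-identity $u$.
\item Then the finite convolution \eqref{eq:app-cenn-finite-conv} at object $t$ returns $b_t+\sum_s K(t,s)x_s=A(x)_t$.
\item Naturality \eqref{eq:app-cenn-conv-natural} for an arrow $(\gamma,t)$ unwinds to the transporter law (Proposition~\ref{prop:transporter-law}) applied to the kernels, together with the $G$-fixedness $b_{\gamma t}=T_{\gamma,t}b_t$ of the bias.
\item The bias condition $Z'(u)\beta_a=\beta_b$ is exactly $(T_{\gamma,t})^{-1}b_{\gamma t}=b_t$.
\end{itemize}
This item is a direct verification.

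\textbf{Item 2 (pointwise Reynolds layers).} Work over $G\ltimes S$. By Lemma~\ref{lem:reynolds-realization}, a Reynolds block $\psi_{s_0}$ is realized by three steps:
\begin{enumerate}[leftmargin=1.5em]
\item an equivariant affine lift into the branch space $\bigoplus_{h\in G_{s_0}}$, with slot $h$ receiving $T_h u$;
\item coordinatewise activations inside each branch;
\item an equivariant averaging map back to $F_{s_0}$.
\end{enumerate}
Transport this to all $s=\gamma s_0$ as in \eqref{eq:pointwise-transport}. The branch index set then forms the finite $G$-set of pairs $(s,h)$, which is isomorphic to $G\times_{G_{s_0}}G_{s_0}$ restricted over each orbit, or more simply $\{(\gamma,s_0)\}$ modulo the identification. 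Its action groupoid is the auxiliary groupoid named in the statement.

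Within that auxiliary groupoid the primitives are handled as follows:
\begin{itemize}[leftmargin=1.5em]
\item The lift and the average are identity-supported finite convolutions, handled exactly as in Item 1.
\item The coordinatewise activations act on a permutation-type bundle over the branch groupoid. There, on each branch object, the stabilizer is trivial, so every coordinate projection is a natural scalar channel.
\item The constant-channel gate trick described after \eqref{eq:app-cenn-gate} therefore realizes $\alpha$ coordinatewise as a scalar-gated nonlinearity.
\end{itemize}

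I expect the main obstacle to be this branch step. It requires a careful choice of the auxiliary $G$-set so that hidden branch coordinates carry \emph{trivial} stabilizers, which is what makes their projections natural. I would try the free $G$-set $G\times(\text{orbit representatives})$, with branch fibers $\R^{m}$ on which the action is free. If a direct identification fails, I would fall back on the arrow-bundle lift and compilation generators in Definition~\ref{def:app-cenn}: the lift samples along arrows, and compilation is finite averaging over stabilizers.

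\textbf{Item 3 (pair-state layers).} Replace $T$ by $P\times P$ with the diagonal action and repeat Item 1. The pair-state $R$-local support condition $K((q,p),(r,s))=0$ unless $s=p$ and $(q,r)\in R$ is a $G$-stable restriction on the identity-supported kernel, so naturality is unaffected. Carrier-diagonal read and write maps are also identity-supported convolutions between the total-source functor over $P$ (or over $P\times P$) and the site-output functor. Finally, assembling the site-output families into total maps via Proposition~\ref{prop:app-oenn-nat} yields the stated representation.
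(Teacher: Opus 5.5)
Your proposal is correct and follows essentially the same route as the paper. For orbital affine layers you use identity-supported kernels $\mathsf K(\id_t)=L_t$, whose naturality over $G\ltimes T$ unwinds to the transporter law and the fixed-bias law. For Reynolds layers you use auxiliary branch-label action groupoids carrying trivial scalar coordinates, so that activations are applied by scalar gates. For pair-state layers you use $G\ltimes(P\times P)$ with $G$-stable support masks. The branch set the paper uses (labels $h\in G_{s_0}$ permuted by the right-regular action and transported along the orbit) is exactly the free $G$-set you propose, so your fallback to the arrow-bundle generators is not needed.
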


\begin{proof}
Let $S$ and $T$ be finite $G$-sets, and let
\[
H_{\mathrm{tot}}=\bigoplus_{s\in S}H_s,
\qquad
H'_{\mathrm{tot}}=\bigoplus_{t\in T}H'_t
\]
be equivariant bundles.  An orbital affine layer between them has the form
\[
A(h)_t=b_t+\sum_{s\in S}K(t,s)h_s.
\]
Write
\[
L_t:H_{\mathrm{tot}}\to H'_t,
\qquad
L_t(h):=\sum_{s\in S}K(t,s)h_s.
\]
The naturality equation for the corresponding completed affine family
$\underline H\Rightarrow\mathfrak H'$ over $G\ltimes T$ is
\[
(T^{H'}_{\gamma,t})^{-1}L_{\gamma t}
=
L_t\rho_H(\gamma^{-1}),
\qquad
(T^{H'}_{\gamma,t})^{-1}b_{\gamma t}=b_t.
\]
Equivalently,
\[
L_{\gamma t}\rho_H(\gamma)=T^{H'}_{\gamma,t}L_t,
\qquad
b_{\gamma t}=T^{H'}_{\gamma,t}b_t.
\]
Expanding the first identity in the $H_s$-source blocks gives exactly the transporter law
\[
K(\gamma t,\gamma s)T^H_{\gamma,s}
=
T^{H'}_{\gamma,t}K(t,s),
\]
and the second identity is the OENN fixed-bias law.  With counting measure and point bases, define a category kernel supported only at the identity arrow of each target object by
\[
\mathsf K(\id_t):=L_t,
\qquad
\mathsf K(u):=0\quad(u\neq\id_t\text{ in }I(t)).
\]
Then the CENN convolution formula gives $(\widetilde L_{\mathsf K}h)_t=b_t+L_th$, and its naturality condition is exactly the equation above.  Hence orbital affine layers, including those between auxiliary finite $G$-sets, are identity-supported CENN category-convolutions.

For a pointwise Reynolds layer over a finite $G$-set $S$, the stabilizer average
\[
\psi_{s_0}(u)=\frac1{|G_{s_0}|}\sum_{h\in G_{s_0}}
(T^F_{h,s_0})^{-1}\Psi_{s_0}(T^E_{h,s_0}u)
\]
is a finite groupoid average.  Its branch realization is typed with finite auxiliary branch action groupoids whose objects are the branch labels $h\in G_{s_0}$ appearing in Lemma~\ref{lem:reynolds-realization}, transported along the orbit of $s_0$; the stabilizer permutes these labels, while each branch object carries ordinary finite-dimensional trivial scalar coordinates.  The affine maps in the branch realization are affine natural maps between these branch functors, and the scalar activation is applied by scalar gates only to the trivial scalar coordinates of each branch object, as in \eqref{eq:app-cenn-gate}.  If the same branch labels are folded into a single fiber $(\R^n)^{G_{s_0}}$, the coordinatewise activation remains equivariant for the permutation action, but it should be regarded as the corresponding natural permutation-feature nonlinearity, or as part of the Reynolds pointwise primitive, not as a scalar gate into the trivial scalar functor $S$.  Transporting the representative block along the orbit by
\[
\psi_{\gamma s_0}(e)=T^F_{\gamma,s_0}\psi_{s_0}\bigl((T^E_{\gamma,s_0})^{-1}e\bigr)
\]
is precisely the naturality condition in $G\ltimes S$.  Thus pointwise Reynolds layers are finite CENN stacks.

Finally, pair-state layers replace $S$ by $P\times P$ with diagonal action $\gamma(q,p)=(\gamma q,\gamma p)$.  A locality mask such as
\[
K((q,p),(r,s))=0
\quad\text{unless }s=p\text{ and }(q,r)\in R
\]
is not a new equivariance law; it is a $G$-stable support restriction on an otherwise ordinary finite action-groupoid CENN kernel.
\end{proof}

\begin{lemma}[Finite CENN UAT layers are OENN-realizable]
\label{lem:app-finite-cenn-oenn-realization}
Let $\mathcal A=G\ltimes P$ be the finite action groupoid above, with point bases and finite-dimensional Borel feature spaces.  The finite CENN approximants needed in the CENN UAT proof over $\mathcal A$ can be chosen to be realizable by full OENN primitives after allowing the auxiliary finite $G$-sets already permitted in Definition~\ref{def:oenn}.  Consequently, if $\Psi$ is one of these chosen finite action-groupoid CENN approximants from Theorem~\ref{thm:app-cenn-uat}, then the assembled map
\[
F_\Psi:X_{\mathrm{tot}}\to Y_{\mathrm{tot}},
\qquad
F_\Psi(x)_q:=\Psi_q(x),
\]
belongs to $\mathrm{OENN}^{\mathrm{full}}_\alpha(X_{\mathrm{tot}},Y_{\mathrm{tot}})$.
\end{lemma}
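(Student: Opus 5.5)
We prove the lemma by choosing the finite CENN approximants in a canonical normal form and realizing each generator type by an OENN primitive already available in Definition~\ref{def:oenn}. In the finite point-base action-groupoid specialization, the CENN UAT proof of Theorem~\ref{thm:app-cenn-uat} uses three ingredients (approximate identities, arrow probes, equivariant compilation). Here each reduces to a finite object: Dirac kernels $\delta_u$ on arrows $u\in I(q)$, transported samples $\mathfrak X(u)x=\rho_X(\gamma^{-1})x$, and finite averages over arrows or stabilizers. So we fix the approximant to be a composite of three stages. First, an arrow-bundle lift collects transported samples. Second, an MLP acts on the lifted features, built from finite convolutions and scalar gates on trivial scalar coordinates. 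Third, a compilation step averages back into $W_q$. The proof then checks that each stage, after assembly $F_\Psi(x)_q=\Psi_q(x)$, is an OENN primitive or a finite composite of them.

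\emph{Lift.} The arrow-bundle features over target $q$ are indexed by incoming arrows $(\gamma,\gamma^{-1}q)\to q$. They therefore form a bundle over the auxiliary finite $G$-set $G\times P$ (or, after quotienting redundant samples, over $P$ itself, carrying the broadcast fiber $\widetilde X_q=X$). Each sample is a linear function of $x$, and the whole family satisfies the transporter law, since naturality over $G\ltimes P$ unwinds exactly as in the proof of Proposition~\ref{prop:app-oenn-nat}. By Proposition~\ref{prop:app-oenn-layers-cenn}(1), read in the converse direction, this family is an orbital affine layer between auxiliary bundles.

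\emph{Middle layers and compilation.} For the middle layers, finite CENN convolutions over $G\ltimes S$ are identity-supported plus arrow-supported finite sums. The naturality equation \eqref{eq:app-cenn-conv-natural} for such a sum expands blockwise into the transporter law, so each convolution is again an orbital affine layer with a $G$-fixed bias. A scalar gate acts only on natural scalar channels. On the branch objects of an auxiliary action groupoid whose fibers are trivial scalar coordinates, the gate $r\mapsto\alpha(r_j)$ obtained through the constant-channel trick is a coordinatewise activation on a permutation-only bundle. By the remark after Definition~\ref{def:oenn}, that is a pointwise Reynolds layer with trivial stabilizer averaging. The compilation step is a finite stabilizer average, which is exactly the Reynolds averaging of Lemma~\ref{lem:reynolds-realization}. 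A compilation that instead averages over arrows into $q$ is a linear equivariant map from the arrow bundle to $W$, and hence an orbital affine layer.

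\emph{Main obstacle.} The delicate point is the gates. CENN scalar gates multiply, $\alpha(s(z))z$, whereas OENN only applies $\sigma$ inside MLP branches. We avoid general gates by choosing the approximants, within the UAT proof, to use gates only through the constant-channel form, which produces $\alpha(r_j)$ directly on trivial scalar coordinates. This choice is legitimate because the finite-case proof needs only an ordinary MLP on lifted features followed by averaging, as in Lemma~\ref{lem:finite-group-density}. If a genuinely multiplicative gate were required, we would instead approximate it: it is continuous and equivariant on a compact set, so by Lemma~\ref{lem:finite-group-density} it can be approximated by a Reynolds block. That would give approximate rather than exact realization, which still suffices for the density consequence.

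\emph{Conclusion.} Composing these realizations and using closure of $\mathrm{OENN}^{\mathrm{full}}_\alpha$ under composition and parallel concatenation, we get $F_\Psi\in\mathrm{OENN}^{\mathrm{full}}_\alpha(X_{\mathrm{tot}},Y_{\mathrm{tot}})$.
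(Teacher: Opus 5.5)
Your proposal is correct and follows essentially the paper's route. You unwind naturality into the transporter and fixed-bias laws, so the lift, the finite convolutions and the compilation averages all become orbital affine layers; you confine nonlinearities to trivial scalar branch coordinates inside Reynolds branch MLPs, which is the same branch-MLP choice the paper makes; and closure under composition and concatenation finishes the argument. The one caveat is your fallback of approximating a genuinely multiplicative gate by a Reynolds block: that would give only the density statement of Corollary~\ref{cor:app-cenn-oenn-uat}, not the exact membership $F_\Psi\in\mathrm{OENN}^{\mathrm{full}}_\alpha$ asserted here, but your main argument never uses it.
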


\begin{proof}
A point-base CENN hidden functor $Z:\mathcal A^{\mathrm{op}}\to\mathbf{Meas}$ in the finite construction has finite-dimensional vector spaces $Z(q)$ with invertible linear transports.  For an arrow $(\gamma,q):q\to\gamma q$, write
\[
T^Z_{\gamma,q}:=Z(\gamma,q)^{-1}:Z(q)\to Z(\gamma q).
\]
Functoriality of $Z$ is exactly the cocycle identity for the transports $T^Z$, so the family $\{Z(q),T^Z_{\gamma,q}\}$ is an equivariant bundle over the finite $G$-set $P$.  The same observation applies to any finite direct sum or arrow-bundle hidden functor: its slots are indexed by a finite $G$-set built functorially from arrows of $\mathcal A$ or finite tuples of such arrows, and the structural maps are the corresponding bundle transports.

Consider first an affine CENN layer in the layerwise total-source form
$\Lambda:\underline H\Rightarrow\mathfrak H'$ over $G\ltimes T$, where
$H_{\mathrm{tot}}=\bigoplus_{s\in S}H_s$.  Write
\[
\Lambda_t(h)=c_t+\sum_{s\in S}\Lambda_{t,s}h_s,
\qquad
\Lambda_{t,s}:H_s\to H'_t .
\]
Naturality for the arrow $(\gamma,t):t\to\gamma t$ says
\[
(T^{H'}_{\gamma,t})^{-1}\Lambda_{\gamma t}(h)
=
\Lambda_t(\rho_H(\gamma^{-1})h).
\]
Equivalently, after replacing $h$ by $\rho_H(\gamma)h$ and comparing the $H_s$-blocks,
\[
\Lambda_{\gamma t,\gamma s}T^H_{\gamma,s}
=
T^{H'}_{\gamma,t}\Lambda_{t,s},
\qquad
c_{\gamma t}=T^{H'}_{\gamma,t}c_t .
\]
Thus a global-input affine CENN layer is exactly an orbital affine OENN layer between the associated equivariant bundles over the finite $G$-sets $S$ and $T$.

If an affine hidden layer is presented objectwise as $Z\Rightarrow Z'$ over a single action groupoid $G\ltimes S$, write $\Lambda_s(z)=c_s+\lambda_sz$.  Naturality gives
\[
\lambda_{\gamma s}T^Z_{\gamma,s}=T^{Z'}_{\gamma,s}\lambda_s,
\qquad
c_{\gamma s}=T^{Z'}_{\gamma,s}c_s,
\]
which is the same transporter law with diagonal source support.  In a multilayer stack, each assembled codomain total space is then retyped as the next total-source functor, as described above.  Finite category-convolutions are affine natural layers of this form, since the integral in \eqref{eq:app-cenn-convolution} is a finite sum in the relevant finite action groupoid.  The arrow-bundle lift, probe, and compilation maps used in the CENN UAT proof~\cite{Maruyama25CENN} are built from finite direct sums of transports, coordinate inclusions/projections, and finite groupoid averages; hence they are affine natural maps and therefore orbital affine OENN layers by the preceding argument.

It remains only to check nonlinearities.  We use the finite UAT construction in branch-MLP form: scalar gates are used only for coordinates that are trivial natural scalar channels, as described after \eqref{eq:app-cenn-gate}.  Branch spaces carrying a nontrivial permutation action are not treated as if their individual coordinate projections were natural for the trivial scalar functor.  When the same scalar activation is applied coordinatewise to such an equivariant array, it is used as a vector-valued equivariant branch operation, not as a scalar gate into $S$; in the OENN realization used here, it is placed inside the ordinary branch MLP and then equivariantized by the Reynolds construction of Lemma~\ref{lem:reynolds-realization}.  For nontrivial stabilizer actions, the resulting equivariant pointwise operation is therefore represented by a pointwise Reynolds layer.  We do not assert that an arbitrary scalar gate in an arbitrary CENN architecture is exactly an OENN primitive; the claim is only that the finite approximants required for the action-groupoid UAT can be chosen inside the full OENN primitive class.  Since full OENNs are closed under finite composition and finite parallel concatenation, each chosen finite CENN approximant in the finite action-groupoid case assembles to a full OENN.
\end{proof}

\begin{corollary}[CENN UAT and OENN UAT]
\label{cor:app-cenn-oenn-uat}
For the action groupoid $\mathcal A=G\ltimes P$ and the global-input/site-output functors $\mathfrak X,\mathfrak Y$ above, write
\[
\mathsf{CENN}_\alpha(\mathfrak X,\mathfrak Y)|_K
\]
for the restrictions to $K$ of CENN natural transformations.  Then the compact-domain form of the CENN UAT gives
\[
\overline{\mathsf{CENN}_\alpha(\mathfrak X,\mathfrak Y)|_K}
=
\operatorname{Nat}_K(\mathfrak X,\mathfrak Y)
\cong
C_G(K,Y_{\mathrm{tot}})
\]
for every compact $G$-invariant $K\subset X_{\mathrm{tot}}$, whenever the activation satisfies the general CENN UAT hypotheses.  By Lemma~\ref{lem:app-finite-cenn-oenn-realization}, the finite action-groupoid CENN approximants may be chosen to assemble to full OENNs, so the same density statement gives the categorical form of the full OENN UAT
\[
\overline{\mathrm{OENN}^{\mathrm{full}}_\alpha(X_{\mathrm{tot}},Y_{\mathrm{tot}})|_K}
=
C_G(K,Y_{\mathrm{tot}}).
\]
For globally Lipschitz continuous non-polynomial $\alpha$, this follows from the general CENN theorem and the finite-realization lemma above.  The OENN theorem in Section~\ref{sec:uat} is slightly sharper in activation assumptions: because the finite action-groupoid case reduces to finite-dimensional MLP approximation plus finite Reynolds averaging, it only assumes that $\alpha$ is continuous and non-polynomial.
\end{corollary}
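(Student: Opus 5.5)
The statement to prove is Corollary~\ref{cor:app-cenn-oenn-uat}. The plan is to chain three results already established in this appendix: Theorem~\ref{thm:app-cenn-uat}, Proposition~\ref{prop:app-oenn-nat} and Lemma~\ref{lem:app-finite-cenn-oenn-realization}.

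\textbf{Step 1: the CENN hypotheses hold for $\mathcal A$.} We verify them for $\mathcal A=G\ltimes P$ with the point-base functors $\mathfrak X,\mathfrak Y$.
\begin{itemize}
\item The category is finite and discrete with counting measure, so each incoming-arrow space $I(q)$ is a finite discrete space.
\item Approximate identities are the Dirac masses $\delta_u$.
\item Arrow-probe separation is automatic, because the identity arrow $\id_q$ returns the whole input $x\in X_{\mathrm{tot}}=\mathfrak X(q)$.
\item Equivariant compilation is finite averaging over the stabilizers $G_q$.
\end{itemize}
Theorem~\ref{thm:app-cenn-uat} requires approximation on compact sets $K_a$. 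We take the finite object set $F=P$ (all objects) and $K_q=K$ for every $q$. The theorem then gives: for every natural family $\Phi$ and every $\varepsilon>0$ there is $\Psi\in\mathsf{CENN}_\alpha(\mathfrak X,\mathfrak Y)$ with $\max_q\sup_{x\in K}\|\Phi_q(x)-\Psi_q(x)\|<\varepsilon$.

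\textbf{Step 2: the compact-domain issue.} This is the one subtle point. Theorem~\ref{thm:app-cenn-uat} is stated for $\Phi\in\operatorname{EqvCont}(\mathfrak X,\mathfrak Y)$, i.e.\ families defined on all of $X_{\mathrm{tot}}$, whereas $\operatorname{Nat}_K$ only asks for naturality on $K$.
\begin{itemize}
\item Given $\Phi\in\operatorname{Nat}_K$, first assemble it into $f\in C_G(K,Y_{\mathrm{tot}})$ using Proposition~\ref{prop:app-oenn-nat}.
\item Extend $f$ to a continuous map $\tilde f$ on $X_{\mathrm{tot}}$ by Tietze's theorem, applied componentwise in finite dimensions.
\item Equivariantize by $\bar f(x)=\frac1{|G|}\sum_{\gamma}\rho_Y(\gamma)^{-1}\tilde f(\rho_X(\gamma)x)$. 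This map is continuous and $G$-equivariant on all of $X_{\mathrm{tot}}$.
\item Because $K$ is $G$-invariant and $f$ is already equivariant there, each summand equals $f$ on $K$, so $\bar f|_K=f$.
\end{itemize}
Hence $\bar f$ corresponds to an element of $\operatorname{EqvCont}(\mathfrak X,\mathfrak Y)$ restricting to $\Phi$, and Step~1 applies to it.

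\textbf{Step 3: the two closure identities.}
\begin{itemize}
\item \emph{The inclusion $\subseteq$.} $\mathsf{CENN}_\alpha\subseteq\operatorname{EqvCont}$ was noted above. Naturality on $K$ is preserved under uniform limits, since the naturality equations are closed conditions. So the closure of the restricted class lies in $\operatorname{Nat}_K$.
\item \emph{The first identity.} Step~2 gives the reverse inclusion, hence $\overline{\mathsf{CENN}_\alpha(\mathfrak X,\mathfrak Y)|_K}=\operatorname{Nat}_K(\mathfrak X,\mathfrak Y)$. The further identification with $C_G(K,Y_{\mathrm{tot}})$ is Proposition~\ref{prop:app-oenn-nat}.
\item \emph{The OENN identity.} Choose the approximants $\Psi$ as in Lemma~\ref{lem:app-finite-cenn-oenn-realization}. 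Their assemblies $F_\Psi$ lie in $\mathrm{OENN}^{\mathrm{full}}_\alpha$. By the norm identity in Proposition~\ref{prop:app-oenn-nat}, $\sup_K\|F_\Psi-f\|_{\max}<\varepsilon$, which gives density.
\item \emph{The reverse OENN inclusion.} Every OENN is continuous and equivariant, as shown in Section~\ref{sec:oenn}. So its restriction to $K$ lies in $C_G(K,Y_{\mathrm{tot}})$, which is uniformly closed.
\end{itemize}

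\textbf{Final remark.} The sharper activation assumption (continuous and non-polynomial only, without the Lipschitz condition) is not proved here. It is simply Theorem~\ref{thm:OENN-UAT}, which rests on Lemma~\ref{lem:finite-group-density}. Only the globally Lipschitz case is derived through the CENN route.
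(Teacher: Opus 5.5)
Your proposal is correct and follows the paper's proof: you check the finite-groupoid CENN hypotheses, fix the compact-domain mismatch by Tietze extension followed by Reynolds symmetrization, and apply the CENN UAT with $F=P$ and $K_q=K$, transferring the result through Proposition~\ref{prop:app-oenn-nat} and Lemma~\ref{lem:app-finite-cenn-oenn-realization}. You also spell out the easy reverse inclusions: the closures of the restricted classes lie in $\operatorname{Nat}_K(\mathfrak X,\mathfrak Y)$ and in $C_G(K,Y_{\mathrm{tot}})$, since naturality and equivariance are closed conditions. The paper leaves these inclusions implicit.
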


\begin{proof}
The groupoid $G\ltimes P$ is finite and discrete.  Hence the CENN approximate identities are point masses, all convolution integrals are finite sums, arrow probes separate because identity arrows are present, and equivariant compilation is finite groupoid averaging.  Therefore Theorem~\ref{thm:app-cenn-uat} applies to $\mathfrak X$ and $\mathfrak Y$.

The theorem is stated for global natural transformations, while OENN is stated on a compact invariant domain $K$.  This causes no loss in the present finite-dimensional groupoid case.  Given $F\in C_G(K,Y_{\mathrm{tot}})$, extend it coordinatewise by the Tietze extension theorem to a continuous map $H:X_{\mathrm{tot}}\to Y_{\mathrm{tot}}$, and then Reynolds-symmetrize
\[
\widetilde F(x):=\frac1{|G|}\sum_{\gamma\in G}\rho_Y(\gamma^{-1})H(\rho_X(\gamma)x).
\]
Then $\widetilde F$ is continuous and $G$-equivariant, and $\widetilde F|_K=F$ because $K$ is $G$-invariant and $F$ is equivariant.  Thus every compact-domain natural family extends to a global natural transformation of $\mathfrak X$ into $\mathfrak Y$.

Applying the CENN UAT to this extension with $F=P$ and $K_q=K$ for all $q$ gives density on $K$ by finite action-groupoid CENN approximants.  Proposition~\ref{prop:app-oenn-nat} identifies the target space of natural families with $C_G(K,Y_{\mathrm{tot}})$ and identifies the CENN compact-open finite-object seminorm with the OENN uniform norm on $K$.  Lemma~\ref{lem:app-finite-cenn-oenn-realization} shows that the required finite approximants can be chosen to assemble to full OENN networks, while Proposition~\ref{prop:app-oenn-layers-cenn} gives the primitive-level embedding of OENN layers into the finite CENN formalism; together they complete the comparison without requiring arbitrary CENN layers to be OENN primitives.
\end{proof}

\begin{remark}[The poset category does not give the OENN embedding]
\label{rem:app-thin-poset-not-oenn}
The thin category associated with $P$ has a unique arrow $p\to q$ when $p\le q$.  Naturality over this category imposes commutation with order arrows, such as restriction or incidence maps.  OENN equivariance instead imposes
\[
F(\rho_X(\gamma)x)=\rho_Y(\gamma)F(x)
\qquad(\gamma\in G),
\]
where $G$ acts by order automorphisms.  Thus the order relation determines useful $G$-stable masks, cover relations, pair-state communication graphs, and sheaf restriction structure, but the categorical symmetry behind a completed OENN input-output map is the action groupoid $G\ltimes P$.  Hidden OENN primitives may additionally use auxiliary finite $G$-sets, such as $P\times P$, and those primitives are typed over their own action groupoids.  In short, OENN embeds into finite point-base CENN over the action groupoids $G\ltimes S$ for the finite $G$-sets $S$ appearing in the construction.  Completed input-output maps are represented over $G\ltimes P$; pair-state layers are represented over $G\ltimes(P\times P)$; Reynolds branch MLPs use finite branch-copy action groupoids; and locality is imposed by optional $G$-stable support constraints inherited from the poset.
\end{remark}

\subsection{Experimental Applications to Activity Recognition with Grothendieck Equivariant Networks}

In CENN, a category serves as a generalized symmetry structure whose arrows may encode both invertible changes of viewpoint and non-invertible transformations between data configurations. We evaluate this principle on CatHAR, a controlled multi-sensor human-activity-recognition benchmark in which subject-dependent sensor frames give local $\mathrm{SO}(3)$ gauge symmetries and sensor merging gives non-invertible configuration morphisms~\cite{MaruyamaYasuda26Grothendieck}.

{\bf Grothendieck CENN: integrating symmetries.}
The \emph{Grothendieck construction} is a fundamental composition method in the theory of fibrations: it packages a family of categories that varies over a base context category into a single total category. Formally, given a functor valued in the category of small categories
\begin{equation}
F:\mathcal{B}^{\mathrm{op}}\to \mathbf{Cat},
\end{equation}
the Grothendieck construction produces the integrated category
\begin{equation}
\int_{b\in \mathcal{B}} F(b),
\end{equation}
which is obtained by gluing the local categorical structures $F(b)$ along the base context $\mathcal{B}$. A \emph{Grothendieck CENN} is a CENN that is equivariant with respect to this integrated category, enabling machine learning under a \emph{global composite symmetry} assembled from \emph{context-dependent local symmetries}.

{\bf CatHAR experiment and results.}
CatHAR contains $1080$ length-$96$ tri-axial sequences from $18$ subjects and six activity classes. Five IMUs are placed at the torso, left and right arms, and left and right legs; each subject has an independently rotated local frame at each sensor. The subject-wise split uses $12$ subjects for training, $3$ for validation, and $3$ for testing, and every model is trained on the fine five-sensor configuration. Table~\ref{tab:cathar-fine} reports mean fine-configuration test accuracy and macro-F1 over five independent runs~\cite{MaruyamaYasuda26Grothendieck}.

\begin{table}[!h]
\centering
\small
\setlength{\tabcolsep}{4pt}
\caption{Mean fine-configuration test accuracy and macro-F1 on CatHAR over five independent runs.~\cite{MaruyamaYasuda26Grothendieck}}
\label{tab:cathar-fine}
\begin{tabular}{lcc}
\hline
Model & Acc. & F1 \\
\hline
Convolutional LSTM & $0.493$ & $0.467$ \\
Patch Transformer & $0.633$ & $0.582$ \\
Grothendieck Equivariant Net & $\mathbf{0.978}$ & $\mathbf{0.978}$ \\
\hline
\end{tabular}
\end{table}

{\bf Formal definition of Grothendieck category.}
Let $\mathcal{B}$ be a small category, called the \emph{base context} category, and let
$$
F:\mathcal{B}^{\mathrm{op}}\to\mathbf{Cat}
$$
be a contravariant functor.
The \emph{Grothendieck construction} $\int_{b\in\mathcal{B}} F(b)$ is the category defined as follows:
\begin{itemize}
\item Objects are pairs $(b,x)$ with $b\in\mathrm{Ob}(\mathcal{B})$ and $x\in\mathrm{Ob}(F(b))$.
\item An arrow $(b,x)\to(b',y)$ is a pair $(f,\phi)$ where $f:b\to b'$ in $\mathcal{B}$ and $\phi:x\to F(f)(y)$ in $F(b)$.
\item Composition is given by
\[
(g,\psi)\circ(f,\phi)
=
(g\circ f,\;F(f)(\psi)\circ \phi).
\]
\end{itemize}
Thus $\int_{b\in \mathcal{B}} F(b)$ glues the fiber categories $F(b)$ along the context arrows of $\mathcal{B}$ into one total category.

{\bf Categorical structure of CatHAR.}
Let $P$ be the five physical IMUs. An object $b$ of the base category $\mathcal{B}$ is a sensor configuration $\pi_b:P\twoheadrightarrow V_b$, and a morphism $u:b\to b'$ satisfies $\pi_{b'}=u\circ\pi_b$; it therefore coarsens the configuration by merging logical sensor nodes and is generally non-invertible. The fine configuration has
$V_{b_f}=\{T,LA,RA,LL,RL\}$, while the coarse configuration has $V_{b_c}=\{T,A,L\}$. The morphism $q:b_f\to b_c$ merges $LA,RA$ into $A$ and $LL,RL$ into $L$.
Over each configuration $b$ lies the one-object gauge groupoid $\mathbf{B}G_b$, where
\[
G_b=\mathrm{SO}(3)^{V_b}
\]
represents independent changes of the local sensor frames. For a coarsening $u:b\to b'$, the pullback $(u^*h)_v=h_{u(v)}$ defines the contravariant groupoid-valued functor
$$
F:\mathcal{B}^{\mathrm{op}}\to\mathbf{Cat}.
$$
The total category
$$
\int_{b\in\mathcal{B}}F(b)
$$
consequently has morphisms that combine local gauge transformations with configuration coarsenings.

{\bf Architecture and naturality.}
Grothendieck Equivariant Net (GroEN) first transports all sensor vectors to a common torso reference frame using the available relative rotations. It then applies a multiplicity-aware pushforward to a canonical configuration, constructs rotation-invariant channels from norms, temporal-difference norms, dot products, and cosine similarities, and passes the resulting sequence to a patch-based Transformer encoder and an MLP classifier. Transport reduces the independent gauge changes to one common rotation, the scalar contractions remove that residual rotation, and functoriality of the pushforward ensures compatibility with composition of coarsenings.
If $\mathcal{X}$ is the data functor and $f_b$ is the predictor at configuration $b$, this compatibility is the naturality condition
\[
f_{b'}\bigl(\mathcal{X}(u,k)(\xi)\bigr)=f_b(\xi)
\]
for every morphism $(u,k):b\to b'$ in the Grothendieck symmetry category. This is an architectural guarantee of invariance to the modeled gauge changes and consistency under the modeled non-invertible coarsenings.

{\bf Interpretation of the fine-configuration results.}
CatHAR is constructed so that three activity classes have identical per-sensor magnitude and frequency statistics and differ only through the relative $\mathrm{SO}(3)$ geometry of the arm and leg signals. Held-out-subject evaluation therefore rewards gauge-consistent cross-sensor relational features rather than subject-specific coordinate-frame cues. Among the three displayed models, GroEN obtains the highest mean accuracy and macro-F1, both $0.978$. The margin over Transformer is $0.345$ in accuracy and $0.396$ in macro-F1, indicating a substantial advantage on this geometry-sensitive task. This performance is consistent with its direct construction of invariant relational features before temporal classification. Convolutional LSTM must infer cross-sensor alignment from raw channels, whereas Patch Transformer processes the same untransported measurements with patch-based attention; neither explicitly incorporates the available relative rotations. GroEN performs this alignment before temporal encoding, so the classifier receives relational features that remain stable across the subject-specific sensor frames used for held-out evaluation. The experimental protocol and the separate coarse-configuration consistency evaluation are given in~\cite{MaruyamaYasuda26Grothendieck}.

\section{Proofs}
\label{app:main-proofs}

This appendix provides the proofs of the results stated in Sections~\ref{sec:oenn}-\ref{sec:examples}.  The numbering below refers to the corresponding statements in the main text.

\subsection{Proofs for Section~\ref{sec:oenn}}

\begin{proof}[Proof of Proposition~\ref{prop:transporter-law}]
Using \eqref{eq:rhoX-def} and \eqref{eq:block-kernel},
\[
\begin{aligned}
(L\rho_X(\gamma)x)_q
&=\sum_{p\in P}K(q,p)T^V_{\gamma,\gamma^{-1}p}
  x_{\gamma^{-1}p}\\
&=\sum_{r\in P}K(q,\gamma r)T^V_{\gamma,r}x_r.
\end{aligned}
\]
On the other hand,
\[
\begin{aligned}
(\rho_Y(\gamma)Lx)_q
&=T^W_{\gamma,\gamma^{-1}q}(Lx)_{\gamma^{-1}q}\\
&=\sum_{r\in P}T^W_{\gamma,\gamma^{-1}q}
  K(\gamma^{-1}q,r)x_r.
\end{aligned}
\]
Equality for all $x$ is equivalent to
\[
K(q,\gamma r)T^V_{\gamma,r}=T^W_{\gamma,\gamma^{-1}q}K(\gamma^{-1}q,r)
\qquad(q,r\in P).
\]
Replacing $q$ by $\gamma q$ gives \eqref{eq:transporter-law}.  The converse is the same computation in reverse.
\end{proof}

\begin{proof}[Proof of Proposition~\ref{prop:orbit-param}]
If $\gamma_1(q_{\calO},p_{\calO})=\gamma_2(q_{\calO},p_{\calO})$, then $\eta:=\gamma_2^{-1}\gamma_1\in H_{\calO}$.  By functoriality and \eqref{eq:stabilizer-intertwiner},
\[
\begin{aligned}
&T^W_{\gamma_1,q_{\calO}}A_{\calO}(T^V_{\gamma_1,p_{\calO}})^{-1} \\
&\quad =T^W_{\gamma_2,q_{\calO}}T^W_{\eta,q_{\calO}}
  A_{\calO}(T^V_{\eta,p_{\calO}})^{-1}
  (T^V_{\gamma_2,p_{\calO}})^{-1} \\
&\quad =T^W_{\gamma_2,q_{\calO}}A_{\calO}
  (T^V_{\gamma_2,p_{\calO}})^{-1}.
\end{aligned}
\]
Thus \eqref{eq:orbit-transport} is well-defined.  Substitution into \eqref{eq:transporter-law} gives equivariance.  Conversely, if $L$ is equivariant, set $A_{\calO}:=K(q_{\calO},p_{\calO})$.  Taking $\gamma\in H_{\calO}$ in \eqref{eq:transporter-law} gives \eqref{eq:stabilizer-intertwiner}, and arbitrary $\gamma$ gives \eqref{eq:orbit-transport}.  The dimension formula follows by the direct sum over pair-orbits.
\end{proof}

\begin{proof}[Proof of Lemma~\ref{lem:reynolds-realization}]
Equivariance follows by the change of variables $t=hg$:
\[
\begin{aligned}
\mathcal R_H^{\mathrm{eq}}[\Psi](T^U_gu)
&=\frac1{|H|}\sum_{h\in H}(T^V_h)^{-1}\Psi(T^U_{hg}u)\\
&=T^V_g\mathcal R_H^{\mathrm{eq}}[\Psi](u).
\end{aligned}
\]
For the realization, write the MLP as
\[
\Psi(u)=A_L\sigma(A_{L-1}\sigma(\cdots\sigma(A_1u+b_1)\cdots)+b_{L-1})+b_L.
\]
For a hidden width $n_i$, use the branch space $E_i=(\R^{n_i})^H$ with right-regular action
\[
(\pi_i(g)z)_h=z_{hg}.
\]
The first affine map is
\[
(B_1u)_h=A_1T^U_hu+b_1,
\]
and the hidden affine maps are $(B_i z)_h=A_i z_h+b_i$.  These maps are $H$-equivariant, and coordinatewise $\sigma$ is equivariant because $H$ only permutes branches.  The output map
\[
Cz=\frac1{|H|}\sum_{h\in H}(T^V_h)^{-1}(A_Lz_h+b_L)
\]
is affine and $H$-equivariant.  The branch indexed by $h$ computes $\Psi(T^U_hu)$, so the resulting network computes \eqref{eq:eq-reynolds-block}.
\end{proof}

\begin{proof}[Proof of Lemma~\ref{lem:pair-summary-equiv}]
If $(q,p)=\gamma(q_{\calO},p_{\calO})$, then $(\tau q,\tau p)=\tau\gamma(q_{\calO},p_{\calO})$.  Hence
\[
\begin{aligned}
\phi_{\tau q,\tau p}\bigl(T^V_{\tau,p}v\bigr)
&=\phi_{\calO}\bigl((T^V_{\tau\gamma,p_{\calO}})^{-1}
  T^V_{\tau,p}v\bigr)\\
&=\phi_{\calO}\bigl((T^V_{\gamma,p_{\calO}})^{-1}v\bigr)\\
&=\phi_{q,p}(v).
\end{aligned}
\]
The map $p\mapsto\tau p$ bijects the terms in the sum defining $S_{\calO}(q;x)$ with those defining $S_{\calO}(\tau q;\rho_X(\tau)x)$.
\end{proof}

\begin{proof}[Proof of Proposition~\ref{prop:pair-aggregation-oenn}]
Well-definedness in the choice of $\gamma$ follows from $G_{q_0}$-equivariance of $\psi_{q_0}$ and Lemma~\ref{lem:pair-summary-equiv}.  Equivariance follows by replacing $q=\gamma q_0$ with $\tau q=(\tau\gamma)q_0$ and using the same lemma.

For the realization, let $S=P\times P$ with the diagonal $G$-action and define an auxiliary bundle $\widehat Z$ over $S$ by
\[
\widehat Z_{(q,p)}:=V_p,
\qquad
T^{\widehat Z}_{\tau,(q,p)}:=T^V_{\tau,p}:\widehat Z_{(q,p)}\to\widehat Z_{(\tau q,\tau p)}.
\]
Define the pair-lift
\[
B:X\to\bigoplus_{(q,p)\in P\times P}\widehat Z_{(q,p)},
\qquad
(Bx)_{(q,p)}:=x_p.
\]
It is an orbital linear map: for every $\tau\in G$,
\[
(B\rho_X(\tau)x)_{(q,p)}
=T^V_{\tau,\tau^{-1}p}x_{\tau^{-1}p}
=(\rho_{\widehat Z}(\tau)Bx)_{(q,p)}.
\]
For each pair-orbit $\calO$, restrict $\widehat Z$ to $\calO$ and apply the pointwise Reynolds layer whose representative block is the invariant block $\phi_{\calO}:V_{p_{\calO}}\to\R^{m_{\calO}}$, with trivial transports on the output fibers.  By \eqref{eq:pointwise-transport}, this produces pair features
\[
u^{\calO}_{(q,p)}=\phi_{q,p}(x_p)\qquad ((q,p)\in\calO).
\]
Next define the summary bundle $M^{\calO}$ over $P$ by $M^{\calO}_q=\R^{m_{\calO}}$ with trivial transports, and use the orbital linear summation map
\[
(A^{\calO}u)_q:=\sum_{\substack{p\in P\\(q,p)\in\calO}}u_{(q,p)}.
\]
Its support is the $G$-stable relation $\{(q,(q,p)):(q,p)\in\calO\}\subset P\times\calO$, and its nonzero kernels are identities, so it satisfies the auxiliary transporter law.  Hence $A^{\calO}u$ is exactly the summary $S_{\calO}(q;x)$.

Finally, concatenate the identity skip $x_q\in V_q$ with all summaries belonging to
\[
\mathcal A(q):=\{\calO:\exists p\in P\text{ with }(q,p)\in\calO\}.
\]
This gives an equivariant bundle $E$ over $P$ with
\[
E_q:=V_q\oplus\bigoplus_{\calO\in\mathcal A(q)}\R^{m_{\calO}},
\]
where transports act as $T^V$ on $V_q$ and trivially on the summary coordinates.  The final map with representative blocks $\psi_{q_0}$ is a pointwise Reynolds layer $E\to W$, and it computes \eqref{eq:pair-aggregation-layer}.  Therefore $F$ is a finite composition of OENN primitives.
\end{proof}

\begin{proof}[Proof of Proposition~\ref{prop:mp-compatible}]
The first claim is Proposition~\ref{prop:pair-aggregation-oenn}.  For the approximation claim, apply the finite-group density lemma, Lemma~\ref{lem:finite-group-density}, to each stabilizer-invariant encoder and each stabilizer-equivariant readout on the finitely many orbit representatives.  Since the number of orbits and sites is finite, the resulting errors compose continuously and can be chosen small enough to give uniform approximation on the prescribed compact set.
\end{proof}

\begin{proof}[Proof of Proposition~\ref{prop:oenn-architecture-hierarchy}]
The first inclusion is realized locally as follows.  Given a relation-message-passing layer of the form \eqref{eq:relation-mp-oenn}, first write the input into a pair-state diagonal seed
\[
z^0_{q,p}:=
\begin{cases}
h_q, & p=q,\\
0, & p\neq q,
\end{cases}
\]
which is a carrier-diagonal write map from the $P$-indexed state to the $P\times P$-indexed pair-state bundle.  Then apply one source-preserving pair-state $R$-local propagation
\[
z^1_{q,p}:=\sum_{r:(q,r)\in R}z^0_{r,p}.
\]
Since $z^0_{r,p}$ is nonzero only when $r=p$, this gives $z^1_{q,p}=h_p$ for $(q,p)\in R$ and $z^1_{q,p}=0$ otherwise.  Thus the apparent off-diagonal lift is implemented by an allowed diagonal seed followed by one pair-state local propagation.  Apply the transported invariant encoders slotwise on the relevant pair-orbits $\calO\subseteq R$, sum the resulting features over source slots at fixed carrier $q$ by a carrier-local affine map, and apply the carrier-local Reynolds readout $\theta_q$.  Thus every relation-message-passing layer belongs to $\mathrm{OENN}^{R\text{-pair}}_\sigma$.  The second inclusion follows because pair-state bundles are auxiliary finite $G$-set bundles allowed in the full OENN class, and $R$-local masks are special orbital masks.
\end{proof}

\subsection{Proofs for Section~\ref{sec:uat}}

\begin{proof}[Proof of Lemma~\ref{lem:sitewise-stabilizer}]
This is the $q$-coordinate of $F(\rho_X(\eta)x)=\rho_Y(\eta)F(x)$; since $\eta q=q$, the $q$-coordinate of $\rho_Y(\eta)y$ is $T^W_{\eta,q}y_q$.
\end{proof}

\begin{proof}[Proof of Lemma~\ref{lem:finite-group-density}]
Fix the norm $\|\cdot\|_*$ appearing in the statement.  Choose an $H$-invariant inner product on $V$ by averaging any inner product over $H$, and let $\|\cdot\|_0$ be its norm.  Then every $T^V_h$ is an isometry for $\|\cdot\|_0$.  Since $V$ is finite-dimensional, there is a constant $c>0$ such that
\[
\|v\|_*\leq c\|v\|_0\qquad(v\in V).
\]
The standard finite-dimensional MLP universal approximation theorem applies to scalar continuous functions on compact boxes.  For vector-valued $f$, apply the scalar theorem coordinatewise in a basis of $V$; for an arbitrary compact set $K\subset U$, first extend each coordinate continuously to a compact box containing $K$ by the Tietze extension theorem.  Thus, for the continuous non-polynomial activation $\sigma$~\cite{Leshno93}, choose $\Psi$ with
\[
\sup_{u\in K}\|\Psi(u)-f(u)\|_0<\varepsilon/c.
\]
The Reynolds average $\psi$ is $H$-equivariant by Lemma~\ref{lem:reynolds-realization}.  For $u\in K$,
\[
\psi(u)-f(u)=\frac1{|H|}\sum_{h\in H}(T^V_h)^{-1}\bigl(\Psi(T^U_hu)-f(T^U_hu)\bigr),
\]
because $f(T^U_hu)=T^V_hf(u)$.  Since $K$ is $H$-invariant and the $T^V_h$ are isometries for $\|\cdot\|_0$, the $\|\cdot\|_0$-norm of the right-hand side is at most $\varepsilon/c$.  Therefore
\[
\sup_{u\in K}\|\psi(u)-f(u)\|_*<\varepsilon.
\]
This proves the density statement for the original norm.  The branch realization is Lemma~\ref{lem:reynolds-realization}.
\end{proof}

\begin{proof}[Proof of Theorem~\ref{thm:OENN-UAT}]
Fix $f\in C_G(K,Y)$ and $\varepsilon>0$.  Choose norms on each $W_q$ and use the max norm $\|y\|_Y=\max_q\|y_q\|_{W_q}$.  Let $\calQ\subset P$ be a set of site-orbit representatives.  For $q_0\in\calQ$, define
\[
g_{q_0}:K\to W_{q_0},\qquad g_{q_0}(x)=f(x)_{q_0}.
\]
By Lemma~\ref{lem:sitewise-stabilizer}, $g_{q_0}$ is $G_{q_0}$-equivariant.

Let
\[
C:=\max\left(1,\max_{\gamma\in G,\ q_0\in\calQ}\|T^W_{\gamma,q_0}\|_{\mathrm{op}}\right),
\]
computed with the chosen norms.  Set $\delta=\varepsilon/C$.  Applying Lemma~\ref{lem:finite-group-density} with $H=G_{q_0}$, $U=X$, $V=W_{q_0}$, the norm $\|\cdot\|_{W_{q_0}}$, and $K$ gives a Reynolds block
\[
\psi_{q_0}:X\to W_{q_0}
\]
such that $\psi_{q_0}$ is $G_{q_0}$-equivariant and
\begin{equation}
\label{eq:representative-approx}
\sup_{x\in K}\|\psi_{q_0}(x)-g_{q_0}(x)\|_{W_{q_0}}<\delta.
\end{equation}
For $q=\gamma q_0$, define
\begin{equation}
\label{eq:uat-transported-psi}
\psi_q(x):=T^W_{\gamma,q_0}\psi_{q_0}(\rho_X(\gamma^{-1})x).
\end{equation}
This is independent of the chosen $\gamma$.  Indeed, if $\gamma q_0=\gamma' q_0$, then $\eta=\gamma'^{-1}\gamma\in G_{q_0}$ and $\gamma=\gamma'\eta$.  For $v=\rho_X(\gamma'^{-1})x$, $G_{q_0}$-equivariance gives
\[
T^W_{\eta,q_0}\psi_{q_0}(\rho_X(\eta^{-1})v)=\psi_{q_0}(v),
\]
which implies equality of the two definitions.

Define $F:X\to Y$ by $F(x)_q=\psi_q(x)$.  If $q=\gamma q_0$ and $\tau\in G$, then $\tau q=(\tau\gamma)q_0$, so
\[
F(\rho_X(\tau)x)_{\tau q}
=T^W_{\tau\gamma,q_0}\psi_{q_0}(\rho_X(\gamma^{-1})x)
=T^W_{\tau,q}F(x)_q.
\]
Thus $F$ is order-equivariant.

For $x\in K$ and $q=\gamma q_0$, equivariance of $f$ gives
\[
f(x)_q=T^W_{\gamma,q_0}g_{q_0}(\rho_X(\gamma^{-1})x).
\]
Since $K$ is $G$-invariant, $\rho_X(\gamma^{-1})x\in K$, and \eqref{eq:representative-approx} gives
\[
\|F(x)_q-f(x)_q\|_{W_q}
\leq \|T^W_{\gamma,q_0}\|_{\mathrm{op}}\delta
\leq C\delta=\varepsilon.
\]
Taking the maximum over $q$ gives $\|F(x)-f(x)\|_Y\leq\varepsilon$ on $K$.

It remains to verify that $F$ is a full OENN\@.  Let $\widetilde X$ be the bundle over $P$ with fiber $\widetilde X_q=X$ and transport $T^{\widetilde X}_{\gamma,q}=\rho_X(\gamma)$.  The broadcast map
\begin{equation}
\label{eq:global-broadcast}
B:X\to\bigoplus_{q\in P}\widetilde X_q,
\qquad (Bx)_q=x,
\end{equation}
is orbital linear; indeed $B\rho_X(\gamma)=\rho_{\widetilde X}(\gamma)B$ follows immediately from $T^{\widetilde X}_{\gamma,q}=\rho_X(\gamma)$.  The pointwise map $N:\widetilde X\to Y$ defined by $(Nz)_q=\psi_q(z_q)$ is a pointwise Reynolds layer by Definition~\ref{def:pointwise-reynolds-layer}.  Hence $F=N\circ B$ belongs to $\mathrm{OENN}^{\mathrm{full}}_\sigma(X,Y)$.  This proves density.
\end{proof}

\begin{proof}[Proof of Corollary~\ref{cor:perm-only-uat}]
This is the specialization of Theorem~\ref{thm:OENN-UAT} to trivial original fiber transports.  Corollary~\ref{cor:perm-only} identifies the linear layers with pair-orbit tying, and Lemma~\ref{lem:reynolds-realization} realizes the finite stabilizer averages needed for equivariant readouts.
\end{proof}

\begin{proof}[Proof of Theorem~\ref{thm:diameter-sharp-local-uat}]
The dependency claim in the lower bound is an induction on the number of inter-site layers.  It is true at depth zero because each site contains only its own input.  A pointwise map, a carrier-diagonal equivariant map, or a concatenation does not enlarge the dependency set.  An $R$-local inter-site layer can enlarge the dependency set of site $q$ only by one reverse step along an incoming edge $r\to q$ of $\Gamma_R$.  Hence after $L$ such layers, site $q$ can depend only on the radius-$L$ in-neighborhood of $q$.

If $d_R(p,q)>L$, the approximant's $q$-coordinate is independent of $x_p$.  Choose two points of $[0,1]^P$ that differ only in the $p$-coordinate, with values $0$ and $1$.  The approximant has the same $q$-output on both points, while the target outputs differ by $1$.  Therefore the uniform error is at least $1/2$ for one of the two points.

It remains to prove the diameter-depth compilation.  Let $Z$ be the equivariant bundle over the pair-state set $P\times P$ defined by
\[
Z_{(q,p)}:=V_p,
\qquad
T^Z_{\gamma,(q,p)}:=T^V_{\gamma,p}:Z_{(q,p)}\to Z_{(\gamma q,\gamma p)}.
\]
Equivalently, the carrier site $q$ stores the source-indexed memory $E_q=\bigoplus_{p\in P}V_p$, and $G$ sends the $p$-slot over $q$ to the $\gamma p$-slot over $\gamma q$.

First seed each source into its own labeled slot by the diagonal map $I:X\to Z$,
\[
(Ix)_{q,p}:=
\begin{cases}
 x_p, & q=p,\\
 0, & q\neq p.
\end{cases}
\]
This support is $G$-stable and carrier-local.  Next define the carrier-$R$-local propagation operator $M:Z\to Z$ by
\[
(Mz)_{q,p}:=\sum_{\substack{r\in P\\(q,r)\in R}}z_{r,p}.
\]
Its kernel blocks satisfy
\[
K((q,p),(r,s))=0
\qquad\text{unless } s=p \text{ and } (q,r)\in R,
\]
so $M$ is pair-state $R$-local in the sense of Definition~\ref{def:R-local-oenn}.  Since $R$ is $G$-invariant and the $p$-slot is transported to the $\gamma p$-slot by $T^V_{\gamma,p}$, the nonzero block kernels of $M$ also satisfy the auxiliary transporter law.  Hence $M$ is a pair-state $R$-local orbital linear layer.

For every $q,p\in P$,
\[
(M^D Ix)_{q,p}=a_{q,p}x_p,
\]
where $a_{q,p}$ is the number of directed length-$D$ walks from $p$ to $q$ in $\Gamma_R$.  Because $R$ contains the diagonal, walks can be padded by self-loops; since $D$ is the directed diameter and $\Gamma_R$ is strongly connected, $a_{q,p}>0$ for all $q,p$.  The $G$-invariance of $R$ gives
\[
a_{\gamma q,\gamma p}=a_{q,p}
\qquad(\gamma\in G).
\]
Therefore the pairwise diagonal rescaling $S:Z\to Z$ defined by
\[
(Sz)_{q,p}:=a_{q,p}^{-1}z_{q,p}
\]
satisfies the transporter law and is carrier-local.  Now $(SM^D Ix)_{q,p}=x_p$ for all $q,p$.

Finally collect the pair-state slots at each carrier site by the carrier-local equivariant map $C:Z\to\bigoplus_{q\in P}\widetilde X_q$,
\[
(Cz)_q=(z_{q,p})_{p\in P}\in\bigoplus_{p\in P}V_p=X.
\]
The transports on $Z$ and $\widetilde X$ make $C$ equivariant.  Hence $CSM^DI=B$, proving that the broadcast is exactly compiled by $D$ pair-state local inter-site propagation layers plus carrier-local seed, rescaling, and collection maps.

The universality assertion follows from the proof of Theorem~\ref{thm:OENN-UAT}, which constructs an approximant $F=N\circ B$, where $B$ is the broadcast layer and $N$ is pointwise.  The broadcast $B$ has just been exactly realized by pair-state $R$-local primitives, and $N$ is a pointwise Reynolds layer, hence uses no inter-site communication.  Therefore the same approximant belongs to $\mathrm{OENN}^{R\text{-pair}}_\sigma(X,Y)$.  If $L<D$, the definition of directed diameter gives sites $p,q$ with $d_R(p,q)>L$, so the lower-bound part gives the stated sharpness.
\end{proof}

\begin{proof}[Proof of Corollary~\ref{cor:cover-local-oenn-uat}]
The relation $R_{\mathrm{cov}}$ is $G$-invariant because $G$ acts by order automorphisms and therefore preserves cover relations.  Its communication graph is the self-looped directed version of the undirected Hasse graph with both orientations on every cover edge.  Connectedness of the undirected Hasse graph is therefore equivalent to strong connectedness of this communication graph.  Apply Theorem~\ref{thm:diameter-sharp-local-uat}.
\end{proof}

\subsection{Proofs for Section~\ref{sec:examples}}

\begin{proof}[Proof of Lemma~\ref{lem:sheaf-adjoint-equivariance}]
Choose one representative $p_0$ in each $G$-orbit of cells.  Average any inner product on $V_{p_0}$ over the stabilizer $G_{p_0}$, and transport this inner product to $V_{\gamma p_0}$ by declaring $T^V_{\gamma,p_0}$ to be an isometry.  The stabilizer invariance makes the definition independent of the chosen $\gamma$, and the cocycle identity for $T^V$ makes all transports isometries.

Taking Hilbert adjoints in \eqref{eq:sheaf-naturality} gives
\[
R^*_{p\to q}(T^V_{\gamma,q})^{-1}=(T^V_{\gamma,p})^{-1}R^*_{\gamma p\to\gamma q},
\]
because the transports are isometries.  Multiplying by $T^V_{\gamma,p}$ on the left and by $T^V_{\gamma,q}$ on the right gives \eqref{eq:sheaf-adjoint-naturality}.
\end{proof}

\begin{proof}[Proof of Lemma~\ref{lem:sheaf-orbit-transporter}]
The self terms are precisely the diagonal pair-orbit case of Proposition~\ref{prop:orbit-param}, so they satisfy \eqref{eq:transporter-law}.

For the upward terms, if $(q,p)=\alpha(q_{\calO},p_{\calO})$, then
\[
K^\uparrow(q,p)
=T^W_{\alpha,q_{\calO}}B^\uparrow_{\calO}R_{p_{\calO}\to q_{\calO}}
  (T^V_{\alpha,p_{\calO}})^{-1}.
\]
For any $\gamma\in G$, the kernel at $(\gamma q,\gamma p)$ is obtained by using $\gamma\alpha$ as transporter.  Hence, by functoriality,
\[
\begin{aligned}
K^\uparrow(\gamma q,\gamma p)T^V_{\gamma,p}
&=T^W_{\gamma\alpha,q_{\calO}}B^\uparrow_{\calO}R_{p_{\calO}\to q_{\calO}}
  (T^V_{\gamma\alpha,p_{\calO}})^{-1}T^V_{\gamma,p} \\
&=T^W_{\gamma,q}K^\uparrow(q,p).
\end{aligned}
\]
The downward calculation is the same, using the adjoint naturality \eqref{eq:sheaf-adjoint-naturality}.  In the globally shared notation below, this calculation reduces to
\[
B_\uparrow R_{\gamma p\to\gamma q}T^V_{\gamma,p}
=B_\uparrow T^V_{\gamma,q}R_{p\to q},
\]
which equals $T^W_{\gamma,q}B_\uparrow R_{p\to q}$ precisely when the shared block $B_\uparrow$ intertwines the relevant transports; this is automatic in the permutation-only case.  Therefore all kernels in \eqref{eq:sheaf-orbit-linear} satisfy the transporter law, and the layer is order-equivariant by Proposition~\ref{prop:transporter-law}.
\end{proof}

\end{document}